\newcommand{\revised}[1]{{{\textcolor{black}{#1}}}}
\newcommand\ipfmat{X}
\newcommand\ipfrow{p}
\newcommand\ipfcol{q}
\newcommand\ipfparamrow{d^0}
\newcommand\ipfparamcol{d^1}
\newcommand\ipfestmat{M^{\mathrm{IPF}}}
\newcommand\klmat{Y}
\newcommand\Xagg{\bar{X}}
\newcommand\Xtime{X^{(t)}}
\newcommand\estXtime{\hat{X}^{(t)}}
\newcommand\ptime{p^{(t)}}
\newcommand\qtime{q^{(t)}}
\newcommand\vtime{v^{(t)}}
\newcommand\paramrow{u}
\newcommand\paramcol{v}
\theoremstyle{plain}
\newtheorem{theorem}{Theorem}[section]
\newtheorem{lemma}[theorem]{Lemma}
\newtheorem{corollary}[theorem]{Corollary}
\theoremstyle{definition}
\newtheorem{definition}[theorem]{Definition}
\theoremstyle{remark}
\newtheorem{remark}[theorem]{Remark}
\icmltitlerunning{Inferring Dynamic Networks from Marginals with Iterative Proportional Fitting}
\begin{document}

\twocolumn[
\icmltitle{Inferring Dynamic Networks from Marginals with Iterative Proportional Fitting}



\icmlsetsymbol{equal}{*}

\begin{icmlauthorlist}
\icmlauthor{Serina Chang}{equal,cs}
\icmlauthor{Frederic Koehler}{equal,stat}
\icmlauthor{Zhaonan Qu}{equal,econ,mse}
\icmlauthor{Jure Leskovec}{cs}
\icmlauthor{Johan Ugander}{mse}
\end{icmlauthorlist}

\icmlaffiliation{cs}{Department of Computer Science, Stanford University}
\icmlaffiliation{stat}{Department of Statistics and Data Science Institute, University of Chicago}
\icmlaffiliation{econ}{Department of Economics, Stanford University}
\icmlaffiliation{mse}{Department of Management Science \& Engineering, Stanford University}

\icmlcorrespondingauthor{Serina Chang}{serinac@stanford.edu}
\icmlcorrespondingauthor{Frederic Koehler}{fkoehler@uchicago.edu}
\icmlcorrespondingauthor{Zhaonan Qu}{zhaonanq@stanford.edu}

\icmlkeywords{Network inference, iterative proportional fitting, Sinkhorn's algorithm}

\vskip 0.3in
]



\printAffiliationsAndNotice{\icmlEqualContribution} 

\begin{abstract}
A common network inference problem, arising from real-world data constraints, is how to infer a dynamic network from its time-aggregated adjacency matrix and time-varying marginals (i.e., row and column sums). 
Prior approaches to this problem have repurposed the classic iterative proportional fitting (IPF) procedure, also known as Sinkhorn's algorithm, with promising empirical results.
However, the statistical foundation for using IPF has not been well understood: under what settings does IPF provide principled estimation of a dynamic network from its marginals, and how well does it estimate the network?
In this work, we establish such a setting, by identifying a generative network model whose maximum likelihood estimates are recovered by IPF.
Our model both reveals implicit assumptions on the use of IPF in such settings and enables new analyses, such as structure-dependent error bounds 
on IPF's parameter estimates.
When IPF fails to converge on sparse network data, we introduce a principled algorithm that guarantees IPF converges under minimal changes to the network structure.
Finally, we conduct experiments with synthetic and real-world data, which demonstrate the practical value of our theoretical and algorithmic contributions.
\end{abstract}

\section{Introduction}
Dynamic networks of human movements are integral to important societal problems, such as epidemic response and transportation planning, but they are rarely fully observed.
Instead, it is often easier to observe the time-varying \textit{marginals} of the network, i.e., the row sums and column sums of its adjacency matrix.
In transportation networks, 
it is easier to observe passengers embarking or disembarking, e.g., at bus stops \citep{navick1994distance}, 
but harder to uncover routes between stops.
In migration networks, it is easier to observe the number of individuals leaving from and arriving at a region but harder to estimate the network of their movements \citep{plane1982information,pham2022migrant}.
In these settings, we observe the time-varying marginals regularly, but only have occasional access to a time-aggregated network (e.g., from surveys). 
Thus, a natural \textit{dynamic network inference problem} arises from such data: given a network with adjacency matrices $\Xtime$ per time step $t$, can we reasonably infer $\Xtime$ from its time-varying marginals and a time-aggregated version, $\sum_t \Xtime$, of the network?

One approach to solving this problem
is to repurpose the classic iterative proportional fitting (IPF) procedure \citep{deming1940ipf}, also widely known as Sinkhorn's algorithm \citep{sinkhorn1974diagonal}.
Given a matrix $\ipfmat$, target row marginals $\ipfrow$, and target column marginals $\ipfcol$, IPF tries to find a \emph{biproportional} scaling of $\ipfmat$ to match the target marginals. 
IPF poses an attractive solution for the network inference problem: it matches the marginal constraints of the problem, is computationally lightweight and space efficient, and is supported by decades of literature devoted to its analysis \citep{bregman1967proof,fienberg1970ipf,csiszar1975iprojection,pukelsheim2009ipf,marino2020optimal,leger2021gradient,carlier2022linear,qu2023sinkhorn}.
Furthermore, prior works have shown empirical success with using IPF to infer networks \citep{liang2006traffic,mccord2010od,chang2021nature}, enabling important applications such as modeling epidemic spread on mobility networks \citep{li2023sdm,chaudhuri2022overdispersion,alimohammadi2023working} and supporting policymakers \citep{chang2021kdd}.
However, despite the appeal and empirical success of using IPF for network inference, what is missing is a firm statistical grounding for \textit{when} and \textit{why} IPF is justified in this setting.
While it is well-known that IPF solves a Kullback-Leibler (KL) divergence minimization problem \citep{ireland1968contingency}, a formal connection to statistical theory is limited, especially in network settings.
Moreover, IPF occasionally fails to converge on sparse network data. While tests exist for whether IPF will converge, there is no clear answer to how to interpret failed convergence, or how to repair it.
Thus, to rigorously employ IPF to infer networks, we ask: under what model is IPF a principled estimator of a dynamic network from its marginals?
Under this model, how well does IPF estimate these networks?
What are principled strategies to ensure that IPF converges?

In this work, we first identify a generative network model, which we term the \emph{biproportional Poisson model}, whose maximum likelihood estimates (MLEs) are recovered by IPF, thus establishing a statistical framework under which IPF is a principled estimation procedure of a dynamic network from its marginals (Theorem \ref{thm:model}). 
Our model clarifies implicit network assumptions when using IPF and enables the analysis of IPF estimates using statistical theory of maximum likelihood estimation.
Next, we provide expectation and tail bounds on estimation errors of the MLEs, and show that finite MLEs exist with high probability under the biproportional Poison model (Theorems \ref{thm:mse}-\ref{thm:finite-mle}).
To address the issue of IPF non-convergence, we introduce a principled and polynomial time algorithm, \texttt{ConvIPF}, that guarantees IPF convergence while making minimal changes to the network structure (Section \ref{sec:convergence}).
Finally, we conduct extensive experiments with synthetic data and two real-world datasets: mobility data from SafeGraph and bikeshare data from New York City's CitiBike (Section \ref{sec:empirics}).\footnote{Our code is available at \url{https://github.com/snap-stanford/ipf-network-inference}. Citibike data \citep{citibike} and SafeGraph mobility data \citep{dewey} are available online.} 
Our experiments demonstrate IPF's ability to infer ground-truth hourly networks, outperforming several baselines, and tie our theoretical and algorithmic contributions to practical insights.

Our results provide much-needed theoretical foundation to justify recent high-impact applications of IPF to infer dynamic networks and to rigorously motivate future uses of IPF for this and related problems.
Given the vast literature on IPF (Sinkhorn's algorithm) and matrix balancing,   
connecting this network inference problem to IPF also opens up future avenues of research, creating a bridge from modern data-driven problems to decades of statistical theory.

\section{Related Work}
\paragraph{Iterative proportional fitting.}
IPF \citep{deming1940ipf}, also known as Sinkhorn's algorithm, biproportional fitting, raking, and the RAS algorithm, has a long history across disciplines.
IPF offers many empirical advantages: it is straightforward to implement (see Algorithm~\ref{alg:ipf} in Appendix~\ref{sec:app-ipf}), transparent, reproducible, space efficient, and computationally lightweight \citep{liang2006traffic,lomax2015geographer,lovelace2015microsim}.
The algorithmic properties of IPF have also been extensively studied \citep{sinkhorn1964relationship,fienberg1970ipf,franklin1989scaling,pukelsheim2009ipf,qu2023sinkhorn}. 
Notably, IPF solves a KL divergence minimization problem \citep{bregman1967proof,ireland1968contingency,csiszar1975iprojection,ruschendorf1995convergence}, and is a coordinate descent type algorithm for its \emph{dual} problem \citep{luo1992convergence}. Recently, \citet{qu2023sinkhorn} further established connections between IPF and ML estimation of choice models. We leverage these observations to identify a Poisson network model whose MLEs are recovered by IPF (Section \ref{sec:model}).
In Appendix \ref{thm:Poisson-mle}, we discuss other settings where IPF is known to recover MLEs of distinct models, such as contingency tables \citep{bishop1969contingency,bishop1974discrete,little1991models,little1993poststrat}. The connections of IPF to Poisson-type models also have precedence in economics, such as trade \citep{silva2006log} and matching \citep{galichon2022cupid,galichon2022estimating}.
IPF (or Sinkhorn’s algorithm) is also closely connected to (discrete) entropy regularized optimal transport and Schr{\"o}dinger bridges, which can be reformulated as matrix balancing problems with a kernel/reference matrix and fixed marginals, and solved using the IPF procedure \citep{cuturi2013sinkhorn,marino2020optimal,de2021diffusion}.

Despite extensive literature on IPF's convergence behavior, few works have discussed principled solutions when IPF does \textit{not} converge, which may occur when there are many zeros in the initial matrix or marginals \citep{bishop1974discrete,wong1992geographer}.
The typical solution is to replace \emph{all} zeros with small positive values \citep{lomax2015geographer,lovelace2015microsim}. However, this approach may result in unrealistic positive entries or, in our setting, vastly alter the structure of the network.
To address this gap in literature, we propose a principled algorithm that guarantees IPF convergence while \emph{minimally} changing the network (\cref{sec:convergence}).

\vspace{-0.2cm}
\paragraph{Network inference.} The problem of inferring dynamic networks from marginals has been central to the impactful guidance of COVID-19 policy from mobility data \citep{chang2021nature,chang2021kdd,chaudhuri2022overdispersion,li2023sdm,alimohammadi2023working}, and also appears across many domains, including transportation \citep{carey1981method}, communication \citep{kruithof1937telefoonverkeersrekening}, and migration \citep{plane1982information,pham2022migrant}.
Prior works have also explored related but distinct network inference problems. 
When node-level signals are observed but the network is not known at all, temporal or spatial relationships are often used to infer which nodes are likely to be connected \citep{gomezrodriguez2012diffusion,hallac2017lasso,akagi2018ijcai,chen2021icml,rossi2022games}.
A closer setting to ours is where the structure of the graph and the node-level marginals are known \citep{kumar2015wsdm,maystre2017choicerank}.
Our setting is distinct since we study dynamic networks and have access to the time-aggregated network, not just the binary structure. Nevertheless, we note similarities, as both settings solve matrix balancing problems, with the ChoiceRank algorithm of \citet{maystre2017choicerank} closely connected to IPF as well \citep{qu2023sinkhorn}.

Our network inference problem is also related to collective graphical models (CGMs), which aim to fit a model of individual behavior from aggregate data \citep{sheldon2011cgm}.
Like CGMs, we also seek to estimate finer-grained information based on coarser data, but our setting differs in a number of ways: CGMs are typically applied in settings where only the time-varying marginals are known, such as to estimate population flows from counts per region over time \citep{iwata2017flow,akagi2018ijcai,iwata2019neural}; CGMs try to learn a model of \textit{individual} behavior, which is not our goal; and their mathematical formulations and estimation procedure are distinct from IPF.
We compare our work to CGMs in greater detail in Appendix \ref{thm:Poisson-mle}. 


\section{Biproportional Poisson Model: A Statistical Framework of IPF for Inferring Networks}
\label{sec:model}
In this section, we provide background on IPF and define our dynamic network inference problem, then propose a statistical framework---via our generative network model---that establishes IPF as a principled solution to this problem.
\vspace{-0.5cm}
\paragraph{Matrix balancing and IPF.} 
IPF is an iterative algorithm that 
seeks to solve the following \textit{matrix balancing problem}: 
\begin{quote}
    Given positive vectors $\ipfrow \in\mathbb{R}_{++}^{m}, \ipfcol \in\mathbb{R}_{++}^{n}$ with
    $\sum_i p_{i}=\sum_j q_{j}=c$ and an initial non-negative matrix $\ipfmat \in\mathbb{R}_{+}^{m\times n}$,
    find positive diagonal matrices $D^{0}$, $D^{1}$ satisfying the
    marginal conditions $D^{0} \ipfmat D^{1}\cdot\revised{\mathbf{1}_n}=\ipfrow$ and $D^{1} \ipfmat^{T} D^{0}\cdot\revised{\mathbf{1}_m}=\ipfcol$. 
\end{quote}
IPF learns the scaling factors $\ipfparamrow$ and $\ipfparamcol$, which are diagonals of $D^0,D^1$, by alternating between scaling the rows to match $\ipfrow$, then scaling the columns to match $\ipfcol$:
\begin{align}
    \ipfparamrow_i(k+1) &= \frac{\ipfrow_i}{\sum_j \ipfmat_{ij} \ipfparamcol_j(k)}, \quad \ipfparamcol_j(k+1) = \ipfparamcol_j(k), \label{eqn:ipf-update} \\ 
    \ipfparamcol_j(k+2) &= \frac{\ipfcol_j}{\sum_i X_{ij} \ipfparamrow_i(k+1)}, \quad \ipfparamrow_i(k+2) = \ipfparamrow_i(k+1). \nonumber 
\end{align}
We denote by $\ipfestmat(k)$ the scaled matrix after the $k$-th iteration: $\ipfestmat(k):= D^0(k) \ipfmat D^1(k)$.
The convergence behavior depends on the problem structure: if the matrix balancing problem has a finite solution, $(D^0(k),D^1(k))$ will converge to it; $(D^0(k),D^1(k))$ can diverge but $\ipfestmat(k)$ converges; or $\ipfestmat(k)$ does not converge and instead oscillates between accumulation points \citep{pukelsheim2009ipf}.
It is well-known that IPF solves the following KL divergence minimization problem, and IPF converges as long as the KL problem is feasible and bounded \citep{bregman1967proof,ireland1968contingency,leger2021gradient}:
\begin{align}
\label{eq:KL-minimization}
\min_{\hat \klmat}D_{\text{KL}}(\hat \klmat\|\ipfmat)\Leftrightarrow \min_{\hat \klmat}\sum_{ij}\hat \klmat_{ij}\log\frac{\hat \klmat_{ij}}{\ipfmat_{ij}},
\end{align}
subject to $\hat \klmat_{ij}\geq0$, $\hat 
\klmat\mathbf{1}_{n}=\ipfrow$, and 
$\hat \klmat^{T}\mathbf{1}_{m}=\ipfcol$.\footnote{Technically, the KL divergence $D_{\text{KL}}(P\|Q)$ is well-defined only on probability distributions $P,Q$. Without loss of generality, we may rescale $\ipfmat$ so that $\sum_{ij}\ipfmat_{ij}=\sum_ip_i=\sum_jq_j=c$. Then normalizing $\hat \klmat, \ipfmat$ by the common constant $c$ yields valid discrete probability distributions. The resulting KL minimization problem is equivalent to \eqref{eq:KL-minimization}.} 
\eqref{eq:KL-minimization} is feasible and bounded if and only if there exists $\hat \klmat$ with the desired marginals $\ipfrow$ and $\ipfcol$ and $\hat \klmat_{ij}=0$ whenever $\ipfmat_{ij}=0$.
Furthermore, the dual problem of \eqref{eq:KL-minimization} minimizes the potential function
\begin{align}
\label{eqn:dual}
g(u,v):=\sum_{ij}\ipfmat_{ij}e^{u_{i}-v_{j}}-\sum_{i}\ipfrow_{i}u_i+\sum_{j}\ipfcol_{j}v_{j},
\end{align}
which is jointly convex in the dual variables $\paramrow \in \mathbb{R}^m, \paramcol \in\mathbb{R}^n$.
IPF is a coordinate descent type algorithm for \eqref{eqn:dual} and $(\ipfparamrow,\ipfparamcol)$ is a solution to the matrix balancing problem if and only if $u = \log \ipfparamrow,v=-\log \ipfparamcol$ is a minimizer of \eqref{eqn:dual} \citep{luo1992convergence}.
For completeness, we provide details of the duality result in Appendix \ref{sec:dual}. The KL minimization problem \eqref{eq:KL-minimization} is also closely related to entropy regularized optimal transport, which we discuss in more detail in \cref{subsec:entropy-ot}.

\paragraph{Dynamic network inference problem.}
In our setting, we have a dynamic network with discrete time steps, where $\Xtime \in \mathbb{R}^{m \times n}_{+}$ represents the weighted adjacency matrix at time $t$.
We do not have access to $\Xtime$ due to privacy or sampling constraints, but we observe its row sums $\ptime:=\Xtime\mathbf{1}_n$ and column sums $\qtime:=(\Xtime)^T\mathbf{1}_m$, as well as a time-aggregated network, $\Xagg := \sum_{t=1}^T \Xtime$, for some large $T$.
The goal of the dynamic network inference problem is to provide a reasonable estimate of $\Xtime$, given $\ptime$, $\qtime$, and $\Xagg$. The correspondence between this problem and the matrix balancing problem is natural: we can treat $\Xagg$ as the initial matrix, $\ptime$ as the target row marginals, and $\qtime$ as the target column marginals. IPF's solution to this matrix balancing problem then serves as an estimate of the hourly network $\Xtime$. 
However, 
how should we interpret this estimate and why is IPF a justified approach here?

\paragraph{Our proposed network model.}
We provide a statistical justification of IPF as a solution to the network inference problem by identifying a generative network model under which IPF in fact recovers the MLEs of the network parameters.
Our model, which we term the \textit{biproportional Poisson model}, is defined as follows with parameters $u,v$:
\begin{align}
    \Xtime_{ij} &\sim \begin{cases}
        \mathrm{Poisson}(e^{\paramrow_i} \Xagg_{ij} e^{-\paramcol_j})\textrm{, if }\Xagg_{ij}>0,\\
        0\textrm{, otherwise.}
    \end{cases} \label{eqn:model} \\
    \ptime_i &= \sum_j \Xtime_{ij}, \quad
    \qtime_j = \sum_i \Xtime_{ij}. \nonumber
\end{align}
Model \eqref{eqn:model} posits \emph{independent Poisson} samples $\Xtime_{ij}$ with expected value $\lambda_{ij}=e^{\paramrow_i}\Xagg_{ij}e^{-\paramcol_j}$ wherever $\Xagg_{ij} > 0$ (we suppress time-indexing on $\paramrow$ and $\paramcol$ to simplify notation). Note that the model is unique up to normalization, e.g., $(u,v)^T\mathbf{1}_{m+n}=0$, since adding a constant $c$ to $(u,v)$ yields the \emph{same} model. 
Since $\Xtime$ is not observed, it is \emph{a priori} not obvious if the model parameters can be recovered by maximum likelihood estimation. 
Our first result reveals that although only $\Xagg$, $\ptime$, and $\qtime$ are observed, MLEs of $u,v$ are well-defined, since $\ptime$ and $\qtime$ form the sufficient statistics of model \eqref{eqn:model}. 
More importantly, our result connects IPF to \eqref{eqn:model}, by showing that its MLEs are exactly the IPF solution to the matrix balancing problem, with \eqref{eqn:dual} equivalent to the negative log-likelihood. This connection also partially inspired the name ``biproportional Poisson'', as an alternative name of IPF is ``biproportional fitting'' \citep{bacharach1965estimating}.
\begin{theorem}
\label{thm:model}
Assume that the matrix balancing problem with $\Xagg$, $\ptime$, and $\qtime$ has a finite solution $(D^0,D^1)$. Then $\ipfparamrow$ and $\ipfparamcol$ are limits of the IPF iterations if and only if $\hat{\paramrow} = \log \ipfparamrow$ and $\hat{\paramcol} = -\log \ipfparamcol$ are solutions to the maximum likelihood estimation problem of \eqref{eqn:model} given $\Xagg$, $\ptime$, and $\qtime$, with log-likelihood $\ell\equiv -g(u,v)$ in \eqref{eqn:dual} modulo constants:
\begin{align}
\label{eqn:log-likelihood}
\ell(u,v)=\sum_{i}\ptime_{i}u_i-\sum_{j}\qtime_{j}v_{j}-\sum_{ij}\Xagg_{ij}e^{u_{i}-v_{j}}.
\end{align}
Moreover, maximizing $\ell(u,v)$ is equivalent to the maximum likelihood estimation of a Poisson regression model, with $\ptime,\qtime$ as the sufficient statistics.
\end{theorem}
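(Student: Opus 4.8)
The plan is to compute the log-likelihood of model \eqref{eqn:model} explicitly, show it coincides with $-g$ up to a parameter-free constant, and then chain this identity with the duality result already cited to obtain the stated equivalence. First I would write the joint likelihood as a product of independent Poisson PMFs over the pairs $(i,j)$ with $\Xagg_{ij}>0$; pairs with $\Xagg_{ij}=0$ contribute deterministic zeros and drop out. Substituting the mean $\lambda_{ij}=\Xagg_{ij}e^{u_i-v_j}$ and taking logarithms gives
\begin{align*}
\log L = \sum_{ij} \Xtime_{ij}\bigl(\log \Xagg_{ij} + u_i - v_j\bigr) - \sum_{ij} \Xagg_{ij} e^{u_i - v_j} - \sum_{ij} \log(\Xtime_{ij}!),
\end{align*}
where all sums run over $\Xagg_{ij}>0$. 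The terms $\sum_{ij}\Xtime_{ij}\log\Xagg_{ij}$ and $\sum_{ij}\log(\Xtime_{ij}!)$ do not involve $(u,v)$ and are absorbed into an additive constant. The crucial algebraic step is to regroup the remaining linear terms using the marginal definitions $\ptime_i=\sum_j\Xtime_{ij}$ and $\qtime_j=\sum_i\Xtime_{ij}$, which turns $\sum_{ij}\Xtime_{ij}u_i$ into $\sum_i\ptime_i u_i$ and $\sum_{ij}\Xtime_{ij}v_j$ into $\sum_j\qtime_j v_j$ (extending the sums to all $(i,j)$ is harmless since $\Xtime_{ij}=0$ wherever $\Xagg_{ij}=0$). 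This yields exactly $\ell(u,v)$ of \eqref{eqn:log-likelihood}, and comparing term-by-term with \eqref{eqn:dual} shows $\ell\equiv -g$ modulo the constant.

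Next I would establish the two auxiliary claims from the same factored form. Writing the likelihood as $h(\Xtime)\exp(\ell(u,v))$, where $h$ collects the parameter-free factors, the Fisher--Neyman factorization theorem immediately identifies $(\ptime,\qtime)$ as sufficient statistics, since $\ell$ depends on the data only through these marginals. For the Poisson regression interpretation, I would note that $\log\lambda_{ij}=\log\Xagg_{ij}+u_i-v_j$ is an affine function of the parameters: a known offset $\log\Xagg_{ij}$ plus a linear predictor whose design consists of row- and column-indicator covariates with coefficients $u_i$ and $-v_j$. Hence maximizing $\ell$ is precisely ML estimation in a log-linear Poisson model, with the same sufficient statistics.

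Finally I would assemble the equivalence as a chain of ``if and only if'' statements. Under the hypothesis that the matrix balancing problem has a finite solution, the convergence theory cited above guarantees that $(\ipfparamrow,\ipfparamcol)$ is a limit of the IPF iterations if and only if it solves the matrix balancing problem. The duality result restated earlier says $(\ipfparamrow,\ipfparamcol)$ solves matrix balancing if and only if $u=\log\ipfparamrow$, $v=-\log\ipfparamcol$ minimizes $g$. Since $\ell=-g$ up to a constant, the minimizers of $g$ are exactly the maximizers of $\ell$, i.e., the MLEs of \eqref{eqn:model}. Composing these equivalences gives the theorem. I expect the only delicate point to be the bookkeeping at this last step: one must confirm that the finite-solution hypothesis is exactly what makes the MLE exist finitely and be well-defined up to the stated normalization, so that both directions of each ``iff'' are genuinely available rather than vacuous. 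The likelihood computation and the sufficiency argument are otherwise routine.
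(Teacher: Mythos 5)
Your proposal is correct and follows essentially the same route as the paper's proof: both compute the Poisson log-likelihood over the support of $\Xagg$, absorb the parameter-free terms, regroup the linear terms via the marginal definitions to obtain $\ell \equiv -g$ modulo constants, and then identify the Poisson regression structure via row/column indicator covariates with offset $\log \Xagg_{ij}$. The only cosmetic difference is that you invoke Fisher--Neyman factorization explicitly for sufficiency where the paper simply observes the log-likelihood depends on the data only through the marginals.
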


We prove \cref{thm:model} in Appendix \ref{thm:Poisson-mle}. 
The core of our result lies in identifying a model whose log-likelihood is equivalent to $-g(u,v)$, since IPF minimizes $g(u,v)$ in \eqref{eqn:dual}. Our result is closely related to \citet{qu2023sinkhorn}, who recently established connections between IPF and choice modeling, and observed that \eqref{eqn:dual} reduces to the maximum likelihood objective of a general class of choice models.

Notably, our theorem does not require assumptions about how the network evolves over time, since our model includes scaling factors $u$ and $v$ per time step, which IPF directly estimates. 
However, performing network inference in this decoupled fashion potentially leaves out additional information, which is that $\Xagg = \sum_{t=1}^T \Xtime$.
We show in Appendix \ref{sec:joint} that performing network inference with this constraint 
reduces to the decoupled problems 
under the following mild stationarity assumption for some constant $c$:
\begin{align}
    \sum_{t=1}^T e^{u_{i}(t)-v_j(t)} \approx c, \label{eqn:stationarity}
\end{align}
for all $i,j$ where $\Xagg_{ij} > 0$. 
We also verify that the stationarity assumption approximately holds on real-world data (Appendix \ref{sec:test-assumptions}), thus justifying the decoupled approach. 
As two additional results, in Appendix~\ref{sec:thm2-proof}, we prove necessary and sufficient conditions for IPF to recover $\Xtime$ exactly, and in Appendix~\ref{sec:app-uniqueness}, we prove that among a larger class of generalized linear models, the Poisson model is the \emph{unique} one where the MLE is the IPF solution --- so from the perspective of IPF, our generative model is canonical. 

\paragraph{Implications of our result.}
Our model allows us to interpret IPF through the lens of a generative network model.
Since, in relevant applications, $\Xtime_{ij}$ is often the number of visits from node $i$ to $j$, \eqref{eqn:model} is consistent with queuing theory where rare events are modeled with a Poisson process due to the memory-less property. In addition, in the Poisson parameters $e^{\paramrow_i}\Xagg_{ij}e^{-\paramcol_j}$, $u_i$ can be interpreted as the emission intensity of node $i$, and $-v_j$ as the absorption intensity of node $j$. 
For example, in mobility networks between residential neighborhoods and public places \citep{chang2021nature}, $u$ captures when each neighborhood is likelier to go out (e.g., younger populations more at night) and $-v$ captures each place's visit propensity (e.g., schools visited more during the day while bars visited more at night).
Notably, the biproportional form of the model assumes that there are not time-varying interactions between rows and columns (e.g., if a place offers special discounts for seniors at this time, attracting neighborhoods with large senior populations), making explicit one of the key assumptions of using IPF in this network inference setting.
Moreover, since $\ipfparamrow_i \Xagg_{ij}\ipfparamcol_j$ from IPF corresponds to $e^{ \paramrow_i} \Xagg_{ij} e^{-\paramcol_j}$, we can interpret the matrix inferred by IPF as estimating the \textit{expected values} of the network-generating process.

We also note connections of our model to (pseudo-)Poisson maximum likelihood regression in economics \citep{gourieroux1984pseudo}, including for models of trade \citep{silva2006log} and matching \citep{galichon2022cupid,galichon2022estimating}. In particular, \cref{thm:model} implies that estimating the biproportional model is equivalent to solving a pseudo-Poisson maximum likelihood problem. Consequently, the results of \citet{gourieroux1984pseudo} guarantee that the minimizers of \cref{eqn:dual} are consistent estimators of $u,v$ even under \emph{misspecifications} of the distribution and homoskedasticity of $\Xtime_{ij}$ given $\lambda_{ij}=e^{\paramrow_i}\Xagg_{ij}e^{-\paramcol_j}$, highlighting the robustness of our biproportional Poisson model. 


Defining an explicit model also yields several advantages.
First, the equivalence between the IPF solution and this model's MLE enables us to analyze IPF estimates using tools from statistical theory. 
In Section \ref{sec:stats-theory}, we develop bounds on the MLE's estimation error and establish that finite MLEs exist with high probability. Second, our model clarifies previously implicit assumptions when using IPF to infer dynamic networks, such as the stationarity assumption \eqref{eqn:stationarity} or the lack of time-varying interactions.
Making such assumptions explicit allows practitioners to evaluate how reasonable the assumptions are given their domain and data; for example, we test several model assumptions on real-world bikeshare data (Appendix \ref{sec:test-assumptions}). 
Third, defining an explicit model reveals natural ways to extend the model, such as non-Poisson distributions or interaction terms between rows and columns.
These extensions allow us to test IPF under model misspecification (Appendix \ref{sec:synthetic-misspecification}) and create future opportunities for studying how changes in the model map back to changes in IPF. 
Finally, the model enables new empirical analyses of IPF (Section \ref{sec:empirics}), such as quantifying uncertainty in the parameter estimates and evaluating IPF's estimation of the network parameters, instead of only evaluating IPF's error on the marginals, which is how IPF tends to be evaluated \citep{lovelace2015microsim}.

\section{Statistical Theory of Biproportional Poisson}
\label{sec:stats-theory}
We have shown in \cref{thm:model} that when the matrix balancing problem has finite solutions, IPF recovers them as MLEs of our biproportional Poisson model.
Two important questions remain: how ``good'' are these MLEs, in terms of their estimation error relative to the true model parameters, and how often can we guarantee that the MLEs, solutions to the matrix balancing problem, are finite?  
In this section, we develop statistical theory to answer these questions.

\subsection{Structure-dependent MLE Error Bounds}
Intuitively, the more ``well-connected'' a biproportional Poisson network, the better quality its MLEs. 
We quantify network connectivity through the \emph{Fiedler eigenvalue} \citep{fiedler1973algebraic}, which is the second-smallest eigenvalue $\lambda_{-2}(\mathcal L)$ of the graph Laplacian $\mathcal{L}:=\mathcal{D}(A\mathbf{1})-A$, where $A$ is the (weighted) adjacency matrix and $\mathcal{D}(\cdot)$ denotes the diagonalization of a vector.
In this paper, $\mathcal{L}$ is the graph Laplacian of the \emph{weighted bipartite} graph $G_b$ induced by $\Xagg$ with $A :=\begin{bmatrix}0 & {\Xagg}\\
{\Xagg}^{T} & 0\end{bmatrix}$. 
The Fiedler eigenvalue is frequently used in graph theory
and distributed optimization to measure graph connectivity \citep{spielman2012spectral}. 
In recent years, its importance for the algorithmic and statistical efficiencies of Luce choice model estimation \emph{vis-{\`a}-vis} the topology of comparison structures has been extensively studied \citep{shah2015estimation,vojnovic2016parameter,seshadri2020learning,vojnovic2020convergence,hendrickx2020minimax,bong2022generalized}. It is important to note 
that the Fiedler eigenvalue used in our paper is based on a \emph{different} graph 
than 
in the choice literature, which are constructed from choice data and are not bipartite. 
An exception is \citet{qu2023sinkhorn},
who provided linear convergence analyses of IPF
quantified by the \emph{same} $\lambda_{-2}(\mathcal{L})$ as in this paper.

We now provide error bounds for normalized MLEs of the biproportional Poisson model quantified by the Fiedler eigenvalue, 
whenever the true parameters and MLEs are bounded by some constant $B$.  
\begin{theorem}
\label{thm:mse}
Suppose that the biproportional Poisson model \eqref{eqn:model} holds with ground truth parameters $u^*,v^*$.
Suppose $(\hat u,\hat v)$ is a maximizer of the log-likelihood \eqref{eqn:log-likelihood}
and that we have the normalization condition $(\hat u - u^*, \hat v - v^*) \in \mathbf{1}_{m + n}^{\perp}$ and $\|(\hat u,\hat v,u^*,v^*)\|_{\infty} \le B$. Then with $\kappa = \sum_{ij} e^{u^\ast_i} \Xagg_{ij} e^{-v^
\ast_j}$ the total rate, 
 in expectation we have
 \vspace{-0.2cm}
\begin{equation} 
\mathbb E\left[\|(\hat u - u^*, \hat v - v^*)\|^2 \mathbbm{1}_{\mathcal B}\right] \le \frac{8e^{4B} \kappa}{\lambda_{-2}(\mathcal L)^2},
\label{eq:expected-risk-bound} \end{equation}
where $\mathbbm{1}_{\mathcal B}$ is the event that the MLE is bounded above by $B$. 
\end{theorem}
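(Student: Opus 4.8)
The plan is to treat $(\hat u,\hat v)$ as an M-estimator and control its error by combining two ingredients: a second-moment bound on the score (the gradient of the negative log-likelihood at the truth), and a restricted strong-convexity lower bound coming from the Hessian, which I will identify with the weighted bipartite Laplacian $\mathcal L$. Write $g=-\ell$ as in \eqref{eqn:dual}, so that $g$ is convex and $(\hat u,\hat v)$ is its minimizer, giving the first-order condition $\nabla g(\hat u,\hat v)=0$. Let $\Delta=(\hat u-u^*,\hat v-v^*)$, which lies in $\mathbf 1_{m+n}^{\perp}$ by the normalization assumption.

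For the \emph{score}, differentiating $g$ at $(u^*,v^*)$ gives $\partial_{u_i}g=\sum_j\lambda^*_{ij}-\ptime_i$ and $\partial_{v_j}g=\qtime_j-\sum_i\lambda^*_{ij}$, where $\lambda^*_{ij}=e^{u^*_i}\Xagg_{ij}e^{-v^*_j}=\mathbb E[\Xtime_{ij}]$. Since $\ptime_i$ and $\qtime_j$ are sums of independent Poisson variables, each coordinate of $\nabla g(u^*,v^*)$ is centered, and its variance equals the corresponding marginal mean (Poisson mean $=$ variance). Summing over coordinates, I expect
\[
\mathbb E\|\nabla g(u^*,v^*)\|^2=\sum_i\mathrm{Var}(\ptime_i)+\sum_j\mathrm{Var}(\qtime_j)=2\sum_{ij}\lambda^*_{ij}=2\kappa.
\]
For the \emph{Hessian}, $\nabla^2 g(u,v)$ has diagonal blocks $\mathcal D(\Lambda\mathbf 1),\mathcal D(\Lambda^{T}\mathbf 1)$ and off-diagonal block $-\Lambda$, where $\Lambda_{ij}=e^{u_i-v_j}\Xagg_{ij}$; its quadratic form is $\sum_{ij}\Lambda_{ij}(a_i-b_j)^2$, which is exactly the Laplacian of the weighted bipartite graph with edge weights $\Lambda_{ij}$. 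On the event $\mathcal B$, both endpoints $(u^*,v^*)$ and $(\hat u,\hat v)$ — and hence, by convexity of the $\|\cdot\|_\infty$-ball, the entire segment between them — satisfy $\|u\|_\infty,\|v\|_\infty\le B$, so $\Lambda_{ij}\ge e^{-2B}\Xagg_{ij}$ pointwise. Comparing quadratic forms yields $\nabla^2 g\succeq e^{-2B}\mathcal L$ along the segment, and since $\Delta\in\mathbf 1^{\perp}$ this gives a restricted strong-convexity modulus $\mu=e^{-2B}\lambda_{-2}(\mathcal L)$.

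To \emph{combine}, I use the first-order condition and convexity: $0\le g(u^*,v^*)-g(\hat u,\hat v)=-\langle\nabla g(u^*,v^*),\Delta\rangle-D_g$, where $D_g$ denotes the Bregman divergence of $g$ from the truth to the estimate. Restricted strong convexity gives $D_g\ge\tfrac{\mu}{2}\|\Delta\|^2$, while Cauchy–Schwarz gives $|\langle\nabla g(u^*,v^*),\Delta\rangle|\le\|\nabla g(u^*,v^*)\|\,\|\Delta\|$. Rearranging yields $\|\Delta\|\le(2/\mu)\|\nabla g(u^*,v^*)\|$ on $\mathcal B$, hence
\[
\|\Delta\|^2\,\mathbbm 1_{\mathcal B}\le\frac{4e^{4B}}{\lambda_{-2}(\mathcal L)^2}\,\|\nabla g(u^*,v^*)\|^2 .
\]
Taking expectations and substituting $\mathbb E\|\nabla g(u^*,v^*)\|^2=2\kappa$ produces the claimed bound $\tfrac{8e^{4B}\kappa}{\lambda_{-2}(\mathcal L)^2}$.

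The main obstacle is handling the degeneracy of the Hessian: its kernel is spanned by $\mathbf 1_{m+n}$ (the model is invariant under a common additive shift of $u,v$), so the entire argument must be carried out on $\mathbf 1^{\perp}$, which is precisely why the normalization condition $\Delta\in\mathbf 1^{\perp}$ is indispensable and why only the Fiedler eigenvalue $\lambda_{-2}(\mathcal L)$, rather than the smallest eigenvalue, can appear. A companion subtlety is verifying that the score $\nabla g(u^*,v^*)$ itself lies in $\mathbf 1^{\perp}$ — it does, because the row and column totals of $\Xtime$ coincide — so that pairing it against $\Delta$ loses nothing. The remaining care is ensuring the comparison $\Lambda_{ij}\ge e^{-2B}\Xagg_{ij}$ holds uniformly along the integration segment defining $D_g$, for which the convexity of the $\infty$-norm ball together with the event $\mathcal B$ is exactly what is required.
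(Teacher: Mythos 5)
Your proposal is correct and follows essentially the same route as the paper's proof: both compute $\mathbb E\|\nabla g(u^*,v^*)\|^2 = 2\kappa$ from the Poisson mean--variance identity, lower-bound the Hessian on the segment by $e^{-2B}\mathcal L$ restricted to $\mathbf 1_{m+n}^{\perp}$, and combine via $g(u^*,v^*)\ge g(\hat u,\hat v)$ with Cauchy--Schwarz to get $\|\Delta\|\le 2e^{2B}\|\nabla g(u^*,v^*)\|/\lambda_{-2}(\mathcal L)$ before taking expectations. Your Bregman-divergence phrasing is just a repackaging of the paper's Taylor-expansion step, and your explicit identification of the Hessian's quadratic form as $\sum_{ij}\Lambda_{ij}(a_i-b_j)^2$ is a self-contained derivation of the lemma the paper imports from prior work.
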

A similar tail bound is given and proven in \cref{thm:bd-apdx}. The dependence of the error bounds on the Fiedler eigenvalue $\lambda_{-2}(\mathcal L)$ is clear: the more well-connected the bipartite graph $G_b$ (induced by $\Xagg$), the larger ${\lambda_{-2}(\mathcal L)}$, which in turn improves the estimation quality. Since $\lambda_{-2}(\mathcal L)>0$ if and only if $G_b$ is connected, the bounds blow up when $G_b$ becomes disconnected. Indeed, the biproportional Poisson model is not identified in this case (see \cref{rem:error-bound}). Analytically, $\lambda_{-2}(\mathcal L)$ also quantifies the \emph{strong concavity} of the log-likelihood function \eqref{eqn:log-likelihood}, which provides additional justification for its prominence in the bounds. 

Given recently established connections between choice modeling and matrix balancing and the well-known structure-dependent estimation error bounds in the choice literature \citep{shah2015estimation,seshadri2020learning,hendrickx2020minimax}, our error bounds in \cref{thm:mse} can be viewed as natural analogs of such results for the biproportional Poisson model. As with those results, the assumption that the ground truth parameters and the MLEs are bounded by some $B$ is standard. However, we note that since the dimension of observations $\ptime,\qtime$ is on the same order as the parameter dimension (both $m+n$), the biproportional Poisson model corresponds to a high-dimensional setting by design. Consequently, unlike bounds in the choice literature, there is no explicit dependence on the ``sample size'' in our bounds. We provide a simple example to illustrate this point. 

\textbf{Example with accurate recovery: complete graph.} Suppose that $\Xagg$ is the $m \times n$ all-ones matrix. Then $\kappa = \Theta(nm)$ and $\lambda_{-2}(\mathcal L) = \min(n,m)$ so the right hand side of \eqref{eq:expected-risk-bound} is of order $\Theta(\max(n/m,m/n))$. So if e.g. $n = m$, then the total error for recovering the \emph{entire vector} $(u^\ast,v^\ast)$ is $\mathcal{O}(1)$. Equivalently, the average error \emph{per coordinate} of $u^\ast,v^\ast$ is $\frac{1}{2n} \|(\hat u - u^*,\hat v - v^*)\|^2 = \mathcal{O}(1/n)$. 

\textbf{Bound improves with growing SNR.} A key feature of the error bound is that it improves by a factor of $1/c$ when the base matrix $\Xagg$ is \emph{scaled up} by a constant $c>1$, since both $\kappa$ and $\lambda_{-2}(\mathcal L)$ are scaled by $c$. This feature can be understood as a result of improved ``signal-to-noise'' ratio (SNR), 
since the Poisson rate $e^{u^\ast_i} \Xagg_{ij} e^{-v^\ast_j}$ scales with $\Xagg$.

\textbf{Impact of sparsity.} As graph connectivity is related to the sparsity of the graph, our results can inform us on how the sparsity of the network impacts estimation quality of IPF. In \cref{sec:synthetic-sparsity}, we evaluate the quality of MLEs with synthetic data under different sparsity rates $r\in [0,1)$. One implication of \cref{thm:mse} is that, with high probability, 
\[\|(e^{\hat{u}} - e^{u^\ast}, e^{-\hat{v}} - e^{-v^\ast})\|_2 = \mathcal{O}\left(\frac{1}{\sqrt{1 - r}}\right),\]
which matches the observed deterioration of estimation quality in Figure \ref{fig:ipf-sparsity} as sparsity $r$ increases from 0 to 1.

In \cref{rem:error-bound}, we provide more discussion on the error bounds in \cref{thm:mse}, including optimality of the dependence on $\kappa$ and necessity of ${\lambda_{-2}(\mathcal L)}$ and $e^{\Theta(B)}$.
Despite the usefulness of the error bounds we develop in \cref{thm:mse}, an important technical question remains on the existence of \emph{finite} MLEs. Even under correct specification, a (random) dataset may not yield a well-defined maximum likelihood problem, i.e., no finite maximizer of \eqref{eqn:log-likelihood} exists. We next address this issue and identify a sufficient condition that guarantees that finite MLEs exist with high probability.

\subsection{Well-posedness of Maximum Likelihood Estimation}
Under the biproportional Poisson model \eqref{eqn:model}, the maximum likelihood estimation problem on $\Xagg,\ptime,\qtime$ may not have a finite solution. 
As is well-known and discussed in \citet{qu2023sinkhorn}, the corresponding matrix balancing problem has a finite solution if and only if there exists a matrix with \emph{exactly} the same zero patterns as $\Xagg$ and has marginals $\ptime, \qtime$. Under correct specification of the biproportional Poisson model, the matrix $\Xtime$ has the right marginals $\ptime,\qtime$. If all Poisson entries $\Xtime_{ij}>0$, $\Xtime$ provides a certificate for the existence of a finite solution to the matrix balancing problem hence finite MLEs. 
However, as soon as any $\Xtime_{ij}$ equals 0, $\Xtime$ will have additional zero entries than $\Xagg$, and the bipartite network induced by $\Xtime$ could become disconnected. However, there could still exist \emph{another} matrix that solves the matrix balancing problem, yielding a finite MLE. Our task is to show that this happens often.

A similar challenge exists in the choice setting. When some subset of items are always preferred over its complement, no positive MLE exists in the Luce choice modeling framework. Recently, \citet{bong2022generalized} provided a simple sufficient condition on the Fisher information matrix in the Bradley--Terry--Luce model which guarantees that this event rarely happens. Our next theorem can be viewed as an analog of their result for the biproportional Poisson model.

\begin{theorem}
    \label{thm:finite-mle}
    Let $\mathcal{L}^\ast:= -\nabla^2 \ell(u^\ast,v^\ast)$ be the Hessian of the negative log-likelihood in \eqref{eqn:log-likelihood} evaluated at the true parameters $(u^\ast,v^\ast)$, and suppose its second smallest eigenvalue satisfies
    \begin{align}
\label{eq:Fisher-bound}
        \lambda_{-2}(\mathcal{L}^\ast)\geq 8{\log (m+n)}.
    \end{align} Then with probability at least $1-2/\sqrt{m+n}$, the maximum likelihood estimation of the model \eqref{eqn:model} has a unique normalized finite solution, and IPF converges to this solution. 
\end{theorem}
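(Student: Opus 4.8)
The plan is to show that the ``bad event''---that no finite MLE exists---occurs with probability at most $2/\sqrt{m+n}$, and then read off uniqueness and IPF convergence from \cref{thm:model}. By \cref{thm:model} together with the matrix-balancing feasibility criterion, a finite solution exists exactly when $\ptime,\qtime$ lie in the relative interior of the transportation polytope supported on $\mathrm{supp}(\Xagg)$, i.e.\ when some nonnegative matrix has \emph{exactly} the zero pattern of $\Xagg$ and marginals $\ptime,\qtime$. First I would recast this feasibility question combinatorially via Gale's max-flow/min-cut feasibility theorem: a finite solution \emph{fails} to exist iff there is a nontrivial pair $A\subseteq[m]$, $B\subseteq[n]$ with no edge of $\mathrm{supp}(\Xagg)$ between $A$ and $B$ such that $\sum_{i\in A}\ptime_i+\sum_{j\in B}\qtime_j \ge \sum_{ij}\Xtime_{ij}$.

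Next I would turn this deterministic criterion into a probability. Because there are no edges between $A$ and $B$ we have $\Xtime_{ij}=0$ on $A\times B$, so a short computation using $\ptime_i=\sum_j\Xtime_{ij}$ and $\qtime_j=\sum_i\Xtime_{ij}$ gives the slack identity $\sum_{ij}\Xtime_{ij}-\bigl(\sum_{i\in A}\ptime_i+\sum_{j\in B}\qtime_j\bigr)=\sum_{i\notin A,\,j\notin B}\Xtime_{ij}$. Hence the failure of $(A,B)$ is exactly the event that \emph{all} Poisson entries in the complementary block $\bar A\times\bar B$ vanish. By the independence in \eqref{eqn:model} and $\Pr[\mathrm{Poisson}(\lambda)=0]=e^{-\lambda}$, this block-vanishing event has probability $\exp(-\Lambda(\bar A,\bar B))$, where $\Lambda(\bar A,\bar B):=\sum_{i\in\bar A,\,j\in\bar B}\lambda_{ij}$ and $\lambda_{ij}=e^{u^\ast_i}\Xagg_{ij}e^{-v^\ast_j}$.

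The crux is to lower bound $\Lambda(\bar A,\bar B)$ by the Fiedler eigenvalue. Here I would use that $\mathcal L^\ast=-\nabla^2\ell(u^\ast,v^\ast)$ is precisely the Laplacian of the weighted bipartite graph with edge weights $\lambda_{ij}$. For the vertex bipartition $S:=\bar A\cup B$ and $\bar S:=A\cup\bar B$, the no-edge-between-$A$-and-$B$ hypothesis kills one of the two crossing blocks, so \emph{every} edge crossing the cut lies between $\bar A$ and $\bar B$; thus the cut weight equals $\Lambda(\bar A,\bar B)=\mathbbm{1}_S^\top\mathcal L^\ast\,\mathbbm{1}_S$. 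Feeding the centered indicator $\mathbbm{1}_S-\tfrac{|S|}{m+n}\mathbf{1}$ into the Rayleigh quotient for $\lambda_{-2}$ then yields $\Lambda(\bar A,\bar B)\ge \lambda_{-2}(\mathcal L^\ast)\,|S|\,|\bar S|/(m+n)$, which is nontrivial exactly because a nontrivial $(A,B)$ makes both $S$ and $\bar S$ nonempty.

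Finally I would close with a union bound, grouping subsets $S$ by size $k$ and using $\binom{m+n}{k}\le(m+n)^{\min(k,\,m+n-k)}$ together with $|S||\bar S|/(m+n)\ge\min(k,\,m+n-k)/2$; the hypothesis $\lambda_{-2}(\mathcal L^\ast)\ge 8\log(m+n)$ makes the terms decay geometrically and sum to at most $2/\sqrt{m+n}$. Uniqueness of the normalized finite solution and IPF convergence then follow immediately: $\lambda_{-2}(\mathcal L^\ast)>0$ forces $\mathrm{supp}(\Xagg)$ to be connected, so $\ell$ is strictly concave on $\mathbf 1_{m+n}^\perp$, and \cref{thm:model} identifies the unique maximizer with the IPF limit. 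I expect the main obstacle to be the combinatorial reduction in the first step---correctly handling the relative-interior (strict inequality) condition and the trivial pairs---and the accompanying observation that, under the no-edge hypothesis, the min-cut value collapses to exactly the block sum $\Lambda(\bar A,\bar B)$, which is what makes the Fiedler bound applicable.
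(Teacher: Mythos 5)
Your proposal is correct and follows essentially the same route as the paper's proof: reduce non-existence of a finite MLE to the event that an entire complementary block of the latent Poisson entries vanishes, bound that event's probability by $\exp(-c\,\Lambda)$, lower-bound the block's total rate by $\lambda_{-2}(\mathcal L^\ast)\,|S||\bar S|/(m+n)$ via the Laplacian quadratic form on the cut indicator, and finish with a union bound over cuts. The only (immaterial) differences are that you use the exact identity $\Pr[\mathrm{Poisson}(\Lambda)=0]=e^{-\Lambda}$ where the paper invokes Bennett's inequality to get $e^{-\Lambda/2}$, and you carry out the union bound directly by subset size rather than citing the Simons--Yao counting lemma; both choices still clear the $2/\sqrt{m+n}$ target.
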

Note that the sufficient condition \eqref{eq:Fisher-bound} is stated in terms of the Hessian evaluated
at the \emph{true} parameters $(u^{\ast},v^{\ast})$, which is essentially
the Fisher information matrix modulo a constant factor. When is \eqref{eq:Fisher-bound} satisfied? As discussed before, for complete graphs $\lambda_{-2}(\mathcal{L}^\ast) = \mathcal{O}(\min\{m,n\})$, so \eqref{eq:Fisher-bound} is satisfied. As another example, an Erd{\"o}s-R{\'e}nyi graph with parameter $p$ of order $\Omega(\frac{\log(m+n)}{m+n})$ also satisfies \eqref{eq:Fisher-bound} with high probability. See for example \citet{bong2022generalized}. 


Our results in this section provide useful insights about the MLEs of our biproportional Poisson model.
However, IPF convergence is a prerequisite for IPF to recover these MLEs. 
Now, we move onto the next natural and important question: what should one do in practice if IPF does \textit{not} converge?

\section{Guaranteeing IPF Convergence}
\label{sec:convergence}

IPF non-convergence tends to occur when there are many zeros in the inputs 
\citep{wong1992geographer}, and such sparsity is very common in real-world network data, partially due to missing values.
In fact, we show that, when trying to infer $\Xtime$ from $\ptime$, $\qtime$, and $\Xagg$, for some aggregated time period that includes $t$, IPF will not converge \textit{only if} there are missing entries in the inputs (Corollary \ref{cor:timeagg-converge}).
For example, in mobility data, true visits may be missed due to noisy GPS signals, populations not carrying cell phones \citep{coston2021facct}, or data ``clipping'' where low values are replaced with zeros to preserve privacy \citep{safegraph-patterns}.
We discuss these mechanisms in depth in Appendix \ref{sec:app-safegraph}.

Thus, we approach IPF non-convergence from the perspective of missing data: specifically, in initial matrix $X$, and we resolve non-convergence by adding edges to $X$.
This is a similar view to the typical solution for IPF non-convergence, which is to replace all zeros in $X$ with very small amounts \citep{lomax2015geographer}. 
However, the typical approach may result in unrealistic positive entries, such as drivers under the age of 16 when inferring joint demographics \citep{lovelace2015microsim} or nonexistent routes in transportation networks.
Furthermore, in our setting, replacing all zeros with positive entries completely alters the sparsity structure of the inferred network, which can greatly affect downstream results, such as modeling epidemics \citep{wang2003eigenvalue}.
Instead, we introduce a new algorithm, \texttt{ConvIPF}, that guarantees IPF convergence by adding edges while minimizing changes to the network structure, where we explore two different definitions of network change below.
Even though we focus on network inference, \texttt{ConvIPF} can be used in any application of IPF where non-convergence may be caused by missing values in the initial matrix $\ipfmat$.

\paragraph{Overview of \texttt{ConvIPF}.}
Two equivalent conditions that define when IPF converges are \citep{pukelsheim2014}:
\begin{enumerate}[nolistsep]
    \item There exists a matrix $Y$ with row sums $\ipfrow$ and column sums $\ipfcol$ such that $Y_{ij} = 0$ wherever $X_{ij} = 0$.
    \item For all row subsets $S \subseteq [m]$, $\sum_{i \in S} \ipfrow_i \leq \sum_{j \in N_X(S)} \ipfcol_j$, where $N_\ipfmat(S)$ represents the set of columns connected to $S$ in $\ipfmat$. 
\end{enumerate}
Condition (1) yields an efficient algorithm, which we call \texttt{MAX-FLOW}, for testing whether IPF will converge.
The algorithm, as described in \citet{idel2016review} and Appendix \ref{sec:convergence-test}, requires one round of max-flow on a graph constructed from the IPF inputs. 
If the resulting flow is equal to $\sum_i \ipfrow_i$, then IPF converges. 
Condition (2) is useful because it allows us to diagnose why IPF is not converging: if IPF does not converge, there must be at least one ``blocking set'' of rows for which the condition is violated.

The key idea of \texttt{ConvIPF} is that we can unblock a blocking set of rows by adding new edges in $\ipfmat$ for those rows, 
but we seek edge additions that modify $\ipfmat$ \textit{as little as possible}.
After we unblock one blocking set, there may be more.
So, \texttt{ConvIPF} iteratively identifies a blocking set and modifies $\ipfmat$ to unblock it, until there are no blocking sets remaining.
Our algorithm thus repeats three subroutines, \texttt{MAX-FLOW}, \texttt{BLOCKING-SET}, and \texttt{MODIFY-X}, until IPF converges:
\begin{enumerate}[nolistsep]
    \item Run \texttt{MAX-FLOW} to test for convergence. If IPF converges, then the algorithm is finished. If IPF does not converge, move on to Step 2.
    \item Since IPF does not converge, run \texttt{BLOCKING-SET} to identify a blocking set of rows, $S$.
    \item Run \texttt{MODIFY-X} to unblock $S$ by minimally adding edges to $\ipfmat$.
\end{enumerate}

\texttt{ConvIPF} must terminate since (i) it is always possible to unblock a row set (by connecting it to all columns) and (ii) an unblocked row set cannot become blocked through subsequent edge additions.
Furthermore, even though there are $2^m$ possible row subsets, the algorithm will terminate within $mn$ iterations, since IPF converges when all entries in $\ipfmat$ are positive \citep{pukelsheim2014} and each \texttt{MODIFY-X} adds at least one positive entry to $\ipfmat$.
Thus, as long as each subroutine runs in polytime, the entire algorithm runs in polytime.
However, \texttt{BLOCKING-SET} and \texttt{MODIFY-X} both try to solve combinatorial problems, so the challenge is how to efficiently solve each one.
We provide brief sketches of each in this section, with details in Appendix \ref{sec:convergence-algo}.

\paragraph{\texttt{BLOCKING-SET}: identifying a blocking set.}
Given inputs $\ipfmat$, $\ipfrow$, and $\ipfcol$, where we know IPF does not converge, this subroutine identifies a blocking set of rows $S$ for which Condition (2) is violated.
The naive approach to iterate through all subsets until a violation is found, but this approach is extremely inefficient, as there are $2^m$ possible subsets.
Instead, our subroutine imports ideas from constricted sets in bipartite matching \citep{hall1935subset,easleykleinberg} to design a much more efficient algorithm. 
First, construct a bipartite graph $B$ where the nodes are the rows and columns of $\ipfmat$ and they are connected wherever $\ipfmat_{ij} > 0$.
From running \texttt{MAX-FLOW} to test for convergence, we have flow values for each row/column.
Since IPF does not converge, there must be at least one row $i$ whose flow is less than its capacity, $\ipfrow_i$.
Run the following variant of breadth-first search (BFS) on $B$ starting from node $n_i$.
When progressing from a column node $n_C$ to its neighboring row node $n_R$, only include row nodes where $n_R \rightarrow n_C$ has non-zero flow in \texttt{MAX-FLOW}.
When progressing from row nodes to column nodes, include all (unvisited) neighbors.
When BFS terminates, the set of row nodes visited forms a blocking set, which we prove in Appendix \ref{sec:app-blocking-set}.

\paragraph{\texttt{MODIFY-X}: unblocking a blocking set.}
\label{sec:modify-x}
Given a blocking set $S$, this subroutine minimally adds edges to $\ipfmat$ to unblock $S$. 
Let $\ipfmat^K$ represent the modified $\ipfmat$ after adding new edges $K$ and let $f(\ipfmat, \ipfmat^K)$ represent the change in $\ipfmat$ that we are trying to minimize.
Then, our goal is to find $K^*$ that minimizes $f(\ipfmat, \ipfmat^{K^*})$, subject to $S$ being unblocked under $\ipfmat^{K^*}$, i.e., $\sum_{i \in S} p_i \leq \sum_{j \in N_{\ipfmat^{K^*}}(S)} q_j$.
We consider two natural definitions of $f(\ipfmat, \ipfmat^K)$.

\textit{1. Number of new edges.}
Let $\bar{N}_\ipfmat(S)$ represent the set of columns \textit{not} connected to $S$ in $\ipfmat$, and let $\delta$ represent the gap in marginals, $\delta := \sum_{j \in N_\ipfmat(S)} \ipfcol_j - \sum_{i \in S} \ipfrow_i$.
Take the top-$k$ columns in $\bar{N}_\ipfmat(S)$, ordered by $\ipfcol_j$ in descending order, that satisfy $\sum_{j=1}^k \ipfcol_j \geq \delta$.
Then, any set of edges between a row in $S$ and these $k$ columns will unblock $S$, while minimizing the number of new edges added.

\textit{2. Change in $\lambda_1$.}
A more nuanced objective minimizes the change in the relevant spectral properties of $\ipfmat$.
Motivated by the application of epidemic spread, recall that the epidemic threshold of a network is closely related to the largest eigenvalue $\lambda_1$ of its adjacency matrix \citep{wang2003eigenvalue} and attempts to reduce spreading aim to minimize $\lambda_1$ through edge removals \citep{saha2015radius,li2023sdm}.
So, to preserve a ``similar'' network from a spreading standpoint, we seek to minimize change in $\lambda_1$.
We show in Appendix \ref{sec:app-modify-x} that, with reasonable approximations of change in $\lambda_1$ \citep{tong2012gel}, unblocking $S$ while minimizing change in $\lambda_1$ reduces to the following: first, find the row $i^*$ in $S$ with the smallest $\vec{u}_1(i)$, where $\vec{u}_1$ and $\vec{v}_1$ are the left and right eigenvectors of $\lambda_1(\ipfmat)$, respectively.
Then, solve an integer linear program to find the set of columns $J \subseteq \bar{N}_\ipfmat(S)$ that minimize $\sum_{j \in J} \vec{v}_1(j)$, subject to $\sum_{j \in J} \ipfcol_j \geq \delta$.
The set of new edges is $\{(i^*, j) | j \in J\}$, which unblocks $S$ and approximately minimizes change in $\lambda_1$.
\section{Experiments with Data}
\label{sec:empirics}
We now summarize our experiments with synthetic and real-world data. Our experiments reveal the utility of our theoretical and algorithmic contributions, and demonstrate IPF's capability to infer networks in practice. 
\begin{figure}
    \centering
    \includegraphics[width=\linewidth]{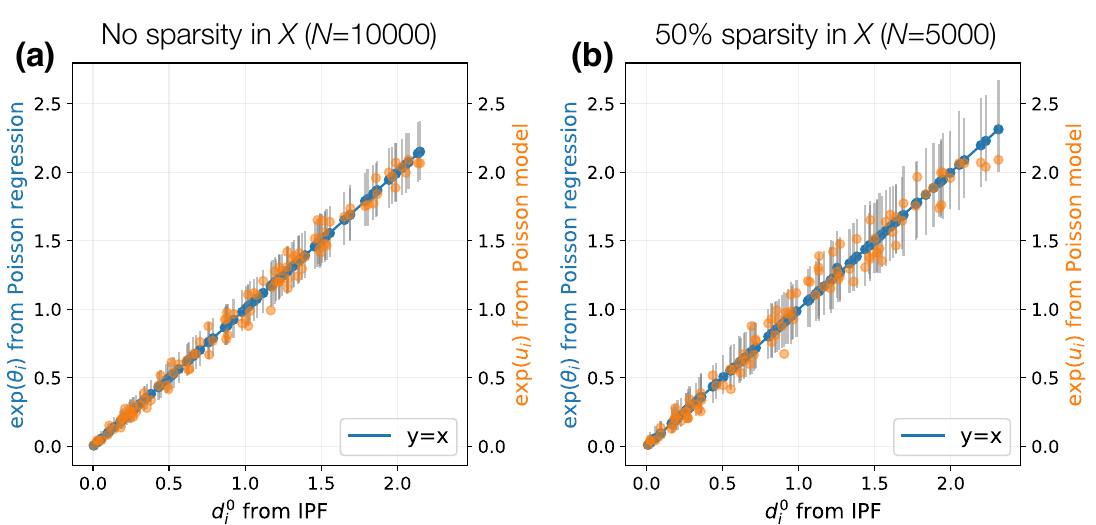}
    \caption{Comparing inferred parameters from IPF (x-axis) against inferred parameters from Poisson regression (left y-axis, blue) and true parameters from Poisson model (right y-axis, orange). 
    Grey bars indicate 95\% CIs from Poisson regression.
    $N$ represents the number of nonzero entries in $X$, so $N$ is halved with 50\% sparsity.
    Under both networks, estimated parameters from IPF and Poisson regression are perfectly aligned (Theorem \ref{thm:model}), but their estimation quality worsens with greater sparsity (Theorem \ref{thm:mse}).
    }
    \label{fig:synthetic-ipf-poisson}
\end{figure}

\paragraph{Testing IPF with synthetic data.} 
In our first set of experiments, we use synthetic data generated from our network model \eqref{eqn:model} to confirm the correspondence between IPF and Poisson regression.
In Figure \ref{fig:synthetic-ipf-poisson}, we show that the parameters inferred by IPF and Poisson regression align perfectly, validating Theorem \ref{thm:model}.
Furthermore, our Poisson model enables us to quantify uncertainty in the parameter estimates under the model, adding valuable interpretability missing from IPF output alone. 
We display 95\% confidence intervals (CIs) in Figure \ref{fig:synthetic-ipf-poisson}; while these CIs are only asymptotically valid, they provide useful measures of uncertainty. 

\begin{figure*}
    \centering
    \includegraphics[width=0.75\linewidth]{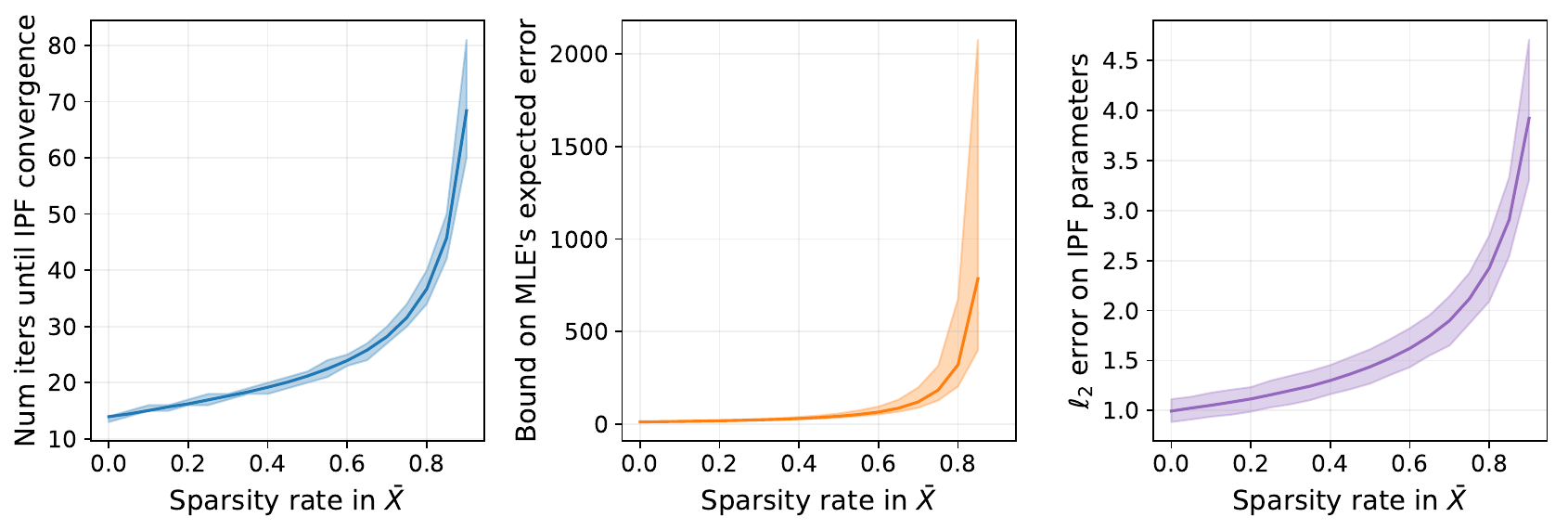}
    \caption{Comparing sparsity rate in $\Xagg$ to number of IPF iterations (left), bound on MLE's expected estimation error, without constants (middle), and observed $\ell_2$ error of IPF estimates (right). Lines represent mean and shaded region represents 95\% CIs over 1000 trials.}
    \label{fig:ipf-sparsity}
\end{figure*}
We also test IPF in more difficult settings, such as with increased sparsity in $\Xagg$.
We generate $\Xagg$ with a given sparsity rate $r$ by randomly selecting $r \cdot mn$ entries and setting them to 0, and test IPF on varying sparsity rates.
As expected, we find that the widths of the CIs increase with greater sparsity (Figure \ref{fig:synthetic-ipf-poisson}b), since greater sparsity results in fewer Poisson observations to fit our model \eqref{eqn:model}.
We also find that, despite matching the target marginals in all cases, the $\ell_2$ distance between IPF's inferred parameters and the true parameters increases quickly as we increase the sparsity in $\Xagg$ (Figure \ref{fig:ipf-sparsity}, right).
These findings align with Theorem \ref{thm:mse}, where we showed that the bound on the MLEs' expected estimation error \eqref{eq:expected-risk-bound} improves as $\Xagg$ becomes more well-connected.

We also use synthetic data to test IPF under model misspecification and find that IPF is reasonably robust to model modifications in this network inference setting (Appendix \ref{sec:synthetic-misspecification}). 
When the model is correctly specified (i.e., data is generated from our biproportional Poisson model), the cosine similarity between the true network $\Xtime$ and IPF estimate of the network $\estXtime = D^0 \Xagg D^1$ is, on average, $0.911$ (Table \ref{tab:ipf-non-poisson}). 
If we replace the Poisson with an exponential distribution, the cosine similarity only decreases to $0.855$; if we replace it with a negative binomial distribution ($\gamma = 0.5$), it decreases to $0.843$.
IPF is similarly robust when we test it on data from an ``interaction model'' \eqref{eqn:interaction}, which allows $\Xtime_{ij}$ to additionally depend on interaction terms between rows and columns, such as distance (Figure \ref{fig:interaction-model-cosine}).

\paragraph{IPF convergence on mobility data.}
To test IPF convergence, we use mobility data from SafeGraph \citep{safegraph-patterns}.
Here, we seek to infer the hourly visit network from neighborhoods to points-of-interest (POIs), so the marginals represent hourly total visits from neighborhoods and to POIs,  and $\Xagg$ represents the time-aggregated network. 
SafeGraph data provides a real-world example where only the hourly marginals and time-aggregated network are provided, with missing data in the time-aggregated network due to underreporting and data clipping (see Appendix \ref{sec:app-safegraph}).
We use mobility data from the Richmond metro area in Virginia, which has 9917 POIs and 1098 neighborhoods \citep{chang2021kdd}.
Despite aggregating over 10 months (January to October 2020), $\Xagg$ remains sparse: only 8\% of its entries are non-zero. 
From running IPF on two days (48 hours), we find that IPF does not converge for three nighttime hours when POI marginals are particularly sparse.


Using one of these hours---2AM on March 2, 2020---as an example, we apply our algorithm \texttt{ConvIPF} and evaluate the change in  $\Xagg$. 
We compare our solutions to the typical solution for IPF non-convergence, which is to replace all zeros with a very small value $\epsilon$ \citep{lomax2015geographer,lovelace2015microsim}.
If minimizing the number of new edges added, \texttt{ConvIPF} only adds $2$ edges, while the typical solution results in $10012193$ new edges.
If minimizing the change in $\lambda_1$, \texttt{ConvIPF} only changes it by $5.60 \cdot 10^{-9}$, while the typical solution changes $\lambda_1$ by $30.04$.
The magnitude of reduction achieved by \texttt{ConvIPF} is similar for the other two hours (Table \ref{tab:convergence-results}).
Furthermore, we find that the choice of objective in \texttt{ConvIPF} makes a difference:  choosing to minimize the number of edges always results in the fewest number of edges added and choosing to minimize the change in $\lambda_1$ always results in the smallest change in $\lambda_1$, by an order of magnitude compared to the other \texttt{ConvIPF} variants.
Finally, \texttt{ConvIPF} terminates quickly in practice, typically requiring 5 or fewer iterations.

\paragraph{Ground-truth networks from bikeshare data.}
Using data from New York City's Citibike system, we can construct \textit{ground-truth} hourly networks that record the number of bike trips between stations.
Acquiring ground-truth networks enables us to test IPF's ability to infer these networks, compared to baselines, which assesses the downstream utility of our work in bridging IPF to this network inference problem.
We test IPF's inferred networks, given hourly marginals and time-aggregated networks at the month-, week-, and day-level. 
As a baseline, we try the classic gravity model, which assumes that the amount of travel between two regions is proportional to their inverse distance \citep{zipf1946intercity,erlander1990gravity}.
We fit the ``doubly constrained gravity model'' \citep{navick1994distance}, which is equivalent to running IPF on a distance matrix, instead of the time-aggregated network, and the hourly marginals (Appendix \ref{sec:eval-ground-truth}).
This baseline enables us to test how much additional information the time-aggregated network provides, beyond what is captured by common geographical assumptions.
We also test other baselines:
\begin{itemize}[nolistsep]
    \item An ablation that removes $\Xagg$ and distributes $\sum_i \ptime_i$ proportional to $\ptime_i \qtime_j$;
    \item An ablation that removes $\qtime$ and distributes $\ptime_i$ within each row proportional to $\Xagg_{ij} / \sum_j \Xagg_{ij}$;
    \item An ablation that removes $\ptime$ and distributes $\qtime_j$ within each column proportional to $\Xagg_{ij} / \sum_i \Xagg_{ij}$.
\end{itemize}
\begin{figure}
    \centering
    \includegraphics[width=0.99\linewidth]{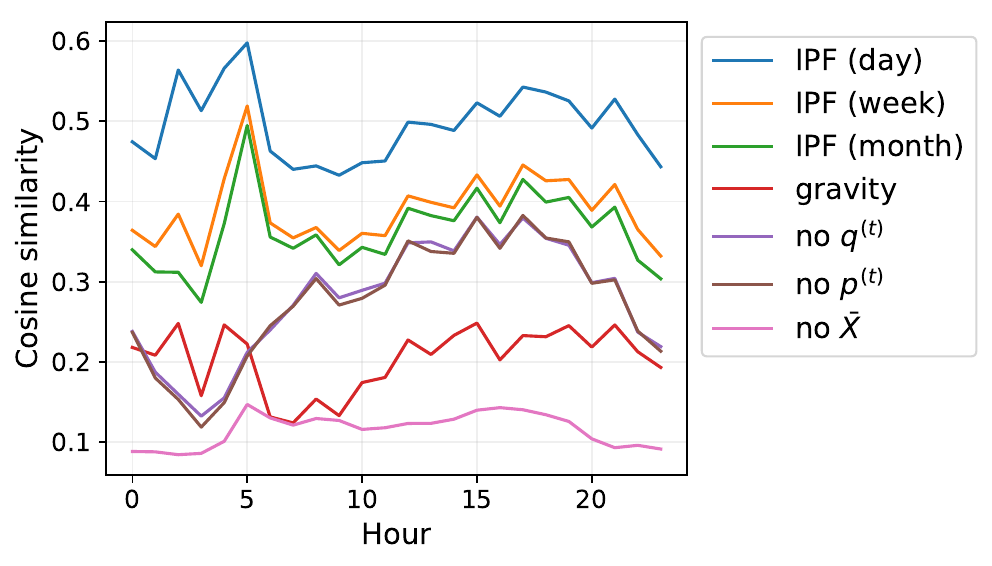}
    \caption{Cosine similarity between ground-truth hourly networks from bikeshare data and inferred networks from IPF and baselines.}
    \label{fig:ipf-bikeshare}
\end{figure}
We present main results in Figure \ref{fig:ipf-bikeshare}. 
First, we find that IPF strongly outperforms the gravity model, with a $78\%$ improvement in cosine similarity even when using the month-aggregated network.
Second, IPF also outperforms the ablation baselines: when $\Xagg$ is the month-aggregated network, IPF outperforms the ablation without $\Xagg$ by $214\%$ and the ablations without $\ptime$ or $\qtime$ by around $31\%$. 
Third, finer temporal granularity in the time-aggregated network improves IPF's performance, but the relative improvement is much larger from week to day than from month to week.
This suggests that bike trips vary more at a daily scale within the week (e.g., weekday vs. weekend) than a weekly scale within the month.
Overall, these results demonstrate the effectiveness of IPF on this difficult network inference problem as well as the benefits of more granular data.
\section{Conclusion}
In this work, we have established a statistical framework for using IPF to infer a dynamic network from its marginals. 
Our primary contribution is the biproportional Poisson model. We show that IPF uniquely recovers the MLEs of this model and derive statistical results on the MLEs.
Our model not only provides justification for using IPF to infer dynamic networks, but also enables new analyses of IPF, clarifies implicit network assumptions, and creates natural paths for testing alternate models and misspecification.
We also introduce \texttt{ConvIPF}, a principled algorithm that guarantees IPF convergence on sparse data, which can be broadly applied wherever IPF is used. 
Our empirics confirm our theoretical results, demonstrate the value of our modeling and algorithmic contributions, and reveal IPF's ability to infer networks in practice.
Given the long history of IPF and richness of the network inference problem, connecting them generates many avenues for future research.
Some future directions include characterizing how IPF's ability to infer networks varies based on the network's temporal dynamics (e.g., periodicity),
designing variants of IPF that outperform traditional IPF on this network inference problem, and providing a statistical interpretation of our convergence algorithm. 
Lastly, we note that the biproportional Poisson model is applicable more generally to other instances of the matrix balancing problem beyond network settings.

\section*{Acknowledgements}
S.C.\ was supported in part by an NSF Graduate Research Fellowship, the Meta PhD Fellowship, and NSF award CCF-1918940. F.K.\ was supported in part by NSF award CCF-1704417, NSF award IIS1908774, and N.~Anari’s Sloan Research Fellowship. Z.Q.\ was supported in part by ONR Grant N00014-19-1-2468 and NSF CAREER Award IIS-2143176. J.U.\ was supported in part by NSF CAREER Award IIS-2143176. J.L.\ was supported in part by NSF awards OAC-1835598, CCF-1918940, DMS-2327709, and Stanford Data Applications Initiative.
The authors thank Alfred Galichon, Emma Pierson, and Arjun Seshadri for helpful discussions and comments as well as Devin Caughey, Guido Imbens, Pang Wei Koh, and members of Jure Leskovec's lab for helpful feedback on early versions of this work.

\section*{Impact Statement}
The primary goal of our work is to advance understanding of IPF and dynamic network inference, with contributions to machine learning and applied statistics. 
However, our work is deeply motivated by problems of societal importance, such as epidemic response and transportation planning.
Prior work has required inferring dynamic networks from their marginals in order to model COVID-19 spread over detailed mobility networks, inform pandemic interventions, and analyze socioeconomic disparities in infection rates \citep{chang2021nature,chang2021kdd,chaudhuri2022overdispersion,li2023sdm,alimohammadi2023working}.
This dynamic network inference problem also appears in transportation \citep{carey1981method}, communication \citep{kruithof1937telefoonverkeersrekening}, and human migration \citep{plane1982information,pham2022migrant}.
Our work provides broadly applicable insights for practitioners in all of these domains, allowing them to fully leverage the empirical advantages of IPF with a deeper understanding of its assumptions and behavior (e.g., when it is justified, how to interpret its estimates, how to ensure convergence).
Another advantage of IPF is \textit{privacy}: it enables practitioners to estimate detailed networks while data providers only need to release aggregated data, i.e., only the time-varying marginals and the time-aggregated network.
Due to such limited information, IPF will only reconstruct the true network under very restrictive assumptions (Theorem \ref{thm:special}), but otherwise, it will infer a \emph{realistic} network that is correlated with the real network, enabling analyses that require time-varying networks without revealing private information.

\bibliography{references}
\bibliographystyle{icml2024}

\newpage
\appendix
\onecolumn
\section*{Appendix}
\renewcommand\thefigure{\thesection.\arabic{figure}}   
\setcounter{figure}{0}
\renewcommand\thetable{\thesection.\arabic{table}}
\setcounter{table}{0}

\section{Details on IPF Implementation}
\label{sec:app-ipf}
In this section, we provide details on our implementation of IPF, along with pseudocode in Algorithm \ref{alg:ipf}.
As discussed in the main text, IPF aims to solve the \textit{matrix balancing problem} \citep{deming1940ipf,schneider1990comparative}:
\begin{quote}
    Given positive vectors $\ipfrow \in\mathbb{R}_{++}^{m}, \ipfcol \in\mathbb{R}_{++}^{n}$ with
    $\sum p_{i}=\sum q_{j}$ and non-negative matrix $\ipfmat \in\mathbb{R}_{+}^{m\times n}$,
    find positive diagonal matrices $D^{0}$, $D^{1}$ satisfying the
    conditions $D^{0} \ipfmat D^{1}\cdot\revised{\mathbf{1}_n}=\ipfrow$ and $D^{1} \ipfmat^{T} D^{0}\cdot\revised{\mathbf{1}_m}=\ipfcol$. 
\end{quote}
IPF learns the scaling factors $\ipfparamrow$ and $\ipfparamcol$, which are diagonals of $D^0,D^1$, by alternating between scaling the rows to match $\ipfrow$, then scaling the columns to match $\ipfcol$:
\begin{align}
    \ipfparamrow_i(k+1) &= \frac{\ipfrow_i}{\sum_j \ipfmat_{ij} \ipfparamcol_j(k)}, \quad 
    \ipfparamcol_j(k+2) = \frac{\ipfcol_j}{\sum_i X_{ij} \ipfparamrow_i(k+1)}. \nonumber 
\end{align}
We denote by $\ipfestmat(k):= D^0(k)XD^1(k)$ the scaled matrix after the $k$-th iteration. 
The convergence behavior depends on the problem structure: $(D^0(k),D^1(k))$ can converge to a solution of the matrix balancing problem; $(D^0(k),D^1(k))$ can diverge but $\ipfestmat(k)$ converges; or $\ipfestmat(k)$ oscillates between accumulation points \citep{pukelsheim2009ipf}.
Furthermore, it is known that there are at most \textit{two} accumulation points, so if IPF does not converge, it oscillates between two solutions, one that matches the target row marginals $\ipfrow$ and one that matches the target column marginals $\ipfcol$ \citep{csiszar1984alternating,aas2014limits}.
Thus, in our implementation of IPF (Algorithm \ref{alg:ipf}), we check for two stopping conditions, one of which must be met eventually: either IPF converges, such that $\ipfestmat(k) \approx \ipfestmat(k+1)$, or IPF exhibits period-2 oscillation, such that $\ipfestmat(k) \approx \ipfestmat(k-2)$ and $\ipfestmat(k-1) \approx \ipfestmat(k-3)$. 

\paragraph{IPF with zeros in marginals.}
IPF returns a matrix of the form $D^0 X D^1$, where $D^0$ and $D^1$ are positive diagonal matrices.
So, unless the entire row or column of $X$ is 0, IPF solutions cannot naturally match zeros in the target row marginals $p$ or target column marginals $q$.
However, zero marginals are common in real-world data: for example, in mobility data (Section \ref{sec:app-safegraph}), many points-of-interest have zero visits at night, and in bikeshare data (Section \ref{sec:app-bikeshare}), some bike stations have zero trips at night.
So, our implementation of IPF modifies it slightly to allow for non-negative, instead of strictly positive, marginals.
Our version sets $d^0_i = 0$, for all $p_i = 0$, and $d^1_j = 0$, for all $q_j = 0$, then updates all other entries in $d^0$ and $d^1$ as usual, as described in \eqref{eqn:ipf-update}.
We show below that this is still a valid IPF procedure and all guarantees of IPF hold, because this procedure is \textit{equivalent} to running the original IPF procedure on $\tilde{X}$, $\tilde{p}$, and $\tilde{q}$, where $\tilde{X}$ is a submatrix of $X$ that leaves out the rows and columns with zero marginals, $\tilde{p}$ contains the nonzero entries in $p$, and $\tilde{q}$ contains the nonzero entries in $q$.

For some row $i$ where $p_i > 0$, let $d^0_i$ represent IPF's inferred parameter under our modified IPF procedure on $X$, $p$, $q$, and let $\tilde{d}^0_i$ represent IPF's inferred parameter under the original IPF procedure on $\tilde{X}$, $\tilde{p}$, and $\tilde{q}$.
Let $d^1_j$ and $\tilde{d}^1_j$ be defined analogously.
We will prove by induction that, for all iterations $k$, $d^0_i(k) = \tilde{d}^0_i(k)$, $\forall i$ s.t. $p_i > 0$, and  $d^1_j(k) = \tilde{d}^1_j(k)$, $\forall j$ s.t. $q_j > 0$.
First, in the base case, $d^0_i(0)$, $\tilde{d}^0_i(0)$, $d^1_j(0)$, and $\tilde{d}^1_j(0)$ are all initialized to 1.
Now, assuming the statement holds up to iteration $k$, the next IPF update is
\begin{align}
    d^0_i(k+1) &= \frac{p_i}{\sum_j X_{ij} d^1_j(k)} = \frac{p_i}{\sum_{j; q_j > 0} X_{ij} \tilde{d}^1_j(k)} = \tilde{d}^0_i(k+1). \label{eqn:ipf-nonnegative} 
\end{align}
In the denominator, we can drop all the terms where $q_j = 0$, since in our modified algorithm, we set $d^1_j = 0$ if $q_j = 0$.
Furthermore, since we are only considering $j$ where $q_j > 0$, then we can replace $d^1_j(k)$ with $\tilde{d}^1_j(k)$, based on the inductive hypothesis.
A similar proof follows to show the inductive step for $d^1_j$ and $\tilde{d}^1_j$.

Recall that in the connection between IPF and our network model \eqref{eqn:model}, the number of nonzero entries in $\bar{X}$ is our number of Poisson observations.
One implication of this equivalence between modified IPF for non-negative marginals and original IPF on the submatrix is that zero marginals substantially reduce our number of observations, since the submatrix drops entire rows and columns.
Thus, for a given hour $t$, our set of observations consists of $\{(i,j) | i \in [m], j \in [n], \Xagg_{ij} > 0, \ptime_i > 0, \qtime_j > 0\}$, which can be much fewer than $mn$ observations, given high levels of sparsity in real-world $\Xagg$, $\ptime$, and $\qtime$.
Fewer observations result in less accurate and more uncertain parameter estimates, making explicit the connection between data sparsity and quality of IPF estimates, which we also demonstrate empirically in Section \ref{sec:app-empirics}, especially Figure \ref{fig:ipf-sparsity}.

\begin{algorithm}[tb]
   \caption{Our implementation of the iterative proportional fitting procedure.}
   \label{alg:ipf}
\begin{algorithmic}
   \STATE {\bfseries Input:} matrix $X$, row marginals $p$, column marginals $q$, tolerance $\epsilon$
   \STATE Initialize $converged = false$, $oscillate = false$, $\tau = 1$, $d^0 = \mathbf{1}_m$, $d^1 = \mathbf{1}_n$
   \REPEAT
   \IF{$\tau$ is $odd$}
       \FOR{$i=1$ {\bfseries to} $m$}
           \IF{$p_i = 0$}
           \STATE $d^0_i \leftarrow 0$
           \ELSE 
           \STATE $d^0_i \leftarrow \frac{p_i}{\sum_j X_{ij} d^1_j}$
           \ENDIF
       \ENDFOR
   \ELSE 
       \FOR{$j=1$ {\bfseries to} $n$}
           \IF{$q_j = 0$}
           \STATE $d^1_j \leftarrow 0$
           \ELSE 
           \STATE $d^1_j \leftarrow \frac{q_j}{\sum_i X_{ij} d^0_i}$
           \ENDIF
       \ENDFOR
   \ENDIF
   \STATE $\ipfestmat(\tau) = \textbf{diag}(d^0) X \textbf{diag}(d^1)$
   \IF{$||\ipfestmat(\tau)-\ipfestmat(\tau-1)||_1 < \epsilon$}
   \STATE $converged \leftarrow True$
   \ELSIF{$||\ipfestmat(\tau)-\ipfestmat(\tau-2)||_1 < \epsilon$ \textbf{and} $||\ipfestmat(\tau-1)-\ipfestmat(\tau-3)||_1 < \epsilon$}
   \STATE $oscillate \leftarrow True$
   \ENDIF
   \STATE $\tau \leftarrow \tau+1$
   \UNTIL{$converged$ is $true$ or $oscillate$ is $true$}
\end{algorithmic}
\end{algorithm}
\section{Deriving Our Generative Network Model}
In this section, we provide details on our generative network model. 
In Appendix \ref{sec:dual}, we provide details on the known KL divergence duality result that is key to our following proof.
In Appendix \ref{thm:Poisson-mle}, we derive the log-likelihood of our model, prove that IPF recovers the MLEs of our model (Theorem \ref{thm:model}), and define the equivalent Poisson regression problem.
In Appendix \ref{sec:joint}, we formalize the ``joint'' network inference problem, where $\Xagg = \sum_t \Xtime$, and characterize when the joint problem reduces to the decoupled problem solved by our model.
In Appendix \ref{sec:thm2-proof}, we provide necessary and sufficient conditions for IPF to recover the true network exactly (Theorem \ref{thm:special}).
Finally, in Appendix \ref{sec:app-uniqueness}, we prove that among a larger class of generalized linear models, the Poisson model is the unique one where the MLE is the IPF solution --- so from the perspective of IPF, our generative model is canonical (Theorem \ref{thm:uniqueness}). 
In Figure \ref{fig:summary}, we also summarize many of the conditions we discuss in this work (e.g., when IPF converges, when MLEs are finite) and visualize how they fit together.


\subsection{Duality result for KL divergence minimization}
\label{sec:dual}
For completeness, we provide the details for deriving the dual problem \eqref{eqn:dual} of the KL minimization problem \eqref{eq:KL-minimization}.  Let
$u$ and $v$ be the multipliers of the constraints $\hat{\klmat}\mathbf{1}_{n}=p,\hat{\klmat}^{T}\mathbf{1}_{m}=q$,
respectively. Applying Sion's minimax theorem \citep{sion1958general}, the problem is equivalent to 
\begin{align*}
\min_{\hat{\klmat}}\max_{u,v}\sum_{ij}\hat{\klmat}_{ij}\log\frac{\hat{\klmat}_{ij}}{\ipfmat_{ij}}-\sum_{i}u_{i}(\hat{\klmat}\mathbf{1}_{n}-p)_{i}+\sum_{j}v_{j}(\hat{\klmat}^{T}\mathbf{1}_{m}-q)_{j} & =\\
\max_{u,v}\min_{\hat{\klmat}}\sum_{ij}\hat{\klmat}_{ij}\log\frac{\hat{\klmat}_{ij}}{\ipfmat_{ij}}-\sum_{i}u_{i}(\hat{\klmat}\mathbf{1}_{n}-p)_{i}+\sum_{j}v_{j}(\hat{\klmat}^{T}\mathbf{1}_{m}-q)_{j}
\end{align*}
where strong duality holds because both problems are feasible and bounded. 
Taking the first order condition with respect to $\hat{\klmat}_{ij}$, we obtain 
\begin{align*}
\log\hat{\klmat}_{ij} & =\log \ipfmat_{ij}-1+u_{i}-v_{j},
\end{align*}
 and substituting this back into the objective, we obtain 
\begin{align*}
\max_{u,v}\sum_{ij}\ipfmat_{ij}e^{-1+u_{i}-v_{j}}(-1+u_{i}-v_{j})&-\sum_{i}u_{i}(\sum_{j}\ipfmat_{ij}e^{-1+u_{i}-v_{j}}-p_{i})+\sum_{j}v_{j}(\sum_{i}\ipfmat_{ij}e^{-1+u_{i}-v_{j}}-q_{j})  \\
=\max_{u,v}&-\sum_{ij}\ipfmat_{ij}e^{-1+u_{i}-v_{j}}+\sum_{i}u_{i}p_{i}-\sum_{j}v_{j}q_{j}.
\end{align*}
Finally, using the 
change of variable  $u_{i}=u_i-\frac{1}{2}$ and $v_{j}=v_{j}+\frac{1}{2}$,
we obtain 
\begin{align*}
\max_{u,v}-\sum_{ij}\ipfmat_{ij}e^{u_i-v_j}+\sum_{i}p_{i}u_i-\sum_{j}q_{j}v_{j}-\frac{\sum_{i}p_{i}+\sum_{j}q_{j}}{2} & \Leftrightarrow\\
\min_{u,v}\sum_{ij}\ipfmat_{ij}e^{u_i-v_{j}}-\sum_{i}p_{i}u_{i}+\sum_{j}q_{j}v_{j},
\end{align*}
which we recognize as $g(u,v)$. 

\subsection{Connections to Entropy Regularized Optimal Transport}
\label{subsec:entropy-ot}
In many applications of IPF, the initial matrix $\ipfmat$ arises from some transportation cost function $C$ of the entropic-regularized optimal transport problem associated with \eqref{eq:KL-minimization}. More precisely, with the transformation $\ipfmat =\exp(-C/\varepsilon)$, the KL minimization problem 
\begin{align*}
  \min_{\hat Y\in \mathbb{R}^{n\times m}_+} D_{\text{KL}}(\hat{Y}\| \ipfmat)\\
\hat{Y}\mathbf{1}_m =p, \quad
\hat{Y}^{T}\mathbf{1}_n  =q
\end{align*}
is equivalent to the following regularized optimal transport problem
\begin{align*}
  \min_{\hat Y\in \mathbb{R}^{n\times m}_+} \langle \hat{Y},C\rangle+&\varepsilon D_{\text{KL}}(\hat{Y}\| p\otimes q)\\
\hat{Y}\mathbf{1}_m =p, &\quad
\hat{Y}^{T}\mathbf{1}_n  =q.
\end{align*}
For example, $C$ could be the distance between the origin and the destination or total travel time. In our setting, $\ipfmat$ is given as $\bar{X}$, the time-aggregated network (e.g., number of bike trips between stations over time or number of visits from neighborhoods to places), but we could still interpret it as a surrogate measure for the above mentioned physical quantities. For example, we can expect that the longer it takes to travel between a particular pair of locations, the fewer people on average will travel between those nodes. Therefore, when we do not have access to information about distance or travel cost, $\bar{X}$ can be used as a reasonable surrogate.

We also prefer to directly use $\bar{X}$ instead of writing it as $\exp(-C/\epsilon)$ since in real-world data, $\bar{X}$ contains many zero elements. If we want to interpret it in terms of $C$, we need to allow ``infinite transportation cost'' for some origin-destination pairs. In \cref{sec:synthetic-misspecification}, we also briefly discuss a possible extension of our biproportional Poisson model based on the gravity model of \citet{navick1994distance}, which uses the distance between origin-destination pairs as the cost function $C$, and with $\bar{X}$ the aggregate number of trips as a constant multiplier instead. When we have access to additional information (besides aggregate trips) that can serve as the cost $C$, such a model could be preferable.

\subsection{Proof of Theorem \ref{thm:model}}
\label{thm:Poisson-mle}
\begin{figure}[t]
    \centering
    \includegraphics[width=\linewidth]{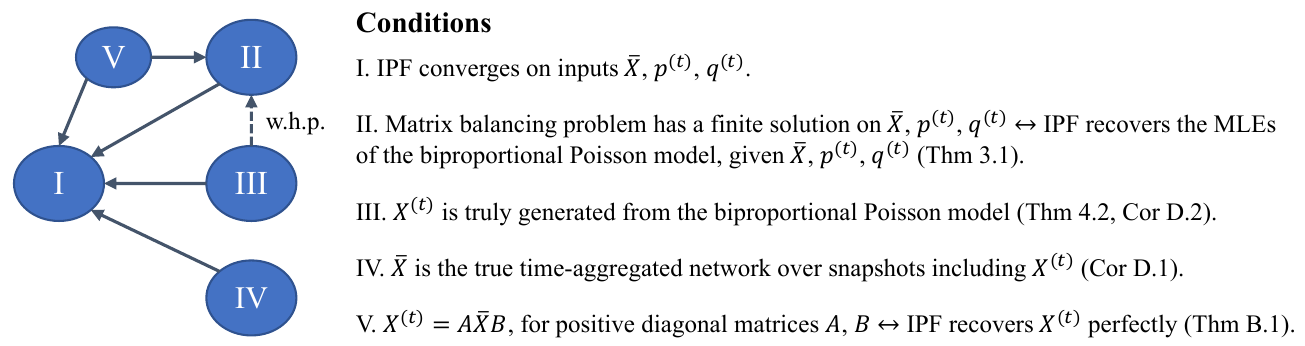}
    \caption{Summary of how the different conditions we discuss in this work fit together. Arrows indicate that one condition implies (i.e., is a sufficient condition for) another.
    Conditions II-V are all sufficient, but not necessary, conditions for IPF to converge. Prior work has also defined several necessary \textit{and} sufficient conditions for IPF to converge \citep{pukelsheim2014}; see Section \ref{sec:convergence-algo} for details. 
    }
    \label{fig:summary}
\end{figure}

For convenience, we restate our biproportional Poisson model \eqref{eqn:model} and Theorem \ref{thm:model} below.
Our model is
\begin{align*}
    \Xtime_{ij} &\sim \begin{cases}
        \mathrm{Poisson}(e^{\paramrow_i} \Xagg_{ij} e^{-\paramcol_j})\textrm{, if }\Xagg_{ij}>0,\\
        0\textrm{, otherwise.}
    \end{cases} \\
    \ptime_i &= \sum_j \Xtime_{ij}, \quad
    \qtime_j = \sum_i \Xtime_{ij}.
\end{align*}
where $\Xagg$ is the time-aggregated network, $\Xtime$ is the time-varying network, $u$ and $v$ are scaling factors, and $\ptime$ and $\qtime$ are the row and column sums of $\Xtime$, respectively.
Our theorem states:
\begin{quote}
Assume that the matrix balancing problem on $\Xagg$, $\ptime$, and $\qtime$ has a finite solution $(D^0,D^1)$. Then $\ipfparamrow$ and $\ipfparamcol$ are limits of the IPF iterations if and only if $\hat{\paramrow} = \log \ipfparamrow$ and $\hat{\paramcol} = -\log \ipfparamcol$ are solutions to the above maximum likelihood problem given $\Xagg$, $\ptime$, and $\qtime$, with log-likelihood $\ell=-g(u,v)$ in \eqref{eqn:dual}, i.e.,
\begin{align*}
\ell(u,v)=\sum_{i}\ptime_{i}u_i-\sum_{j}\qtime_{j}v_{j}-\sum_{ij}\Xagg_{ij}e^{u_{i}-v_{j}}.
\end{align*}
Moreover, this problem is equivalent to the maximum likelihood estimation of a Poisson regression model, and $\ptime,\qtime$ are the sufficient statistics.
\end{quote}

\begin{proof}
Since we know that IPF minimizes $g(u,v)$ in \eqref{eqn:dual}, our goal is to identify a network model whose likelihood is equivalent to $-g(u,v)$.
Several intuitions guide our construction of the model. 
First, \citet{qu2023sinkhorn} recently established connections between IPF and choice modeling, and observed that \eqref{eqn:dual} reduces to the maximum likelihood objective of a general class of choice models based on the Luce choice axiom, which suggests the same may be true for a network model.
Second, it is well-known that finding maximum likelihood parameters $\hat{\theta}$ is asymptotically equivalent to minimizing the KL divergence to the data-generating distribution, i.e., $\min_{\theta} D_{\textrm{KL}}(p_{\mathrm{data}} || p_{\theta})$ \citep{huber1967mle}.
Since the parameters appear on the right side of this KL problem and $X$ appears in the right side of the KL problem implied by IPF \eqref{eq:KL-minimization}, and KL divergence is not symmetric,  then our intuition is that the time-aggregated network $\Xagg$ (which takes the place of $\ipfmat$) should appear in the \textit{parameters} of our network model.
Lastly, the choice of Poisson distribution is natural given its close connections with KL divergence \citep{harremoes2001binomial,renner2013equivalence}, similar to the associations between $\ell_1$ and Laplace, as well as $\ell_2$ and Gaussian distributions.

With these intuitions, we arrived at our biproportional Poisson model. 
Now, we derive the log-likelihood of our model to verify that it matches $-g(u,v)$.
To simplify notation, we will employ the notation $(Y, X, p, q)$ as shorthand for $(\Xtime, \Xagg, \ptime, \qtime)$ throughout. The model's likelihood is given by
\begin{align}
    \mathcal{L}(Y | X, u, v) = \Pi_{i,j; X_{ij} > 0} \frac{(e^{\paramrow_i} \ipfmat_{ij} e^{-\paramcol_j})^{\klmat_{ij}} \exp(-(e^{\paramrow_i} \ipfmat_{ij} e^{-\paramcol_j}))}{\klmat_{ij}!}.
\end{align}
When maximizing the likelihood with respect to $\paramrow$ and $\paramcol$, we can drop the denominator, which is constant in the parameters.
Maximizing the log-likelihood yields the following problem:
\begin{align}
    \label{eqn:model-ll}
    \max_{\paramrow,\paramcol} \sum_{i,j; X_{ij} > 0} \klmat_{ij} \cdot ( \paramrow_i + \log \ipfmat_{ij} - \paramcol_j) - e^{\paramrow_i} \ipfmat_{ij} e^{-\paramcol_j}.
\end{align}
We can also drop $\klmat_{ij} \log(\ipfmat_{ij})$ since it does not depend on $u,v$.
The resulting problem is
\begin{align}
    \max_{\paramrow,\paramcol} &\sum_{i,j; X_{ij} > 0} \klmat_{ij}  \paramrow_i - \klmat_{ij} \paramcol_j - e^{\paramrow_i} \ipfmat_{ij} e^{-\paramcol_j} \\
    &= \max_{\paramrow,\paramcol} \sum_i \paramrow_i (\sum_{j; X_{ij} > 0} \klmat_{ij}) - \sum_j \paramcol_j (\sum_{i; X_{ij} > 0} \klmat_{ij}) -\sum_{i,j; X_{ij} > 0} e^{\paramrow_i} \ipfmat_{i,j} e^{-\paramcol_j} \\
    &= \max_{\paramrow,\paramcol} \sum_i \paramrow_i \ipfrow_i - \sum_j \paramcol_j \ipfcol_j - \sum_{ij} e^{\paramrow_i} \ipfmat_{ij} e^{-\paramcol_j}, \label{eqn:poisson-marginals}
\end{align}
which is equivalent to maximizing $-g(u,v)$.
Since IPF minimizes $g(u,v)$ (when the matrix balancing problem has a finite solution), it must also maximize the likelihood of our model, so we have proven the first part of our statement.
Furthermore, observe that \eqref{eqn:poisson-marginals} implies that the marginals of $\klmat$ are sufficient statistics for our network model, conveniently aligning with the problem constraints from IPF.

Next, we show that the network inference problem is equivalent to a Poisson \emph{regression} problem. Although our Poisson network model and Poisson regression are, not surprisingly, close relatives, it is instructive to precisely illustrate their connections and distinctions. Poisson regression is a generalized linear model which defines the \textit{logarithm} of a Poisson variable's expected value as a linear model of input features.
Generically, let $\theta \in \mathbb{R}^d$ represent the parameters of a Poisson regression model, $\mathbf{x} \in \mathbb{R}^d$ represent input features, and $y$ represent the observed non-negative count data.
Ignoring constants, the log-likelihood of observing $y$ under the Poisson regression model is
\begin{align}
    y \cdot \theta^T\mathbf{x} - e^{\theta^T\mathbf{x}}.
\end{align}
To match this to the log-likelihood of our network model in \eqref{eqn:model-ll}, for each sample indexed by $i,j$ with $\ipfmat_{ij}>0$, we set the input features $\mathbf{x}$ to be $[\mathbf{e}_i, \mathbf{e}_j, \log \ipfmat_{ij}]$, where $\mathbf{e}_i \in \{0, 1\}^m$ is a vector of all zeros aside from a 1 in the $i$-th position and $\mathbf{e}_j \in \{0, 1\}^n$ is a vector of all zeros aside from a 1 in the $j$-th position. 
Observe again that since this construction relies on $\log \ipfmat_{ij}$, the Poisson regression model only applies to $\klmat_{ij}$ where $\ipfmat_{ij} > 0$. We then set the dependent variable to be $y=\klmat_{ij}$. Lastly, it is obvious that we should set the parameter $\theta \in \mathbb{R}^d$ with $d = m + n + 1$ as $\theta=[ \paramrow, -\paramcol,1]$. We can verify that the log-likelihood of this Poisson regression model is equal to the objective in \eqref{eqn:model-ll}, and that $p,q$ are the sufficient statistics of the model. To perform Poisson regression, we may need a set of values $\klmat_{ij}$ for $\ipfmat_{ij}>0$ that are consistent with the marginals. This can be achieved exactly by running the max flow algorithm on the bipartite graph defined by $X$ \citep{idel2016review}. For details on the maximum flow algorithm, see Section \ref{sec:convergence-test}.
\end{proof}

\paragraph{Other settings where IPF recovers MLEs.}
A popular application of IPF is to estimate contingency tables, where it is known that IPF can be used to derive MLEs for certain log-linear models of contingency tables \citep{bishop1969contingency,bishop1974discrete,little1991models,little1993poststrat}. 
For example, a related formulation using $\lambda_{ij} = e^{\paramrow_i} \Xagg_{ij} e^{-\paramcol_j}$ for contingency tables is studied by \citet{little1991models}. However, instead of Poisson random variables, $\Xtime_{ij}$ are treated as probability parameters of a target population. Data from a two-step sampling procedure fitted using IPF then yields the MLE. 
Recently, \citet{qu2023sinkhorn} revealed interesting connections of IPF to algorithms in the choice modeling literature, where it is shown that IPF can be used to compute MLEs of a general class of choice models based on Luce's axiom of choice  \citep{luce1959individual}. 
Our network model is closely related to such choice models, and our statistical theory results also draw inspirations from corresponding works in the choice literature \citep{seshadri2020learning,bong2022generalized}. 

\paragraph{Comparison to collective graphical models.}
\citet{sheldon2011cgm} introduce the problem of collective graphical models (CGMs): how to fit a model of individual behavior from aggregate data (e.g., counts). 
The general problem that CGM addresses shares strong similarities with the dynamic network inference problem in our work, where we are also trying to estimate finer-grained information (time-varying networks) based on coarser, aggregate data (time-varying marginals and time-aggregated network) due to privacy or data collection costs. 
A common feature is that the available aggregate data provides the sufficient statistics of the model, which enables valid estimation and inference. An additional algorithmic connection between our work and CGM is that IPF has also been used in the inference of CGM \citep{singh2020opt}.

However, a notable difference is that CGM is typically applied in settings where only the time-varying marginals are known, without access to the time-aggregated network. 
For example, several papers use CGMs to estimate population flows given the number of people in each area over time \citep{iwata2017flow,akagi2018ijcai,iwata2019neural}. 
Furthermore, CGM seeks to relate population-level observations to a model of individual behavior (e.g., a single person’s mobility or a bird’s migration patterns), while we are not trying to estimate individual models. 
Instead of disaggregating over populations, we seek to disaggregate over time, by taking the time-aggregated network and distributing it over smaller time intervals (e.g., from monthly to hourly). 
Finally, the specific mathematical formulations of the problems in CGM and in the network inference problem seem to be distinct. 
CGM is built on probabilistic graphical models of independent and identically distributed individuals. It aims to estimate the parameters of random vectors on nodes of a graph given only aggregate counts of specific states using MAP inference. 
In contrast, our biproportional Poisson model is used to model the random population flows between nodes of a graph using maximum likelihood estimation, which is solved via IPF.

\subsection{The ``joint'' network inference problem}
\label{sec:joint}

In our biproportional Poisson model \eqref{eqn:model}, we used the time-aggregated network $\Xagg$ as the baseline intensity in the Poisson parameter $e^{u_i} \Xagg_{ij} e^{-v_j}$. We then solve many ``decoupled'' problems, where we fit the model separately for every time step $t$.
However, in doing so we are potentially leaving out additional information implied by the constraint that the Poisson variables aggregated over time should be \emph{equal} to the observed time aggregated traffic $\Xagg$.
This constraint is only applicable when we are inferring $\Xtime$ for the same set of time steps $t$ over which $\Xagg$ is aggregated, so it would not apply, for example, if $\Xagg$ is aggregated from historical data or we are only inferring $\Xtime$ for some subset of time steps.
In cases where it does apply, we consider instead the following ``joint'' model
\begin{align}
\label{eq:joint-Poisson}
Y_{ijt} & =\begin{cases}
\text{Poisson}(X_{ij}e^{-v_{it}+u_{jt}}) & X_{ij}>0\\
0 & X_{ij}=0
\end{cases},
\end{align}
where $Y_{ijt}$ describes the traffic between nodes $i,j$ at time $t$, $X_{ij}$ is an \emph{unknown} parameter that describes the time-invariant propensity of traffic between $i,j$, and $v_{it},u_{jt}$ are now time-dependent intensity parameters that describe the level of activity at different nodes. 

In the cross-sectional problem, we use the observed $\Xagg_{ij}$ which
aggregates over different times in place of $X_{ij}$. In \eqref{eq:joint-Poisson}, instead
of having $X_{ij}$ as an observable, we posit that it is
a model parameter that needs to be estimated, and we observe the
three marginals quantities $\Xagg,P,Q$:
\begin{align*}
    \Xagg_{ij} &=\sum_{t}Y_{ijt},\\ Q_{it}&=\sum_{j}Y_{ijt},\\
P_{jt}&=\sum_{i}Y_{ijt}.
\end{align*}
Importantly, we no longer require that $X_{ij}$
and $\Xagg_{ij}$ have the same zero patterns, as $\Xagg_{ij}=0$ could result from $\text{Poisson}(X_{ij}e^{-v_{it}+u_{jt}})=0$ for all $t$. The relevant part of the log-likelihood function of the joint model \eqref{eq:joint-Poisson} is given by
\begin{align*}
\sum_{ijt}Y_{ijt}\log(X_{ij}e^{-v_{it}+u_{jt}})-\sum_{ijt}X_{ij}e^{-v_{it}+u_{jt}} =&\\
\sum_{ij}\Xagg_{ij}\log X_{ij}-\sum_{it}Q_{it}v_{it}+\sum_{jt}P_{jt}u_{jt}-\sum_{ijt}X_{ij}e^{-v_{it}+u_{jt}},
\end{align*}
where again the marginal quantities $\Xagg,P,Q$ are the sufficient statistics of the model. 
The first order conditions are given by 
\begin{align*}
e^{-v_{it}}\cdot\sum_{j}X_{ij}e^{u_{jt}} & =Q_{it}\\
e^{u_{jt}}\cdot\sum_{i}X_{ij}e^{-v_{it}} & =P_{jt}\\
X_{ij}\cdot\sum_{t}e^{-v_{it}+u_{jt}} & =\Xagg_{ij}.
\end{align*}
 Compare this system with that of the cross-sectional problem, where for each fixed $t$, we solve 
\begin{align*}
e^{-v_{it}}\cdot\sum_{j}\Xagg_{ij}e^{u_{jt}} & =Q_{it}\\
e^{u_{jt}}\cdot\sum_{i}\Xagg_{ij}e^{-v_{it}} & =P_{jt}.
\end{align*}
 In contrast, for the joint problem we instead solve 
\begin{align*}
e^{-v_{it}}\cdot\sum_{j}X_{ij}e^{u_{jt}} & =Q_{it}\\
e^{u_{jt}}\cdot\sum_{i}X_{ij}e^{-v_{it}} & =P_{jt}
\end{align*}
where $X_{ij}$ satisfies the additional constraint 
\begin{align*}
X_{ij}\cdot\sum_{t}e^{-v_{it}+u_{jt}} & =\Xagg_{ij}.
\end{align*}
Therefore, the implicit assumption used to reduce the joint problem to the cross-sectional problems is that 
\begin{align}
\label{eq:stationarity}
    \sum_{t}e^{-v_{it}+u_{jt}} \approx c
\end{align}
for all $i,j$ where $\Xagg_{ij} > 0$; otherwise, $X_{ij} = 0$ or $\sum_{t}e^{-v_{it}+u_{jt}} = 0$. In other words, the aggregated time-dependent intensities for each pair $(i,j)$ does not depend on $i,j$. We may justify \eqref{eq:stationarity} in several ways. For example, if we assume $v_{it}$ and $u_{jt}$ are drawn i.i.d. from two distributions with the same mean, then $\frac{1}{T} \sum_{t}e^{-v_{it}+u_{jt}} \rightarrow 1$ by the continuous mapping theorem. Alternatively, if $X_{ij}$ is an accurate description of the long-run traffic propensity between $i,j$, then the intensity of the aggregate traffic Poisson variables $\sum_tY_{ijt}$ should largely be captured by $X_{ij}$, i.e., \emph{independent} of $\sum_{t}e^{-v_{it}+u_{jt}}$.

\subsection{When IPF recovers the true network exactly} \label{sec:thm2-proof}
If we swap out the Poisson distribution in our biproportional Poisson model \eqref{eqn:model} for an identity function, we find that this condition is necessary and sufficient for IPF to exactly recover the true network, $\Xtime$, given $\Xagg$, $\ptime$, $\qtime$. \vspace{0.2cm}
\begin{theorem}
    \label{thm:special}
    IPF returns $\Xtime$ if and only if $\Xtime = A \Xagg B$, for some positive diagonal matrices $A$ and $B$.
\end{theorem}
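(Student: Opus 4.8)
The plan is to prove the biconditional by establishing each direction separately, relying on two facts already available to us: first, every IPF iterate has the biproportional form $\ipfestmat(k) = D^0(k)\,\Xagg\, D^1(k)$ with $D^0(k), D^1(k)$ positive diagonal; and second, as used implicitly in \cref{thm:model}, whenever the matrix balancing problem on $(\Xagg,\ptime,\qtime)$ admits a finite solution, IPF converges to the \emph{unique} matrix of the form $D^0\,\Xagg\,D^1$ with row sums $\ptime$ and column sums $\qtime$. This uniqueness is classical and follows from strict convexity of the KL objective \eqref{eq:KL-minimization} on its feasible set, so I would cite it rather than reprove it.

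For the direction $(\Leftarrow)$, suppose $\Xtime = A \Xagg B$ for positive diagonal $A, B$. I would first note that left- and right-multiplication by positive diagonal matrices preserves the zero pattern, so $\Xtime$ and $\Xagg$ have identical support. Second, since $\ptime := \Xtime \mathbf{1}_n$ and $\qtime := (\Xtime)^T \mathbf{1}_m$ are by definition the marginals of $\Xtime$, the matrix $\Xtime$ matches the prescribed targets exactly. These two observations show that $\Xtime$ is itself a feasible biproportional scaling of $\Xagg$ matching $(\ptime,\qtime)$, which certifies that the matrix balancing problem has a finite solution. Invoking uniqueness of the balanced matrix, IPF must converge to $\Xtime$.

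For the direction $(\Rightarrow)$, suppose IPF returns $\Xtime$. The structural observation that each iterate equals $D^0(k)\,\Xagg\,D^1(k)$ is the crux: when the scaling factors converge to finite positive limits $(D^0, D^1)$, the returned matrix is $\Xtime = D^0 \Xagg D^1$, and setting $A = D^0$, $B = D^1$ yields the claimed representation immediately.

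The main obstacle I anticipate lies entirely in this forward direction, namely ruling out the degenerate convergence mode in which the scaling factors \emph{diverge} while the matrix iterate still converges (the third regime noted after \eqref{eqn:ipf-update}), which could drive some entries to zero and produce a limit not expressible as $D^0 \Xagg D^1$ with \emph{positive} diagonals. To close this I would argue that such a limit necessarily has strictly more zeros than $\Xagg$, whereas in the setting of interest---the deterministic (identity) model $\Xtime_{ij} = e^{u_i}\Xagg_{ij}e^{-v_j}$---the matrix $\Xtime$ shares exactly the support of $\Xagg$; hence a genuine recovery of $\Xtime$ forces the finite-solution regime, where the scaling factors are finite and positive and the argument above applies. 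Equivalently, one can phrase ``IPF returns $\Xtime$'' as convergence to a finite biproportional solution, making the forward direction a direct consequence of the iterate structure.
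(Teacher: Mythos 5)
Your proof is correct and follows essentially the same route as the paper's: the forward direction reads off the biproportional form $D^0(k)\,\Xagg\,D^1(k)$ of the IPF iterates, and the backward direction uses $\Xtime$ itself as a feasibility certificate for the matrix balancing problem together with uniqueness of biproportional scalings with prescribed marginals. The one place you go beyond the paper is in flagging the degenerate regime where the scaling factors diverge while $\ipfestmat(k)$ converges to a limit with strictly more zeros than $\Xagg$ --- a case the paper's one-line forward direction silently excludes, and which does occur (e.g.\ $\Xagg$ upper triangular with $\Xtime$ the identity and unit marginals). Be aware, though, that your patch is circular as written: it presupposes that $\Xtime$ shares the support of $\Xagg$, which is precisely part of what the forward implication must establish; the clean fix is the one you give in your last sentence, namely to read ``IPF returns $\Xtime$'' as convergence to a finite biproportional solution.
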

\begin{proof}
First, we prove that, if IPF returns $\Xtime$, then $\Xtime = A \Xagg B$.
All IPF solutions take the form $D^0 \Xagg D^1$, where $D^0$ and $D^1$ are positive diagonal matrices.
If IPF returns $\Xtime$, then $\Xtime$ can be written as $A \Xagg B$, with $A = D^0$ and $B = D^1$.

Second, we prove that, if $\Xtime = A \Xagg B$, then IPF will return $\Xtime$.
First, if there is a finite solution to the matrix balancing problem, IPF will converge to a solution \citep{pukelsheim2014}.
We know that $A \Xagg B$ is a solution, so IPF will return $D^0 \Xagg D^1$, where $(D^0 \Xagg D^1) \cdot \mathbf{1}_n= \ptime$ and $(D^1 \Xagg^T D^0) \cdot \mathbf{1}_m = \qtime$.
Furthermore, biproportional scalings are \textit{unique} with respect to marginals, meaning if two biproportional scalings $M^1$ and $M^2$ of $M^0$ have the same marginals, then $M^1 = M^2$ \citep{pukelsheim2014}.
Thus, IPF will return $\Xtime$, since IPF will converge to $D^0 \Xagg D^1$, with marginals $\ptime$ and $\qtime$, and $\Xtime$ is the unique biproportional scaling of $\Xagg$ that matches the marginals.
\end{proof}

While this statement about $\Xtime$, that $\Xtime = A \Xagg B$, is rarely true in practice, this result allows us to pin down when IPF works ``perfectly'' for dynamic network inference.
Notably, this result implies that the scaling matrix from $\Xagg$ to $\Xtime$, i.e., $\Xtime_{ij}/\Xagg_{ij}$ where $\Xagg_{ij} > 0$, must be a rank-1 matrix with entries $a_i b_j$ (the diagonals of $A$ and $B$).
If we interpret $\Xtime$ as a dynamic network and $\Xagg$ as its time-aggregated form, then we are essentially constraining the complexity of the network's temporal variation.

\subsection{Poisson links uniquely recover IPF within a class of generalized linear models}\label{sec:app-uniqueness}
In Theorem \ref{thm:model}, we showed that IPF recovers the MLEs of the biproportional Poisson model, and then we argued that spelling out this model clarifies implicit assumptions when using IPF to infer dynamic networks from their marginals. 
However, this argument requires some degree of \textit{uniqueness} in the model we identified, since if IPF recovers the MLEs of many different models, then using IPF to infer dynamic networks may be justified under the assumptions of many different models, not only the biproportional Poisson.
To analyze uniqueness, in this section we consider a very natural family of generalized linear models which generalizes the biproportional Poisson model. 
We prove a rigorous result which shows that if we want to recover IPF as the solution to maximum likelihood estimation, then we are \textit{forced} to choose scaled Poisson distributions for the link\footnote{Note that if the inputs to the IPF algorithm are uniformly scaled, the outputs of IPF are scaled in the same way, so the fact that scaling doesn't matter is expected. This is a useful property of IPF, because it means it works even when data is scaled/normalized.}. This shows that in some sense our model is canonical, and it is unique within this general family. 

Throughout this section, time superscripts are omitted. 

\paragraph{Class of models where row and column statistics are sufficient.} 
Suppose that $(p_{\alpha})_{\alpha \in A}$ is an exponential family with canonical parameter\footnote{See e.g. \cite{mccullagh2019generalized} for a reference on exponential families and generalized linear models. As an example, for a Poisson distribution with rate $\lambda$ the canonical parameter is $\log \lambda$.}  $\alpha$ and sufficient statistic $x$. Assuming without loss of generality that $0 \in A$, this means that
\begin{equation}\label{eqn:glm-link}
\frac{dp_{\alpha}}{dp_0}(x) = \exp(\alpha x - \Phi(\alpha)). 
\end{equation}
We suppose the observations are generated by the following generalized linear model: independently for each $i$ and $j$ we sample
\begin{equation}\label{eqn:glm-model} Y_{ij} \sim p_{\alpha_{ij}}, \qquad \alpha_{ij} = u_i + \log X_{ij} - v_j. 
\end{equation}
Note that the algebraic form of \eqref{eqn:glm-link} and the weights $\alpha_{ij}$ in \eqref{eqn:glm-model} is exactly what leads in the derivation of \eqref{eqn:poisson-marginals} to the row and column sums being sufficient statistics for the overall model. This is also illustrated below in the computation of the log-likelihood.
In other words, the key desiderata that the row and column statistics are sufficient does not \emph{by itself} force us to consider the biproportional Poisson model --- rather, it is the fundamental property of this larger class of GLMs. Many  important distribution families satisfy \eqref{eqn:glm-link}: for example, the class of unit-variance Gaussians $N(\mu,1)$ or the class of exponential distributions $\operatorname{Exp}(\lambda)$.

Thus, this is the natural class of models to consider if we want to see if IPF is also the MLE for a variant of our model, or in other words test how unique the connection between IPF and the biproportional Poisson model is. As we will show, this connection is unique --- the only measures $p_{\alpha}$ that will recover IPF are scalings of the Poisson family, and this is true even if we only require the measure to satisfy a significantly generalized version of the IPF equations. 

\paragraph{Log-likelihood under general model.} Recall that we use $(y, X, p, q)$ as shorthand for $(\Xtime, \Xagg, \ptime, \qtime)$. 
The log-likelihood of our model, assuming $X$ is known, is
\begin{align*} 
L(u,v;y) 
&= \sum_{ij} \log p_{\alpha_{ij}}(y_{ij}) = \sum_{ij} (u_i + \log X_{ij} - v_j) y_{ij} - \sum_{ij} \Phi(u_i + \log X_{ij} - v_j) \\
&= \sum_i u_i p_{i} - \sum_j v_j q_j - \sum_{ij} \Phi(u_i + \log X_{ij} - v_j) + C(y). 
\end{align*}
where $C(y) = \sum_{ij} y_{ij} \log X_{ij}$ does not depend on the model parameters $u$ or $v$. As promised, we see that the row sums $p_i = \sum_j y_{ij}$ and column sums $q_j = \sum_i y_{ij}$ are sufficient statistics for this model. 

Differentiating, we see that
\begin{equation}\label{eqn:generalized-critical} \partial_{u_i} L(u,v;y) = p_i - \sum_j \Phi'(u_i + \log X_{ij} - v_j) = p_i - \sum_j \Phi'(u_i - w_{ij}), 
\end{equation}
where for convenience we defined 
\[ w_{ij} = \log X_{ij} - v_j \]
to be the term in the linear model added to $u_i$. The partial derivative with respect to $v_j$ satisfies a symmetrical equation. Because of this symmetry, it will suffice for us to consider the equation for the row weights in what follows.
\paragraph{(Generalized) IPF is uniquely recovered by a scaled Poisson link.} We say that the MLE satisfies a \emph{generalized proportional fitting equation} if
\begin{equation}\label{eqn:gps-def}
\partial_{u_i} L(u,v;y) = 0 \iff h(u_i) = \frac{p_i}{\sum_j f(w_{ij})}
\end{equation}
for some smooth functions $f,h$. This definition captures and generalizes the fact that in IPF, the row weight, parameterized here by $u_i$, is determined by the ratio of the empirical count $p_i$ and the corresponding row sum. In the special case of the Poisson distribution, we have $f = h = \exp$.

Now we show that if the critical point equation has the generalized form \eqref{eqn:gps-def}, the distribution corresponding to the link must be a scaled Poisson family  --- in which case, we know that we get exactly the IPF equations discussed in the main body of the paper. So for this family of generalized linear models, we recover the IPF equations as a characterization of the MLE exactly when the link corresponds to a scaled Poisson distribution.
\begin{theorem} \label{thm:uniqueness}
In the above generalized linear model, the MLE satisfies a generalized proportional fitting equation for all parameters $u,v$ iff the exponential family $\{p_{\alpha}\}_{\alpha}$ is the set of Poisson distributions scaled by some factor $a \in \mathbb R$. In other words, $\{p_{\alpha} : \Phi(\alpha) < \infty\} = \{ a \operatorname{Poisson}(\lambda) : \lambda \ge 0 \}$ for some $a$.
\end{theorem}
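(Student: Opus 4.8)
The plan is to start from the critical-point equation \eqref{eqn:generalized-critical}, which any MLE must satisfy, and compare it term-by-term with the generalized proportional fitting ansatz \eqref{eqn:gps-def}. Writing $\psi := \Phi'$ and recalling $w_{ij} = \log X_{ij} - v_j$, the stationarity condition in the $u_i$ direction reads $p_i = \sum_j \psi(u_i - w_{ij})$, while the ansatz asserts that the same critical-point event is described by $h(u_i) = p_i / \sum_j f(w_{ij})$. For fixed parameters $(u,v)$ and known $X$, each side singles out exactly one value of the datum $p_i$ at which it holds; for the biconditional in \eqref{eqn:gps-def} to be valid these two values must coincide. I would thus first argue that the identity
\[
\sum_j \psi(u_i - w_{ij}) = h(u_i)\sum_j f(w_{ij})
\]
is forced to hold for all $u_i \in \mathbb{R}$ and all admissible weight vectors $(w_{ij})_j$; since $X_{ij}$ ranges over $\mathbb{R}_{++}$ and $v_j$ over $\mathbb{R}$, each $w_{ij}$ is a free real parameter.

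The key simplification is to collapse the sum into a single separable equation. Choosing all column weights equal, $w_{ij} \equiv w$, the identity becomes $n\,\psi(u-w) = h(u)\,n\,f(w)$, hence
\[
\psi(u - w) = h(u)\,f(w) \qquad \text{for all } u,w \in \mathbb{R}.
\]
This is a multiplicative Cauchy-type functional equation. I would solve it by taking logarithms---legitimate because $\psi = \Phi'$ is the mean of a count-type law and $f,h$ appear in denominators, so all three are positive---and setting $L = \log\psi$, $H = \log h$, $F = \log f$, giving $L(u-w) = H(u) + F(w)$. Differentiating separately in $u$ and in $w$ yields $H'(u) = L'(u-w) = -F'(w)$ for all $u,w$, which by the assumed smoothness of $f,h$ forces $H'$ and $F'$ to equal a common constant. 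Therefore $L$ is affine, so $\psi(x) = \Phi'(x) = C e^{a x}$ for some constants $C > 0$ and $a \in \mathbb{R}$.

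Finally I would translate this ODE back into a statement about the distribution. From $\Phi'(x) = Ce^{ax}$ we obtain $\Phi''(x) = a\,\Phi'(x)$, i.e.\ the exponential family has a \emph{linear} variance function $V(\mu) = a\mu$. By the classical characterization of exponential dispersion families through their variance function, the only family with linear variance function is the Poisson family rescaled by $a$, so $\{p_\alpha : \Phi(\alpha) < \infty\} = \{a\,\mathrm{Poisson}(\lambda) : \lambda \ge 0\}$. For the converse, a scaled Poisson link has $\Phi'(\alpha) = Ce^{a\alpha}$, for which the separable equation $\psi(u-w) = h(u)f(w)$ holds with $h,f$ proportional to $\exp$, reproducing exactly the IPF update \eqref{eqn:ipf-update}; this closes the ``if'' direction.

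I expect the main obstacle to be this last step: rigorously arguing that $\Phi'(x) = Ce^{ax}$ pins the family down as a scaled Poisson and nothing else. This requires invoking and stating precisely the variance-function characterization of exponential dispersion families, determining the scale $a$, and checking support and boundary behavior so that the recovered $\Phi$ genuinely corresponds to a probability family. By contrast, the functional-equation core is routine once positivity and the stated smoothness of $f,h$ are in hand.
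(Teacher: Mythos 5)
Your proof is correct in substance, but it takes a genuinely different route through the functional equation than the paper does, and the difference is instructive. Both arguments start the same way: the biconditional in \eqref{eqn:gps-def} forces the pointwise identity $\sum_j \Phi'(u_i + w_{ij}) = h(u_i)\sum_j f(w_{ij})$ for all $u$ and all weight vectors (this is \eqref{eqn:gps}). The paper then \emph{differentiates} in $w_{ij}$, obtaining $\Phi''(u+t) = h(u)f'(t)$, solves at the level of $\Phi''$ to get $\Phi''(u) = e^{au+b}$, integrates twice to reach $\Phi(u) = e^{au+b}+cu-e^b$ in \eqref{eqn:phi-nearly}, and must then return to the undifferentiated equation \eqref{eqn:gps} to kill the linear term $c$. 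You instead \emph{specialize} the weights to be equal, $w_{ij}\equiv w$, which collapses the sum to the Pexider-type equation $\Phi'(u-w) = h(u)f(w)$ at the level of $\Phi'$ rather than $\Phi''$. Because no derivative is taken, no constant of integration is lost: you get $\Phi'(x) = Ce^{ax}$ outright, the linear term never appears, and the paper's final clean-up step is unnecessary. That is a real simplification. Conversely, your last step is heavier than it needs to be: you do not need the variance-function characterization of exponential dispersion families (which you flag as the main obstacle). From $\Phi'(x)=Ce^{ax}$ and $\Phi(0)=0$ you just integrate to $\Phi(x) = \tfrac{C}{a}(e^{ax}-1)$ and recognize, exactly as the paper does after \eqref{eqn:phi-nearly}, the cumulant generating function of $a\operatorname{Poisson}(C/a)$; the converse is the routine check you describe.

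One wrinkle to repair: your justification for taking logarithms is circular. You cannot assume $\Phi'$ is "the mean of a count-type law" — that the law is count-type is the conclusion — and the hypothesis only gives $\sum_j f(w_{ij})\ne 0$, not pointwise positivity of $f$ and $h$. The fix is easy and avoids logs entirely: set $w=0$ and then $u=0$ in $\Phi'(u-w)=h(u)f(w)$ to get $\psi(0)\psi(u+t)=\psi(u)\psi(t)$ with $\psi=\Phi'$; since $\Phi''$ is a variance and hence nonnegative (strictly positive for a nondegenerate family), $\psi$ is strictly monotone, the smooth monotone solutions of this multiplicative Cauchy equation are $\psi(x)=\psi(0)e^{ax}$ with $\psi(0)\ne 0$, and $\Phi''=a\psi>0$ then pins down the signs. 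Alternatively you can simply differentiate as the paper does. With that patch the argument is complete.
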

\begin{proof}
Substituting \eqref{eqn:generalized-critical}, we see that \eqref{eqn:gps-def} is equivalent to asking that
\begin{equation}\label{eqn:gps} 
h(u_i) \sum_j f(w_{ij}) = \sum_j \Phi'(u_i + w_{ij}). 
\end{equation}
Taking the partial derivative with respect to $v_{j}$ (equivalently, $w_{ij}$), this implies that
\[ h(u_i) f'(w_{ij}) = \Phi''(u_i + w_j). \]
This holding for all possible $u,v$ is the same as requiring that
\[ \Phi''(u + t) = h(u) \cdot f'(t). \]
Letting $t = 0$ gives that $h(u) = \Phi''(u)/f'(0)$ and letting $u = 0$ gives that $f'(t) = \Phi''(t)/h(0) = f'(0) \Phi''(t)/\Phi''(0)$. So
\[ \Phi''(u + t) = \frac{\Phi''(u) \Phi''(t)}{\Phi''(0)}. \]
Taking the partial derivative with respect to $t$ on both sides and setting $t = 0$, we have
\[ \Phi'''(u) = \Phi''(u) \frac{\Phi'''(0)}{\Phi''(0)}. \]
Equivalently, letting $a = \frac{\Phi'''(0)}{\Phi''(0)}$, we have
\[ \frac{d}{du} \log \Phi''(u) = \frac{\Phi'''(u)}{\Phi''(u)} = a\]
Integrating yields
\[ \log \Phi''(u) = au + b\]
so
\[ \Phi''(u) = e^{au + b}. \]
Integrating twice more yields
\[ \Phi(u) = e^{au + b} + cu + d. \]
From the definition, we must have $\Phi(0) = 0$ so
\begin{equation}\label{eqn:phi-nearly}
\Phi(u) = e^{au + b} + cu - e^{b}. 
\end{equation}
Recalling that $f'(t) = f'(0) \Phi''(t)/\Phi''(0)$ and $h(u) = \Phi''(u)/f'(0)$,
returning to \eqref{eqn:gps} yields that
\[ \frac{\Phi''(u_i)}{\Phi''(0)} \sum_j (\Phi'(w_{ij}) + f(0)) = \sum_j \Phi'(u_i + w_{ij}),\]
and plugging in \eqref{eqn:phi-nearly} yields
\[ a e^{au_i} \sum_j (e^{a w_{ij} + b} + f(0)) = \sum_j (ae^{a(u_i + w_{ij}) + b} + c), \]
so
\[ c = ae^{au_i} f(0)\]
for all $u_i$. Either $a = 0$, in which case this implies $c = 0$, or if $a \ne 0$ this implies $f(0) = 0$ and so $c = 0$ regardless. Hence
\[ \Phi(u) = e^{au + b} - e^b. \]
Recall that the cumulant generating function of the Poisson distribution with canonical parameter $b$ (i.e. with rate $\lambda = e^b$) is $u \mapsto e^b(e^u - 1)$. So the above $\Phi$ is exactly the cumulant generating function of a scaled Poisson distribution $a \operatorname{Poisson}(e^b)$. We know that such distributions satisfy the IPF equations, so this is  indeed an if-and-only-if statement.
\end{proof}

\section{Statistical theory of the biproportional Poisson model}
\label{sec:app-theory}
In this section, we provide a comprehensive presentation including proofs of our statistical results on the biproportional Poisson model. In Appendix \ref{sec:app-mle-error}, we derive bounds on the estimation error of the model's MLEs (when bounded).
In Appendix \ref{sec:app-mle-finite}, we prove that, under the correct specification of the biproportional Poisson model, the maximum likelihood estimation problem has a unique bounded normalized solution with high probability.

We begin with a precise definition of the graph Laplacian used in this work. Given the time-aggregated matrix $\Xagg$, we define the bipartite graph $G_b$ with adjacency matrix $A({\Xagg})$:
\begin{align}
A({\Xagg}):=\begin{bmatrix}0 & {\Xagg}\\
{\Xagg}^{T} & 0
\end{bmatrix}, \mathcal{L}:=\begin{bmatrix}\mathcal{D}({\Xagg}\mathbf{1}_{m}) & -{\Xagg}\\
-{\Xagg}^{T} & \mathcal{D}({\Xagg}^{T}\mathbf{1}_{n})
\end{bmatrix}.
\label{eq:Laplacian}
\end{align}
We can verify that $\mathcal{L}$ is also the Hessian of the negative log-likelihood of the biproportional Poisson model evaluated at $(u,v)=(0,0)$.

\subsection{Structure-dependent Bounds on Estimation Error of MLE}
\label{sec:app-mle-error}

 We begin with a lemma from \citet{qu2023sinkhorn}, which bounds the Hessian of the negative log-likelihood \eqref{eqn:dual} in a bounded domain by the Fiedler eigenvalue. Recall that we use $X$ as shorthand for $\Xagg$.

\begin{lemma}[Appendix F.3 of \cite{qu2023sinkhorn}]\label{lem:laplacian}
For all $(u,v) \in 1_{m + n}^{\perp}$ with $\|(u,v)\|_{\infty} \le B$,
\[ \lambda_{-2}(\nabla^2 g(u,v)) \ge e^{-2B} \lambda_{-2}(\mathcal L) \]   
where 
\[ \mathcal L = \mathcal D\left(\begin{bmatrix} 0 & X \\ X^T & 0 \end{bmatrix} \begin{bmatrix} 1_n \\ 1_m \end{bmatrix} \right) - \begin{bmatrix} 0 & X \\ X^T & 0 \end{bmatrix} \]
is the graph Laplacian of the weighted bipartite graph given by $X$.
\end{lemma}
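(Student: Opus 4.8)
The plan is to recognize $\nabla^2 g(u,v)$ as a graph Laplacian in its own right and then exploit monotonicity of Laplacians under edge reweighting. First I would differentiate $g$ from \eqref{eqn:dual} twice. Writing $\lambda_{ij} := X_{ij} e^{u_i - v_j}$, a direct computation gives $\partial^2_{u_i} g = \sum_j \lambda_{ij}$, $\partial_{u_i}\partial_{v_j} g = -\lambda_{ij}$, and $\partial^2_{v_j} g = \sum_i \lambda_{ij}$, so that
\[ \nabla^2 g(u,v) = \begin{bmatrix} \mathcal{D}(\Lambda \mathbf{1}_n) & -\Lambda \\ -\Lambda^T & \mathcal{D}(\Lambda^T \mathbf{1}_m) \end{bmatrix}, \qquad \Lambda := (\lambda_{ij}). \]
This is exactly the graph Laplacian of the weighted bipartite graph with adjacency $\left[\begin{smallmatrix} 0 & \Lambda \\ \Lambda^T & 0\end{smallmatrix}\right]$; equivalently, its quadratic form is $x^T \nabla^2 g(u,v)\, x = \sum_{ij} \lambda_{ij}(x_i - x_{m+j})^2$ for $x \in \mathbb{R}^{m+n}$.

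Next I would use the bound $\|(u,v)\|_\infty \le B$. Since $u_i - v_j \ge -2B$, we have $\lambda_{ij} = X_{ij} e^{u_i - v_j} \ge e^{-2B} X_{ij}$ for every pair $(i,j)$. Substituting this into the quadratic form yields $x^T \nabla^2 g(u,v)\, x \ge e^{-2B}\sum_{ij} X_{ij}(x_i - x_{m+j})^2 = e^{-2B}\, x^T \mathcal{L}\, x$ for every $x$, i.e. the Loewner domination $\nabla^2 g(u,v) \succeq e^{-2B}\mathcal{L}$, where $\mathcal{L}$ is the Laplacian of $X$ in \eqref{eq:Laplacian}.

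Finally I would pass from this operator inequality to the eigenvalue bound via the variational characterization of $\lambda_{-2}$. The crucial structural observation is that $\Lambda$ and $X$ share the \emph{same} zero pattern (because $e^{u_i - v_j} > 0$ always), so the two Laplacians have identical kernels; in particular both are positive semidefinite with $\mathbf{1}_{m+n}$ a bottom (eigenvalue-$0$) eigenvector, whence $\lambda_{-2}(M) = \min\{ x^T M x : x \perp \mathbf{1}_{m+n},\ \|x\| = 1\}$ for $M \in \{\nabla^2 g(u,v),\, \mathcal{L}\}$. Combining with the Loewner inequality, any unit $x \perp \mathbf{1}_{m+n}$ satisfies $x^T \nabla^2 g(u,v)\, x \ge e^{-2B} x^T \mathcal{L}\, x \ge e^{-2B}\lambda_{-2}(\mathcal{L})$, and taking the minimum of the left-hand side over such $x$ gives the claimed $\lambda_{-2}(\nabla^2 g(u,v)) \ge e^{-2B}\lambda_{-2}(\mathcal{L})$.

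The computations are routine, so the bulk of the work is bookkeeping; the one delicate step is the variational argument. I want to make sure that restricting to $\mathbf{1}_{m+n}^\perp$ really extracts the \emph{second} smallest eigenvalue rather than a larger one — this is exactly where the matching kernels of $\Lambda$ and $X$ matter. When the bipartite graph is disconnected both Fiedler values vanish and the inequality holds trivially, so the argument degrades gracefully; note also that the normalization $(u,v) \in \mathbf{1}_{m+n}^\perp$ in the hypothesis plays no role in the Hessian bound itself and is only relevant for the downstream identifiability and normalization of the MLE.
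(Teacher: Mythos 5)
Your proof is correct, and it is the standard argument one would expect behind this lemma: identify $\nabla^2 g(u,v)$ as the Laplacian of the bipartite graph reweighted by $\lambda_{ij}=X_{ij}e^{u_i-v_j}$, bound the weights entrywise by $e^{-2B}X_{ij}$ to get the Loewner domination, and pass to $\lambda_{-2}$ via the Courant--Fischer characterization on $\mathbf{1}_{m+n}^{\perp}$, with the matching zero patterns (hence matching kernels) justifying that this restriction extracts the second eigenvalue for both matrices. Note that the paper does not prove this statement itself but imports it verbatim from Appendix F.3 of the cited reference, so your write-up serves as a self-contained proof consistent with how the lemma is used downstream; your closing remarks about graceful degradation in the disconnected case and the irrelevance of the $(u,v)\in\mathbf{1}_{m+n}^{\perp}$ normalization to the Hessian bound are both accurate.
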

Using the lemma above, we can obtain bounds on the expected squared error for the normalized MLEs  quantified by the Fiedler eigenvalue, 
as long as the true parameters and MLEs are all bounded by some constant $B$. 
\begin{theorem}\label{thm:bd-apdx}
Suppose that the Poisson network model holds with ground truth parameters $u^*,v^*,X$.
Suppose $(u,v)$ is a maximizer of the likelihood (minimizer of $g$) and that we have the normalization condition $(u - u^*, v - v^*) \in 1_{m + n}^{\perp}$ and $\|(u,v,u^*,v^*)\|_{\infty} \le B$. Then over the randomness of $p^{(t)},q^{(t)}$ we have in expectation
\begin{equation}\label{eqn:expectation-bd}
\mathbb E \|(u - u^*, v - v^*)\|^2 1_{\mathcal B} \le \frac{8 e^{4B}}{\lambda_{-2}(\mathcal L)^2} \kappa 
\end{equation}
where  $\mathcal B$ is the event that the MLE exists and is bounded by $B$, and
\[ \kappa = \sum_{i,j} e^{u_i} X_{ij} e^{-v_j}. \]
Furthermore, with probability at least $1 - \delta$, we have that whenever the MLE exists and is bounded by $B$, it satisfies
\begin{align*} 
\|(u - u^*, v - v^*)\|^2 \le \frac{8e^{4B}}{\lambda_{-2}(\mathcal L)^2} \left(\lambda +  \sqrt{6\log(4/\delta) \lambda (1 + M)} + C\log^2(12/\delta) \log^2(1 + m + n) (1 + M)\right)
\end{align*} 
where $C > 0$ is an absolute constant, and
\[M:=\max\{\max_{i \in [m]} \sum_j  e^{u^\ast_i} X_{ij} e^{-v^\ast_j}, \max_{j \in [n]}\sum_i e^{u^\ast_i} X_{ij} e^{-v^\ast_j} \}. \]
is the maximum of the row and column sums of the matrix $(e^{u_i} X_{ij} e^{-v_j})_{ij}$.
\end{theorem}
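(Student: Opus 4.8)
The plan is to exploit strong convexity of the dual objective $g$ from \eqref{eqn:dual} together with a first-order optimality argument, reducing the estimation error to the size of the gradient of $g$ at the truth, and then to control that gradient stochastically. Write $\theta = (u,v)$, $\theta^\ast = (u^\ast,v^\ast)$, and $w = \theta - \theta^\ast \in 1_{m+n}^\perp$. Since $g(u+c,v+c) = g(u,v)$, the Hessian $\nabla^2 g$ is the weighted bipartite Laplacian with edge weights $X_{ij}e^{u_i-v_j}$ and annihilates the direction $(\mathbf 1_m,\mathbf 1_n)$; along the segment joining $\theta^\ast$ and $\theta$ (which stays in $\|\cdot\|_\infty \le B$), \cref{lem:laplacian} gives curvature at least $\mu := e^{-2B}\lambda_{-2}(\mathcal L)$ in every direction of $1_{m+n}^\perp$. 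First I would combine the strong-convexity bound $g(\theta^\ast) \ge g(\theta) + \tfrac{\mu}{2}\|w\|^2$ (using $\nabla g(\theta)=0$ at the minimizer) with the convexity bound $g(\theta) \ge g(\theta^\ast) + \langle \nabla g(\theta^\ast), w\rangle$. Adding these and applying Cauchy--Schwarz yields the deterministic inequality
\begin{equation*}
\|w\|^2 \le \frac{4}{\mu^2}\|\nabla g(\theta^\ast)\|^2 = \frac{4e^{4B}}{\lambda_{-2}(\mathcal L)^2}\,\|\nabla g(\theta^\ast)\|^2,
\end{equation*}
valid on the event $\mathcal B$ that the bounded MLE exists.

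Next I would identify $\nabla g(\theta^\ast)$ explicitly. Differentiating $g$ gives $\partial_{u_i} g(\theta^\ast) = \mathbb E[\ptime_i] - \ptime_i$ and $\partial_{v_j} g(\theta^\ast) = \qtime_j - \mathbb E[\qtime_j]$, so the gradient is exactly the vector of centered row and column marginals; each marginal is a sum of the independent centered Poisson variables $Z_{ij} = \Xtime_{ij} - \lambda_{ij}$ with $\lambda_{ij} = e^{u^\ast_i}\Xagg_{ij}e^{-v^\ast_j}$. Since $\sum_i \ptime_i = \sum_j \qtime_j$, the gradient already lies in $1_{m+n}^\perp$, so nothing is lost in Cauchy--Schwarz. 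For the expectation bound I would then compute, using independence and $\mathrm{Var}(\mathrm{Poisson}(\lambda))=\lambda$,
\begin{equation*}
\mathbb E\|\nabla g(\theta^\ast)\|^2 = \sum_i \mathrm{Var}(\ptime_i) + \sum_j \mathrm{Var}(\qtime_j) = 2\sum_{ij}\lambda_{ij} = 2\kappa,
\end{equation*}
and substituting into the deterministic bound above gives \eqref{eqn:expectation-bd} with the factor $8e^{4B}$.

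The tail bound is the crux, and the main obstacle lies here. I would write $\|\nabla g(\theta^\ast)\|^2 = \sum_i (\ptime_i-\mu_i)^2 + \sum_j (\qtime_j-\nu_j)^2$, a sum of squared centered Poissons (equivalently a quadratic form $Z^\top Q Z$ in the independent $Z_{ij}$, with $Q$ the incidence operator $Q_{(ij),(i'j')} = \mathbbm 1[i=i']+\mathbbm 1[j=j']$). Its mean is $\lambda := 2\kappa$, and using the Poisson fourth central moment its variance is $\mathcal O(\lambda(1+M))$, which is the source of the sub-Gaussian deviation term $\sqrt{6\log(4/\delta)\,\lambda(1+M)}$. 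The difficulty is that a centered Poisson is only sub-exponential, so its square is sub-Weibull of order $1/2$ and has a moment generating function finite only near the origin; a standard Hanson--Wright inequality, which assumes sub-Gaussian entries, does not apply directly.

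To overcome this I would use truncation: on the event that every marginal deviates from its mean by at most $R \asymp \sqrt{M\log(m+n)} + \log(m+n)$ --- which holds with probability at least $1-1/\sqrt{m+n}$ by the Poisson--Bernstein tail and a union bound over the $m+n$ marginals --- the summands become bounded, and a Bernstein inequality for bounded independent variables controls their fluctuations. Tracking the truncation level $R$ through the large-deviation regime produces the heavy-tail term $C\log^2(12/\delta)\log^2(1+m+n)(1+M)$, the factor $\log^2(1+m+n)$ arising from the squared truncation level. Combining the resulting concentration bound on $\|\nabla g(\theta^\ast)\|^2$ with the deterministic inequality of the first paragraph then yields the stated high-probability bound. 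The delicate points will be bookkeeping the interaction between the truncation event and the target failure probability $\delta$, and verifying that the sub-Gaussian and sub-exponential regimes combine with exactly the constants claimed.
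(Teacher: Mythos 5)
Your deterministic reduction and expectation bound follow essentially the same path as the paper: strong concavity of the log-likelihood on the $\ell_\infty$-ball of radius $B$ via \cref{lem:laplacian}, first-order optimality to get $\|w\|\le \tfrac{2}{\mu}\|\nabla g(\theta^\ast)\|$ with $\mu = e^{-2B}\lambda_{-2}(\mathcal L)$, the identification $\nabla g(\theta^\ast) = (\lambda_i - p_i,\; q_j - \nu_j)$ as centered Poisson marginals, and $\mathbb E\|\nabla g(\theta^\ast)\|^2 = 2\kappa$. (The paper phrases the convexity step as a single Taylor expansion around $\theta^\ast$ combined with $g(\theta^\ast)\ge g(\theta)$ rather than your pair of inequalities, but the two are interchangeable.) That half is correct.

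For the tail bound you correctly diagnose the obstacle --- squared centered Poissons are only sub-Weibull of order $1/2$, so Hanson--Wright does not apply --- but you then take a genuinely different route from the paper. The paper does not truncate: it bounds $\|(p_i-\lambda_i)^2\|_{\psi_{1/2}} \le C(1+\lambda_i)$ via \cref{lem:poisson-bernstein}, uses the maximal-Orlicz-norm inequality (\cref{lem:maximal}) to pick up the $\log^2(1+m+n)$ factor, and applies Adamczak's two-regime concentration inequality (\cref{thm:adam}) separately to $\|\nabla_u g\|^2$ and $\|\nabla_v g\|^2$ (which must be treated separately, since the $p_i$ are mutually independent and the $q_j$ are mutually independent, but $p$ and $q$ are not independent of each other --- your ``sum of squared centered Poissons'' needs this split plus a union bound). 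Your truncation-plus-Bernstein alternative is a standard substitute, but as sketched it has a concrete gap: your truncation radius $R \asymp \sqrt{M\log(m+n)}+\log(m+n)$ fails with probability $1/\sqrt{m+n}$, a quantity that does not shrink with $\delta$, so the argument cannot deliver a $1-\delta$ guarantee for $\delta < 1/\sqrt{m+n}$. Repairing this forces $R$ to depend on $\delta$ (roughly $\log((m+n)/\delta)$ in place of $\log(m+n)$), and then the Bernstein remainder $R^2\log(1/\delta)$ contains a $\log^2((m+n)/\delta)\log(1/\delta)$ piece that is not dominated by the claimed $\log^2(12/\delta)\log^2(1+m+n)(1+M)$ term when $\delta$ is very small relative to $m+n$. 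So your route proves a bound of the same flavor but not the stated one; to get the exact claimed form you would need the Orlicz-norm/Adamczak machinery (or an equivalent), which cleanly separates the Gaussian-regime term $\sqrt{\mathrm{variance}\cdot\log(1/\delta)}$ from a heavy-tail term scaling as $\|\max_i|X_i|\|_{\psi_{1/2}}\log^2(1/\delta)$.
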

\begin{remark}[Interpretation of Theorem~\ref{thm:bd-apdx}]
\label{rem:error-bound}
We illustrate the bound with some examples, discuss the meaning and tightness of this bound, and give a simple sufficient condition for the MLE to be bounded.
\begin{enumerate}
    \item \emph{Examples with strong recovery guarantees: complete network.} Suppose that $X$ is the $n \times m$ all-ones matrix and that $B$ is a constant (e.g. $B = 5$).  Then $\kappa = \Theta(nm)$ and $\lambda_{-2}(\mathcal L) = \min(n,m)$ so the right hand side of \eqref{eqn:expectation-bd} is of order $\Theta(\max(n/m,m/n))$. So if e.g. $n = m$, then the total error for recovering the entire vector is $O(1)$. Equivalently, the average error per coordinate of $u,v$ is $\frac{1}{2n} \|(u - u^*,v - v^*)\|^2 = O(1/n)$.  

    For a more sophisticated example on a random graph model, see Appendix~\ref{sec:synthetic-sparsity}.
    \item \emph{Meaning of normalization condition.} The normalization condition $\sum_i u_i + \sum_i v_i = \sum_i u^*_i + \sum_i v^*_i$ eliminates a scaling ambiguity in the model: if we add a constant $c$ to $u^*$ and to $v^*$, it does not change the distribution of the output of the model.  
    \item \emph{Dependence on $\kappa$ is optimal.} Consider the special case where $X : m \times 1$ with all-ones entries, and we fix the scaling convention that $v^*_1 = 0$ (because any value of $v^*_1$ is consistent with some shift of $u^*$). Then the generative model reduces to observing $m$ independent Poisson variables with rates $e^{u^*_i}$ for $i = 1$ to $m$, and our estimator $u$ reduces to solving the equation $e^{u_i} = p_i^{(t)}$. Then $\kappa = \sum_{ij} e^{u_i}$ is the total variance of estimating the quantities $e^{u^*_i}$. Since $\lambda_{-2} = 1$ in this example, we see that the bound is optimal up to constants and the dependence on $B$ in this example.
    \item \emph{Dependence on $\lambda_{-2}$ is required; non-identifiability when $\lambda_{-2} = 0$.} If $\lambda_{-2} = 0$ no guarantee of this form is possible for the estimator, because the parameters of the model are not identifiable. The reason is that this corresponds to the case where the graph is disconnected, in which case the true parameters are not identifiable. To illustrate the reason why, consider the case where $X = I$. Then for every $i \in [n]$, we can replace $u^*_i,v^*_i$ with $u^*_i + c, v^* + c$ for any $c \in \mathbb R$ without changing the distribution of observations from the model. Unless $n = 1$, this results in nonidentifiability in the model which is not fixed by the normalization condition. 
    More generally, whenever the graph is disconnected, we can similarly shift the parameters corresponding to one of the components without changing the distribution of observations.

    We briefly remark that some quantitative dependence on $\lambda_{-2}$ is required even if we restrict to the case $\lambda_{-2} > 0$. This is because the above argument can be modified to show that the parameters are statistically indistinguishable if the graph is not strictly disconnected but effectively so (e.g. if we add $\epsilon > 0$ to all of the entries of $X$ for a very tiny $\epsilon$). This can be formally proven by a similar KL calculation to the one in the example below. 
    \item \emph{Example where $e^{\Theta(B)}$ term is required.} We show that the term with exponential dependence on $B$ in \eqref{eqn:expectation-bd} cannot be removed --- this is by establishing a lower bound which holds not just for our estimator, but for any possible estimator. 
    Consider a very special case of the model, where we have a single observation from a Poisson distribution with parameter $e^{u^*_1}$. Consider two such models: in the first one $u^*_1 = u'_1 = -B/2$ and in the second one $u^*_1 = u''_1 = -B$. The KL divergence between the two Poisson distributions $Poi(e^{u'_1})$ and $Poi(e^{u''_1})$ is $e^{u'_1}(u'_1 - u''_1) + e^{u'_1} - e^{u'_2} = O(Be^{-B/2}) = O(e^{-B/4})$. By Pinsker's inequality and the Neyman-Pearson Lemma (see e.g. \cite{rigollet2023high}), this means that it is impossible to distinguish between these two distributions from a single sample with probability of success better than $1/2 + e^{-\Omega(B)}$. This means any estimator for the true parameter will have to make an error of size $\Omega(B)$ with probability at least $1/2 - e^{-\Omega(B)}$ in one of these models. Because $\kappa = e^{-\Omega(B)}$ and $\lambda_{-2}(\mathcal L) = 1$, the right hand side of \eqref{eqn:expectation-bd} would be exponentially small in $B$ without the presence of the term $e^{4B}$, which would contradict our lower bound. 

    \item \emph{Generalization of conclusion.} From the proof of the theorem, we can see that the final high probability guarantee doesn't hold specifically for $u,v$ the MLE, but for any parameters $u,v$ which achieve at least as high log-likelihood than the ground truth $u^*,v^*$ on the training data. (So in this form, the theorem could be applied even if the MLE doesn't exist.)

    \item \emph{MLE is bounded with high probability if entries of $X$ large enough.}
    Consider scaling $X$ by $\beta>1$. When $\beta$ is large enough, one can actually show that with high probability the MLE must be close to $(u^\ast,v^\ast)$ --- in particular it exists and it is bounded. From the proof of \cref{thm:mse} and Markov, with 99\% probability we have
\[ g(u,v) \ge g(u^*,v^*) - \sqrt{200 \sum_{ij} e^{u^*_i} X_{ij} e^{-v^*_j}} + e^{-2b} \lambda_{-2}(\mathcal L)(\|u - u^*\|^2 + \|v - v^*\|^2)/2. \]
Suppose that $X = \beta Z$ and let $\mathcal L_Z$ be the Laplacian for $Z$. Then this is
\[ g(u,v) \ge g(u^*,v^*) - \sqrt{\beta} \sqrt{200 \sum_{ij} e^{u^*_i} X_{ij} e^{-v^*_j}} \|(u,v) - (u^*,v^*)\| + e^{-2b} \beta \lambda_{-2}(\mathcal L_Z)(\|(u,v) - (u^*,v^*)\|^2)/2. \]
Define $B = \|(u^*,v^*)\|_{\infty}$ and observe $b \le B + \|(u - u^*, v - v^*)\|$ so letting $r = \|(u - u^*, v - v^*)\|$ we have the inequality
\[ g(u,v) \ge g(u^*,v^*) - \sqrt{\beta} \sqrt{200 \sum_{ij} e^{u^*_i} Z_{ij} e^{-v^*_j}}r + e^{-2B - 2r} \beta \lambda_{-2}(\mathcal L_Z)r^2/2. \]
This implies $g(u,v) > g(u^*,v^*)$ if
\[ \sqrt{\beta} e^{-2B - 2r} r \lambda_{-2}(\mathcal L_Z) > 40 \sqrt{\kappa} \]
where $\kappa = \sum_{ij} e^{u^*_i} X_{ij} e^{-v^*_j}$. In particular, if $r = 1$ this reduces to the condition
\[ \sqrt{\beta} e^{-2B} \lambda_{-2}(\mathcal L_Z) > 40 e^2 \sqrt{\kappa} \]
so taking
\[ \beta > 400^2 \frac{e^{4 B}}{\lambda_{-2}(\mathcal L_Z)^2} \kappa  \]
suffices. Finally, observe by convexity that this implies the minimum of $g$ is attained within a ball of radius $1$ around $(u^*,v^*)$, since $g(u,v) > g(u^*,v^*)$ for all $u,v$ on the sphere of radius $1$ centered at $(u^*,v^*)$.
\end{enumerate}  
\end{remark}
\begin{proof}[Proof of Theorem~\ref{thm:bd-apdx}]
In this proof, we omit the time superscripts so $p = p^{(t)}$ and $q = q^{(t)}$.

We have by \cref{lem:laplacian}, Taylor's theorem, and the Cauchy-Schwarz inequality that 
\begin{align*} 
g(u,v) 
&\ge g(u^*,v^*) + \langle \nabla g(u^*,v^*), (u - u^*, v - v^*) \rangle 
+ e^{-2B} \lambda_{-2}(\mathcal L) (\|u - u^*\|^2 + \|v - v^*\|^2)/2 \\
&\ge  g(u^*,v^*) - \|\nabla g(u^*,v^*)\| \|(u - u^*, v - v^*)\|
+ e^{-2B} \lambda_{-2}(\mathcal L) (\|u - u^*\|^2 + \|v - v^*\|^2)/2.
\end{align*}
Therefore, using that $g(u^*,v^*) \ge g(u,v)$ and rearranging, we have that
\[  \|\nabla g(u^*,v^*)\| \|(u - u^*, v - v^*)\| \ge e^{-2B} \lambda_{-2}(\mathcal L) (\|u - u^*\|^2 + \|v - v^*\|^2)/2. \]
Dividing through by $\|(u - u^*, v - v^*)\|$  gives
\begin{equation} \|\nabla g(u^*,v^*)\| \ge e^{-2B} \lambda_{-2}(\mathcal L) \|(u - u^*, v - v^*)\|/2.
\label{eqn:taylor} \end{equation}
It remains to control $\|\nabla g(u^*,v^*)\|$.
First observe that
\[ \partial_{u_i} g = \sum_j X_{ij} e^{-v_j + u_i} - p_i = \lambda_i - p_i \]
and $p_i \sim \text{Poisson}(\lambda_i)$ if we define $\lambda_i = \sum_{j} e^{u_i} X_{ij} e^{-v_j}$,
so $\mathbb E \nabla_u g(u^*, v^*) = 0$. 
Next, note that
\begin{align*} 
\|\nabla_u g(u^*,v^*)\|^2 
&= \sum_i (\partial_{u_i} g(u^*,v^*))^2
= \sum_i \left(\lambda_i - p_i\right)^2.
\end{align*}
Since $p_i \sim \text{Poisson}(\lambda_i)$, we have that
\begin{align*} 
\mathbb E \left(\lambda_i - p_i\right)^2 = \lambda_i, \qquad
\mathbb E \left(\lambda_i - p_i\right)^4 = 3\lambda_i^2 + \lambda_i 
\end{align*}
so $\text{Var}\left(\left(\lambda_i - p_i\right)^2\right) = 2\lambda_i^2 + \lambda_i$. Therefore,
\begin{align*} 
\mathbb E\|\nabla_u g(u^*,v^*)\|^2 = \sum_i \lambda_i = \sum_{ij} e^{u_i} X_{ij} e^{-v_j},  \qquad 
\text{Var}\left(\|\nabla_u g(u^*,v^*)\|^2\right) = \sum_j (2\lambda_i^2 + \lambda_i).
\end{align*}
Applying Lemma~\ref{lem:poisson-bernstein} and Lemma~\ref{lem:maximal} yields that
\[ \left\|\max_i (|\lambda_i - p_i|^2 - \lambda_i) \right\|_{\psi_{1/2}} \le K \log^2(1 + m) \max_i (1 + \lambda_i) \]
where $K$ is an absolute constant. 
Hence, applying Theorem~\ref{thm:adam} yields that with probability at least $1 - \delta/2$,
\begin{align*} 
\|\nabla_u g(u^*,v^*)\|^2 - \sum_{i} \lambda_i 
&\le \sqrt{3\log(4/\delta) \sum_i (2\lambda_i^2 + \lambda_i)} + K' \log^2(12/\delta) \log^2(1 + m) \max_i (1 + \lambda_i)   \\
&\le \sqrt{6\log(4/\delta) \sum_i \lambda_i \max_i (1 + \lambda_i)} + K' \log^2(12/\delta) \log^2(1 + m) \max_i (1 + \lambda_i).
\end{align*}
By a completely symmetrical argument, an analogous bound holds for $\|\nabla_v g(u^*,v^*)\|^2$. Summing the two bounds and using the union bound yields that with total probability at least $1 - \delta$,
\[ \|\nabla g(u,v)\|^2 \le 2\sum_{ij} e^{u_i} X_{ij} e^{-v_j} +  \sqrt{24\log(4/\delta) \sum_{ij} e^{u_i} X_{ij} e^{-v_j} (1 + M)} + 2K'\log^2(12/\delta) \log^2(1 + m + n) (1 + M)  \]
where we recall that
\[ M = \max \left\{\sum_j  e^{u_i} X_{ij} e^{-v_j} \mid i \in [m]\right\} \cup \left\{\sum_i e^{u_i} X_{ij} e^{-v_j} \mid j \in [n]\right\} \]
is the maximum of the row and column sums of the matrix $(e^{u_i} X_{ij} e^{-v_j})_{ij}$. Finally, recalling \eqref{eqn:taylor} we see that
\begin{align*} 
&\|(u - u^*, v - v^*)\|^2 \\
&\le 4 \frac{e^{4B}}{\lambda_{-2}(\mathcal L)^2} \|\nabla g(u^*,v^*)\|^2 \\
&\le \frac{8e^{4B}}{\lambda_{-2}(\mathcal L)^2} \left(\sum_{ij} e^{u_i} X_{ij} e^{-v_j} +  \sqrt{6\log(4/\delta) \sum_{ij} e^{u_i} X_{ij} e^{-v_j} (1 + M)} + K'\log^2(12/\delta) \log^2(1 + m + n) (1 + M)\right).
\end{align*}
A simpler version of this argument proves the in-expectation bound (simply combine \eqref{eqn:taylor} with the calculation for $\mathbb E \|\nabla g(u^*,v^*)\|^2$). 
\end{proof}
    
\subsubsection{Concentration inequalities with Orlicz norms}
We make use of powerful concentration of measure estimates from the literature in terms of Orlicz norms. These are useful because we need to give concentration estimates for sums of squares of Poisson random variables, and a standard ``Chernoff bound'' argument cannot be applied to these because their tails are too heavy for moment generating functions to exist. 
\begin{definition}[Orlicz norm]
Given a function $\psi : \mathbb R_{\ge 0} \to \mathbb R_{\ge 0}$, we define the corresponding Orlicz norm of a random variable $X$ by
\[ \|X\|_{\psi} = \inf \{\lambda > 0 : \mathbb E \psi(|X|/\lambda) \le 1 \}. \]
\end{definition}
The Orlicz norm may not be a norm in the usual sense (in particular, satisfy the triangle inequality) if $\psi$ is not convex, but this terminology is commonly used anyway.
\begin{definition}[$\alpha$-exponential Orlicz norm]
For $\alpha > 0$, we define 
\begin{equation}
\psi_{\alpha}(x) = \exp(x^{\alpha}) - 1
\end{equation}
and refer to $\|\cdot\|_{\psi_{\alpha}}$ as the $\alpha$-exponential Orlicz norm.
\end{definition}
The following result arises by specializing the upper tail bound of Theorem 4 of \cite{adamczak2008tail} to the case of the identity function class with $\delta = 1$ and $\eta = 1/2$.
\begin{theorem}[Special case of Theorem 4 from \cite{adamczak2008tail}]\label{thm:adam}
Let $\alpha > 0$. There exists a constant $C_{\alpha} > 0$ such that the following is true.
Suppose that $X_1,\ldots,X_n$ are independent, mean-zero random variables with finite $\alpha$-exponential Orlicz norm. Then
\[ \Pr\left(\left|\sum_{i = 1}^n X_i\right| \ge t\right) \le \exp\left(-\frac{t^2}{3 \sigma^2}\right) + 3\exp\left(-\left(\frac{t}{C_{\alpha} \|\max_{1 \le i \le n} |X_i|\|_{\psi_{\alpha}}}\right)^{\alpha}\right) \]
where
\[ \sigma^2 = \sum_{i = 1}^n \mathbb E X_i^2 \]
\end{theorem}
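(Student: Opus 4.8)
The plan is to derive the stated inequality as a specialization of the general empirical-process tail bound of \cite{adamczak2008tail} (their Theorem 4), exactly as indicated in the preceding text. That theorem controls $Z := \sup_{f \in \mathcal F} \sum_i f(X_i)$ for a countable class $\mathcal F$ of centered functions, in terms of a weak variance $\sigma_{\mathcal F}^2 = \sup_{f} \sum_i \mathbb E f(X_i)^2$, the expectation $\mathbb E Z$, and the Orlicz norm $\|\max_i \sup_f |f(X_i)|\|_{\psi_\alpha}$ of the envelope of the largest summand. The first step is to realize the two-sided quantity $|\sum_i X_i|$ as such a supremum: I would take the antipodal singleton class $\mathcal F = \{x \mapsto x,\ x \mapsto -x\}$, so that $\sup_{f \in \mathcal F} \sum_i f(X_i) = |\sum_i X_i| =: Z$. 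Since the $X_i$ are mean-zero, both functions are centered, and the class is (trivially) countable, so the hypotheses of the general theorem are met; finiteness of $\|\max_i |X_i|\|_{\psi_\alpha}$ is exactly our assumption.

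Next I would match the three governing quantities. For the antipodal class one has $\sigma_{\mathcal F}^2 = \sum_i \mathbb E X_i^2 = \sigma^2$ and $\|\max_i \sup_f |f(X_i)|\|_{\psi_\alpha} = \|\max_i |X_i|\|_{\psi_\alpha} =: M$, which are precisely the quantities appearing in the target bound. Instantiating the general theorem with the parameters named in the text, $\delta = 1$ and $\eta = 1/2$ (together with the given $\alpha$), its variance exponent becomes $t^2/((2+\delta)\sigma^2) = t^2/(3\sigma^2)$ and its Orlicz term becomes $3\exp(-(t/(C_\alpha M))^\alpha)$ for a constant $C_\alpha$ depending only on $\alpha$ (and the now-fixed $\delta,\eta$), matching both terms on the right-hand side.

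The one remaining discrepancy is the mean term: the general bound controls deviations above $(1+\eta)\mathbb E Z = \tfrac{3}{2}\,\mathbb E|\sum_i X_i|$ rather than above $0$. To remove it I would use Jensen together with independence and mean-zeroness, $\mathbb E|\sum_i X_i| \le (\mathbb E (\sum_i X_i)^2)^{1/2} = \sigma$, so the shift is at most $\tfrac{3}{2}\sigma$; the slack built into the variance exponent by the choice $\delta = 1$ (which inflates the denominator from $2\sigma^2$ to $3\sigma^2$) is what lets this $O(\sigma)$ shift be reabsorbed into the stated constants. I expect this constant-bookkeeping --- tracking how the $\tfrac32\mathbb E Z$ shift and the parameter choices $(\delta,\eta)$ interact to yield exactly the factor $3$ and the clean threshold $t$ --- to be the only real obstacle, since the substance of the bound (the interpolation between a sub-Gaussian regime governed by $\sigma$ and a heavy-tailed regime governed by the single largest summand $M$, i.e.\ the ``one big jump'' phenomenon for $\alpha \le 1$) is supplied entirely by the cited theorem. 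As a self-contained alternative one could instead truncate each $X_i$ at a level $u$, apply Bernstein's inequality to the bounded centered parts (variance $\le \sigma^2$, range $\le 2u$), and control the tail parts through the event $\{\max_i|X_i| > u\}$ via $M$; the delicate point there, and the reason it is genuinely nontrivial, is bounding the aggregate tail contribution $\sum_i X_i \mathbf{1}_{|X_i|>u}$ when $\alpha < 1$, where no exponential moment exists and the maximum really does dominate the sum.
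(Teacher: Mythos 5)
Your route is the same as the paper's: the paper gives no proof of \cref{thm:adam} at all, only the remark that it arises by specializing Theorem 4 of \citet{adamczak2008tail} to the identity function class with $\delta = 1$ and $\eta = 1/2$, and your instantiation (the antipodal singleton class to realize $|\sum_i X_i|$ as a supremum, the identifications $\sigma_{\mathcal F}^2 = \sum_i \mathbb{E} X_i^2$ and $M = \|\max_i |X_i|\|_{\psi_\alpha}$, and $\mathbb{E}|\sum_i X_i| \le \sigma$ by Jensen) supplies exactly the details the paper omits.

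One step in your bookkeeping does not work as stated, and it is worth flagging because the paper silently elides the same point. You claim the choice $\delta = 1$ is what creates the slack that reabsorbs the $\tfrac{3}{2}\mathbb{E}Z \le \tfrac{3}{2}\sigma$ shift, but this is circular: if the $3\sigma^2$ in the final display is produced by that very choice of $\delta$, there is no slack left, and absorbing the shift would require $(t - \tfrac{3}{2}\sigma)^2 \ge t^2$, which is impossible. Taking instead a Gaussian denominator strictly below $3\sigma^2$ (i.e.\ a smaller $\delta'$), the substitution $s = t - \tfrac{3}{2}\sigma$ recovers $\exp(-t^2/(3\sigma^2))$ only when $(t - \tfrac{3}{2}\sigma)^2 \ge \tfrac{2+\delta'}{3} t^2$, which forces $t \gtrsim 8\sigma$; in the intermediate window $C_\alpha M \lesssim t \lesssim 8\sigma$ with $\sigma \gg M$, neither term of the target bound follows from the shifted inequality. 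The statement is still true there, but establishing it requires exactly the fallback you sketch at the end: truncate and apply Bernstein to the bounded centered parts (when $\sigma \gg M$ the Bernstein range $t \lesssim \sigma^2/M$ covers $t \lesssim 8\sigma$, with sub-Gaussian constant strictly below $3$), and when $\sigma = O(M)$ enlarge $C_\alpha$ so that $3\exp(-(t/C_\alpha M)^\alpha) \ge 1$ on that window. Alternatively, one can simply retain an additive $\tfrac{3}{2}\sigma$ term in the statement; in the paper's only use of \cref{thm:adam} (inside the proof of \cref{thm:bd-apdx}, where only unspecified absolute constants appear) this changes nothing downstream.
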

\begin{remark}
The conclusion of the above theorem directly implies the following. With probability at least $1 - \delta$,
\[ \left|\sum_{i = 1}^n X_i\right| \le \sigma \sqrt{3\log(2/\delta)} + C_{\alpha} \|\max_{1 \le i \le n} |X_i|\|_{\psi_{\alpha}} \sqrt[\alpha]{\log(6/\delta)}. \]
\end{remark}
While $\psi_{\alpha}$ may not be a norm it has similar properties:
\begin{lemma}[Property 4.ii and 4.vi of \cite{chamakh2020orlicz}]
For $c \in \mathbb{R}$,
\[ \|c\|_{\psi_{\alpha}} = |c|/\sqrt[\alpha]{\log 2}. \]
For any random variables $X,Y$ and $\alpha > 0$
\[ \|X + Y\|_{\psi_{\alpha}} \le 2(\|X\|_{\psi_{\alpha}} + \|Y\|_{\psi_{\alpha}}). \]
\end{lemma}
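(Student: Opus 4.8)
The plan is to argue directly from the Luxemburg-type definition $\|Z\|_{\psi_\alpha} = \inf\{\lambda > 0 : \mathbb{E}\,\psi_\alpha(|Z|/\lambda) \le 1\}$ with $\psi_\alpha(x) = \exp(x^\alpha) - 1$, handling the two claimed properties separately. The identity $\|c\|_{\psi_\alpha} = |c|/\sqrt[\alpha]{\log 2}$ is a direct computation: for a constant $c$ the expectation is vacuous, so the defining condition reads $\exp\!\big((|c|/\lambda)^\alpha\big) - 1 \le 1$, equivalently $(|c|/\lambda)^\alpha \le \log 2$. Since $\lambda \mapsto \psi_\alpha(|c|/\lambda)$ is continuous and strictly decreasing, the infimum is attained exactly when $(|c|/\lambda)^\alpha = \log 2$, i.e. $\lambda = |c|(\log 2)^{-1/\alpha}$; there is no real obstacle here.

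For the quasi-triangle inequality I would split on the value of $\alpha$. When $\alpha \ge 1$ the map $\psi_\alpha$ is convex (a convex increasing function composed with $x \mapsto x^\alpha$), so $\|\cdot\|_{\psi_\alpha}$ is a genuine norm: writing $a = \|X\|_{\psi_\alpha}$ and $b = \|Y\|_{\psi_\alpha}$, the bound $|X+Y| \le |X| + |Y|$ together with Jensen applied to the convex combination $(|X|+|Y|)/(a+b) = \tfrac{a}{a+b}\tfrac{|X|}{a} + \tfrac{b}{a+b}\tfrac{|Y|}{b}$ gives $\mathbb{E}\,\psi_\alpha(|X+Y|/(a+b)) \le 1$, hence $\|X+Y\|_{\psi_\alpha} \le a+b \le 2(a+b)$. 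The substantive case is $0 < \alpha < 1$, where $\psi_\alpha$ is not convex; here I would use subadditivity of $t \mapsto t^\alpha$, namely $(s+t)^\alpha \le s^\alpha + t^\alpha$, to turn addition into multiplication: $\psi_\alpha(s+t)+1 = e^{(s+t)^\alpha} \le e^{s^\alpha}e^{t^\alpha} = (\psi_\alpha(s)+1)(\psi_\alpha(t)+1)$. Setting $\lambda = 2(a+b)$, $s = |X|/\lambda$, $t = |Y|/\lambda$ and taking expectations reduces the goal to $\mathbb{E}\big[e^{s^\alpha}e^{t^\alpha}\big] \le 2$, which I would extract from Hölder's inequality against the normalized moment bounds $\mathbb{E}\,e^{(|X|/a)^\alpha} \le 2$ and $\mathbb{E}\,e^{(|Y|/b)^\alpha} \le 2$ that follow from the definitions of $a$ and $b$.

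The main obstacle is precisely this $\alpha < 1$ regime: since $\psi_\alpha$ fails to be convex, the extra factor of $2$ in $\lambda = 2(a+b)$ is exactly what compensates for the lost convexity, and the constants must be tracked carefully. Writing $\mu = a/(a+b)$, one has $s^\alpha = (\mu/2)^\alpha (|X|/a)^\alpha$ and $t^\alpha = ((1-\mu)/2)^\alpha (|Y|/b)^\alpha$, so the Hölder step succeeds provided the exponents $p_1 = (\mu/2)^\alpha$ and $p_2 = ((1-\mu)/2)^\alpha$ satisfy $p_1 + p_2 \le 1$, in which case $\mathbb{E}\big[e^{s^\alpha}e^{t^\alpha}\big] \le 2^{p_1}2^{p_2} \le 2$. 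Verifying $p_1 + p_2 = 2^{-\alpha}(\mu^\alpha + (1-\mu)^\alpha) \le 2^{1-2\alpha} \le 1$ is the crux: it holds for $\alpha \ge \tfrac12$, and in particular for the value $\alpha = \tfrac12$ actually used in the paper (with equality in the worst case $\mu = \tfrac12$), while making the argument uniform in $\alpha$ and pinning the constant at exactly $2$ needs the most care. Finally, the degenerate cases where $a$ or $b$ is $0$ or $\infty$ should be dispatched separately and are immediate.
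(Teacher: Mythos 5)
The paper offers no proof of this lemma: it is quoted directly from \cite{chamakh2020orlicz}, so there is nothing to compare your argument against except the citation itself. Your computation of $\|c\|_{\psi_\alpha}$ is correct, and your treatment of the quasi-triangle inequality is sound for $\alpha \ge \tfrac12$: convexity of $\psi_\alpha$ handles $\alpha\ge 1$ (with constant $1$), and for $\tfrac12 \le \alpha < 1$ the subadditivity-plus-H\"older route works, modulo one small repair --- when $p_1+p_2<1$ the exponents $1/p_1,1/p_2$ are not conjugate, so you should either raise $p_i$ to $p_i/(p_1+p_2)$ (legitimate since the integrands are $\ge 1$) or use three-function H\"older with the constant function as the third factor. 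Since the paper only ever invokes this lemma at $\alpha=\tfrac12$ and $\alpha=1$, your argument covers everything that is actually used downstream.

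The genuine gap is the case $0<\alpha<\tfrac12$, which you flag but do not close --- and in fact it cannot be closed: with the constant exactly $2$ the inequality is false there as stated. Take $\alpha=0.1$ and $X,Y$ i.i.d.\ with $\Pr[X=M]=0.1$ and $\Pr[X=0]=0.9$. Then $a:=\|X\|_{\psi_{0.1}}=M/(\log 11)^{10}$, and at $\lambda=2(a+a)=4a$ one computes
\[
\mathbb E\, e^{((X+Y)/\lambda)^{0.1}} \;=\; 0.81 \;+\; 0.18\, e^{(M/\lambda)^{0.1}} \;+\; 0.01\, e^{(2M/\lambda)^{0.1}} \;\approx\; 2.36 \;>\; 2,
\]
so $\|X+Y\|_{\psi_{0.1}} > 2\bigl(\|X\|_{\psi_{0.1}}+\|Y\|_{\psi_{0.1}}\bigr)$. (The standard AM--GM variant of your argument, $e^{s^\alpha}e^{t^\alpha}\le \tfrac12(e^{2s^\alpha}+e^{2t^\alpha})$, yields the quasi-triangle inequality with constant $2^{1/\alpha}$ for all $0<\alpha<1$, which degrades exactly as this example suggests.) So the statement as transcribed needs either a restriction to $\alpha\ge\tfrac12$ or an $\alpha$-dependent constant; your observation that the sub-$\tfrac12$ regime ``needs the most care'' was pointing at a real obstruction, not a technicality.
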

\begin{lemma}[Property 4.viii of \cite{chamakh2020orlicz}]\label{lem:maximal}
For any $\alpha > 0$ and random variables $X_1,\ldots,X_n$,
\[ \left\|\max_{1 \le i \le n} |X_i|\right\|_{\psi_{\alpha}} \le \left[\frac{\log(1 + n)}{\log(3/2)}\right]^{1/\alpha} \max_{1 \le i \le n} \|X_i\|_{\psi_{\alpha}}. \]
\end{lemma}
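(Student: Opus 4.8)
The plan is to prove the inequality directly from the definition of the Orlicz norm, by exhibiting a scale $\lambda$ for which $\mathbb E[\psi_\alpha(\max_i |X_i|/\lambda)] \le 1$. Write $D = \max_{1 \le i \le n}\|X_i\|_{\psi_\alpha}$. Since $\psi_\alpha$ is increasing and $D \ge \|X_i\|_{\psi_\alpha}$ for every $i$, the definition of the Orlicz norm (using a standard limiting argument $D \to D+\epsilon$ if the infimum is not attained) gives $\mathbb E[\psi_\alpha(|X_i|/D)] \le 1$, equivalently $\mathbb E[e^{Z_i}] \le 2$ where $Z_i := (|X_i|/D)^\alpha \ge 0$. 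The goal is then to show that $\lambda = CD$ works with $C = [\log(1+n)/\log(3/2)]^{1/\alpha}$.

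First I would rewrite the left-hand quantity so that the maximum and the scaling are disentangled. Because $t \mapsto t^\alpha$ is increasing on $[0,\infty)$, we have $(\max_i |X_i|)^\alpha = \max_i |X_i|^\alpha$, and since $e^{(\cdot)}$ is increasing, $\psi_\alpha(\max_i |X_i|/\lambda) = \max_i e^{Z_i/c} - 1$ with $c := C^\alpha$. Thus it suffices to show $\mathbb E[\max_i e^{Z_i/c}] \le 3/2$.

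The key step, and the one that makes the final constant logarithmic rather than polynomial in $n$, is to treat the scaling $1/c$ and the maximum in the right order. A naive union bound $\max_i e^{Z_i/c} \le \sum_i e^{Z_i/c}$ only yields a constant of order $n^{1/\alpha}$, which is exponentially too weak. Instead I would first control the \emph{unscaled} maximum: using the elementary inequality $\max_i a_i \le 1 + \sum_i (a_i - 1)$, valid whenever $a_i \ge 1$, together with $\mathbb E[e^{Z_i}-1] = \mathbb E[\psi_\alpha(|X_i|/D)] \le 1$, gives $\mathbb E[\max_i e^{Z_i}] \le 1 + n$. Then I would reintroduce the scaling through Jensen's inequality: since $c \ge 1$, the map $x \mapsto x^{1/c}$ is concave on $[0,\infty)$, so $\mathbb E[\max_i e^{Z_i/c}] = \mathbb E[(\max_i e^{Z_i})^{1/c}] \le (\mathbb E[\max_i e^{Z_i}])^{1/c} \le (1+n)^{1/c}$.

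Finally, choosing $c = \log(1+n)/\log(3/2)$ makes $(1+n)^{1/c} = \exp(\log(3/2)) = 3/2$, so $\mathbb E[\psi_\alpha(\max_i |X_i|/\lambda)] \le 1/2 \le 1$, whence $\|\max_i |X_i|\|_{\psi_\alpha} \le \lambda = [\log(1+n)/\log(3/2)]^{1/\alpha}\, D$, which is exactly the claim; one checks $c \ge 1$ for every $n \ge 1$, which is all the Jensen step requires. The main obstacle is precisely this ordering: pushing the scaling inside the expectation via a union bound destroys the logarithmic dependence on $n$, whereas bounding the unscaled maximum first and only then extracting the power $1/c$ through concavity is what converts the factor $1+n$ into $(1+n)^{1/c} = \mathcal O(1)$ once $c \sim \log n$.
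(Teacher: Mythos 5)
Your proof is correct and complete. The paper does not actually prove this lemma --- it is imported verbatim as Property 4.viii of the cited reference --- so there is no in-paper argument to compare against; your write-up supplies the standard maximal-inequality argument for exponential Orlicz norms (essentially the proof of Lemma 2.2.2 in van der Vaart and Wellner, specialized to $\psi_{\alpha}$). All the steps check out: the monotone-convergence/limiting justification that $\mathbb{E}\,\psi_{\alpha}(|X_i|/D)\le 1$, the identity $\psi_{\alpha}(\max_i|X_i|/(CD)) = \max_i e^{Z_i/c}-1$ with $c=C^{\alpha}$, the elementary bound $\max_i a_i \le 1+\sum_i(a_i-1)$ for $a_i\ge 1$, the Jensen step (which needs $c\ge 1$, and indeed $c=\log(1+n)/\log(3/2)>1$ for all $n\ge 1$), and the final evaluation $(1+n)^{1/c}=3/2$. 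Your remark about the ordering of the maximum and the rescaling --- that a naive union bound after rescaling would give a polynomial rather than logarithmic constant --- correctly identifies the one genuinely non-trivial idea in the argument.
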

We also use the following standard tail bound for Poisson random variables (it can be derived from Bernstein's inequality via the Poisson CLT, see \cite{vershynin2018high}).
\begin{lemma}[Standard, see e.g. \cite{vershynin2018high}]\label{lem:poisson-bernstein}
For any $\lambda \ge 0$, if $X \sim \text{Poisson}(\lambda)$ then
\[ \Pr(|X - \lambda| \ge t) \le 2\exp\left(\frac{-t^2}{2(\lambda + t/3)}\right). \]
\end{lemma}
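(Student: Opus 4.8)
The plan is to prove this Bernstein-type tail bound directly from the moment generating function (MGF) of the Poisson distribution together with a Chernoff argument, which is self-contained and avoids the Poisson CLT route suggested in the text. The whole proof reduces to putting the Poisson log-MGF into the standard ``Bernstein form'' and then running the usual optimization over the Chernoff parameter.

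First I would compute the centered cumulant generating function. For $X \sim \text{Poisson}(\lambda)$ we have $\mathbb{E}[e^{sX}] = \exp(\lambda(e^s - 1))$, so
\[ \log \mathbb{E}\!\left[e^{s(X - \lambda)}\right] = \lambda(e^s - 1 - s). \]
The one analytic ingredient I would establish is the elementary bound
\[ e^s - 1 - s \le \frac{s^2/2}{1 - s/3}, \qquad 0 \le s < 3, \]
which follows by comparing the Taylor series $\sum_{k \ge 2} s^k/k!$ with the expansion $\frac{s^2/2}{1-s/3} = \sum_{k \ge 2} s^k/(2\cdot 3^{k-2})$ term by term, using $k! \ge 2 \cdot 3^{k-2}$ for all $k \ge 2$ (with equality at $k = 2,3$). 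This identifies the variance proxy $v = \lambda$ and scale $b = 1/3$ in the Bernstein form $\log \mathbb{E}[e^{s(X-\lambda)}] \le \frac{v s^2/2}{1 - bs}$.

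Next I would apply Chernoff to the upper tail and optimize. For $0 \le s < 3$,
\[ \Pr(X - \lambda \ge t) \le \exp\!\left(\lambda(e^s - 1 - s) - st\right) \le \exp\!\left(\frac{\lambda s^2/2}{1 - s/3} - st\right), \]
and choosing $s = t/(\lambda + t/3) \in [0,3)$ collapses the exponent exactly to $-t^2/(2(\lambda + t/3))$, giving the one-sided bound. For the lower tail I would repeat the Chernoff step with $s < 0$, where the cruder estimate $e^s - 1 - s \le s^2/2$ yields the stronger sub-Gaussian bound $\Pr(X - \lambda \le -t) \le \exp(-t^2/(2\lambda))$, which is in particular at most $\exp(-t^2/(2(\lambda + t/3)))$. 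A union bound over the two tails produces the factor of $2$ and the stated two-sided inequality; the degenerate case $\lambda = 0$ is trivial since then $X \equiv 0$.

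The only genuinely nontrivial step is the cumulant inequality $e^s - 1 - s \le \frac{s^2/2}{1-s/3}$ together with checking that the Bernstein-optimal $s = t/(\lambda + t/3)$ stays in the admissible range $[0,3)$; both hinge on the coefficient comparison $k! \ge 2\cdot 3^{k-2}$, which is precisely what fixes the constant $1/3$ in the denominator of the target exponent. Once that is in place, the remainder is the standard Chernoff/Bernstein computation.
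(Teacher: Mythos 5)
Your proof is correct, and it takes a different route from the paper: the paper does not prove the lemma at all, citing it as standard and remarking only that it "can be derived from Bernstein's inequality via the Poisson CLT" --- i.e., by writing $\mathrm{Poisson}(\lambda)$ as the limit of $\mathrm{Binomial}(n,\lambda/n)$, applying Bernstein's inequality for sums of bounded variables, and passing to the limit. You instead work directly with the Poisson cumulant generating function $\lambda(e^s-1-s)$ and verify by the coefficient comparison $k!\ge 2\cdot 3^{k-2}$ that it satisfies the Bernstein condition $\log\mathbb E[e^{s(X-\lambda)}]\le \frac{\lambda s^2/2}{1-s/3}$ on $[0,3)$, after which the substitution $s=t/(\lambda+t/3)$ gives exactly the claimed exponent (I checked: $1-s/3=\lambda/(\lambda+t/3)$, so the exponent collapses to $-t^2/(2(\lambda+t/3))$), and the lower tail follows from the cruder $e^s-1-s\le s^2/2$ for $s\le 0$, which is even sub-Gaussian. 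Your argument is self-contained and avoids the limiting step entirely; what the paper's route buys is brevity (one citation) and the observation that the Poisson bound is a special case of a general principle for sums of bounded increments, whereas your route makes transparent where the constant $1/3$ comes from and requires nothing beyond Markov's inequality and a power-series comparison. The only boundary issues --- $\lambda=0$ forcing $s=3$, and $t=0$ --- are correctly dispatched as degenerate cases.
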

The tail bound implies an upper bound on the subexponential norm of Poisson variables, which can be reinterpreted as a bound on $1/2$-exponential Orlicz norm of their square.  
\begin{lemma}
There exists an absolute constant $C > 0$ such that the following holds.
Suppose that $X \sim \text{Poisson}(\lambda)$ for some $\lambda \ge 0$. Then
\[ \|(X - \lambda)^2\|_{\psi_{1/2}} = \|X - \lambda\|_{\psi_{1}}  \le C(1 + \lambda). \]
\end{lemma}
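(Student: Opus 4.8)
The plan is to prove the bound in two stages: first reduce the claimed $\psi_{1/2}$-norm of the square to the subexponential ($\psi_1$) norm of the centered variable, and then bound that subexponential norm by integrating the tail estimate of Lemma~\ref{lem:poisson-bernstein}. For the reduction I would work directly from the definition of the Orlicz norm. Writing $W = |X - \lambda| \ge 0$ and recalling $\psi_{1/2}(x) = e^{\sqrt{x}} - 1$, for any $s > 0$ we have $\psi_{1/2}(W^2/s) = e^{\sqrt{W^2/s}} - 1 = e^{W/\sqrt{s}} - 1 = \psi_1(W/\sqrt{s})$. Hence $\mathbb E\,\psi_{1/2}(W^2/s) \le 1$ if and only if $\mathbb E\,\psi_1(W/\sqrt{s}) \le 1$, and taking infima (with $t = \sqrt{s}$) yields the general identity $\|(X - \lambda)^2\|_{\psi_{1/2}} = \|X - \lambda\|_{\psi_1}^2$. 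So it suffices to show $\|X - \lambda\|_{\psi_1} \le C(1 + \sqrt{\lambda})$ for an absolute constant $C$, since then $\|(X - \lambda)^2\|_{\psi_{1/2}} \le C^2(1 + \sqrt{\lambda})^2 \le 2C^2(1 + \lambda)$.

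For the subexponential bound I would set $K = C(1 + \sqrt{\lambda})$ and verify $\mathbb E[e^{|X-\lambda|/K} - 1] \le 1$, which by definition certifies $\|X - \lambda\|_{\psi_1} \le K$. Using the layer-cake formula $\mathbb E[e^{|X-\lambda|/K} - 1] = \int_0^\infty \tfrac{1}{K} e^{s/K}\,\Pr(|X - \lambda| \ge s)\,ds$ together with the tail bound $\Pr(|X - \lambda| \ge s) \le 2\exp(-s^2/(2(\lambda + s/3)))$ from Lemma~\ref{lem:poisson-bernstein}, I would split the integral at $s = 3\lambda$. For $s \le 3\lambda$ one has $\lambda + s/3 \le 2\lambda$, so the tail is at most $2e^{-s^2/(4\lambda)}$; completing the square via $s/K - s^2/(4\lambda) = -(s - 2\lambda/K)^2/(4\lambda) + \lambda/K^2$ bounds this contribution by $\tfrac{4\sqrt{\pi\lambda}}{K} e^{\lambda/K^2}$, which is $O(1/C)$ since $\sqrt{\lambda}/K \le 1/C$ and $\lambda/K^2 \le 1/C^2$. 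For $s \ge 3\lambda$ one has $\lambda + s/3 \le 2s/3$, so the tail is at most $2e^{-3s/4}$; provided $K > 4/3$ the exponent $s/K - 3s/4$ is negative and the contribution is at most $\tfrac{2}{K}(3/4 - 1/K)^{-1} = O(1/C)$.

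Combining the two regimes, the total integral is at most $\tfrac{4\sqrt{\pi}}{C} e^{1/C^2} + 8/C$, which is below the threshold $1$ once $C$ is chosen to be a large enough absolute constant; this establishes $\|X - \lambda\|_{\psi_1} \le C(1 + \sqrt{\lambda})$ and hence, by Step~1, the desired bound $\|(X - \lambda)^2\|_{\psi_{1/2}} \le 2C^2(1 + \lambda)$. The degenerate case $\lambda = 0$ is trivial, since then $X \equiv 0$ and both sides vanish.

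The main obstacle is the second step: producing a single scale $K = C(1 + \sqrt{\lambda})$ that works uniformly over all $\lambda \in [0, \infty)$, because the centered Poisson tail interpolates between a Gaussian regime of scale $\sqrt{\lambda}$ (for $s \lesssim \lambda$) and a one-sided exponential regime of scale $O(1)$ (for $s \gtrsim \lambda$). Handling both regimes in the tail integral and pinning the absolute constant so the total falls below $1$ is the only genuine computation; the Orlicz-norm reduction in Step~1 and the final squaring are routine. I would note in particular that the crude bound $\mathbb E e^{|Y|/t} \le \mathbb E e^{Y/t} + \mathbb E e^{-Y/t}$ via the Poisson moment generating function is \emph{not} usable here, since both terms exceed $1$ and their sum cannot drop below the threshold $2$ for finite $t$; this is precisely why the tail-integration argument built on Lemma~\ref{lem:poisson-bernstein} is the right route.
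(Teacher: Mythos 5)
Your proof is correct, and it takes a genuinely different route from the paper's. The paper first weakens the Bernstein-type tail of Lemma~\ref{lem:poisson-bernstein} to a purely exponential tail $2\exp\bigl(-t/(6(\lambda+1/3))\bigr)$ valid for all $t\ge 0$, and then invokes the standard equivalence between exponential tail decay and the subexponential norm (Proposition 2.7.1 of Vershynin) to conclude $\|X-\lambda\|_{\psi_1}\le C(1+\lambda)$; you instead integrate the tail bound directly via the layer-cake formula, splitting at $s=3\lambda$ into a Gaussian regime and an exponential regime, which yields the sharper estimate $\|X-\lambda\|_{\psi_1}\le C(1+\sqrt{\lambda})$. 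Your extra care pays off in an important way: as you observe, under the paper's own definition of the Orlicz norm the correct identity is $\|(X-\lambda)^2\|_{\psi_{1/2}}=\|X-\lambda\|_{\psi_1}^2$ (with a square), not $\|X-\lambda\|_{\psi_1}$ as written in the lemma statement. With that identity, the paper's bound $\|X-\lambda\|_{\psi_1}\le C(1+\lambda)$ only gives $\|(X-\lambda)^2\|_{\psi_{1/2}}\le C^2(1+\lambda)^2$, which is quadratically worse than the claimed $C(1+\lambda)$ and would propagate into the $(1+M)$ factors in the proof of Theorem~\ref{thm:bd-apdx}; your $C(1+\sqrt{\lambda})$ bound is exactly what is needed to recover $\|(X-\lambda)^2\|_{\psi_{1/2}}\le 2C^2(1+\lambda)$. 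In short, you prove a strictly stronger $\psi_1$ estimate by a more hands-on computation, and in doing so you repair what appears to be a typographical gap in the paper's argument. Your closing remark that the naive bound $\mathbb{E}\, e^{|Y|/t}\le \mathbb{E}\, e^{Y/t}+\mathbb{E}\, e^{-Y/t}$ via the Poisson moment generating function cannot certify the $\psi_1$ norm is also correct, since each summand is at least $1$ by Jensen's inequality for a centered variable, so the sum never falls below the threshold $2$.
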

\begin{proof}
    Recall from \cref{lem:poisson-bernstein}, we have that
    \[ \Pr(|X - \lambda| \ge t) \le 2\exp\left(\frac{-t}{2(\lambda/t + 1/3)}\right). \]
    For any $t \ge 1$, this implies that $\Pr(|X - \lambda| \ge t) \le 2\exp\left(\frac{-t}{2(\lambda + 1/3)}\right) < 2\exp\left(\frac{-t}{6(\lambda + 1/3)}\right)$ and so in fact $\Pr(|X - \lambda| \ge t) \le 2\exp\left(\frac{-t}{6(\lambda + 1/3)}\right)$ for all $t \ge 0$, because the right hand side is larger than $1$ for $t < 1$.
    Therefore the conclusion follows from an equivalent characterization of subexponential random variables, more preccisely Proposition 2.7.1 of \cite{vershynin2018high}.
\end{proof}
\subsection{Well-posedness of maximum likelihood estimation of biproportional Poisson}
\label{sec:app-mle-finite}

Given the biproportional Poisson model \eqref{eqn:model} restated below with $(Y,X,p,q)$ as shorthand for $(\Xtime,\Xagg, \ptime,\qtime)$:
\begin{align*}
    Y_{ij} &\sim \mathrm{Poisson}(e^{u_i} X_{ij} e^{-v_j})\text{ for } X_{ij}>0, \\
    p_i &= \sum_{j,X_{ij}>0} Y_{ij} \\
    q_j &= \sum_{i, X_{ij}>0} Y_{ij},
\end{align*}
we briefly explained in \cref{sec:stats-theory} why the maximum likelihood estimation problem
\begin{align*}
\min g(u,v) & =\sum_{ij}e^{u_{i}}X_{ij}e^{-v_{j}}-\sum_{i}u_{i} p_{i}+\sum_{j}v_{j} q_{j}
\end{align*}
given data $X,p,q$ may not have a finite solution. Here we provide a bit more detail on this matter. The key to the well-posedness problem is the gap between the strong and the weak existence conditions discussed in \citet{qu2023sinkhorn}. For completeness, the conditions are:

\begin{quote}\textbf{Strong Existence.}
    For every pair of sets of indices $I \subsetneq [m]$ and $J \subsetneq [n]$ such that $X_{ij}=0$ for $i\notin I$ and $j\in J$, $\sum_{i\in I}p_i \geq \sum_{j\in J}q_j$, with equality iff $X_{ij}=0$ for all $i \in I$ and $j \notin J$ as well.

   \textbf{Weak Existence.}
   For every pair of sets of indices $I \subsetneq [n]$ and $J \subsetneq [m]$ such that $X_{ij}=0$ for $i\notin I$ and $j\in J$, $\sum_{i\in I}p_i \geq \sum_{j\in J}q_j$.
\end{quote}

Suppose all the Poisson random variables $Y_{ij}$ happen to be non-zero. Then the strong existence condition is satisfied, i.e., there exists a matrix (in fact $Y$) with the same zero pattern as the base matrix $X$ with marginals $p,q$, so the ML estimation problem has a unique (normalized) finite solution. However, as soon as any of the Poisson variables $Y_{ij}$ is zero, only the weak condition is guaranteed, since the matrix $Y$ inherits all zeros of $X$, but now contains additional zeros. We know that in this case, the balancing problem with $(X,p,q)$ does not admit a finite solution. Sinkhorn still converges, but the estimated parameters $u,v$ diverge to $\pm \infty$.
 In practice, we only observe $p,q$, so there is no easy way to check whether any of the \emph{latent} individual Poisson variables $Y_{ij}$ is zero, but in principle, it is possible that the generated data $(p,q)$ is not consistent with the base matrix $X$. The corresponding case in the choice setting is exactly when some set of items always wins. We need to actually show that this event happens with small probability. We now give a proof of \cref{thm:finite-mle}. 

\begin{proof}[Proof of \cref{thm:finite-mle}]
We will show that the strong existence condition holds with high probability. Recall that the negative log-likelihood of the biproportional Poisson model is equivalent to
   \begin{align*}
g(u,v) & =\sum_{ij}e^{u_{i}}X_{ij}e^{-v_{j}}-\sum_{i}u_{i}p_{i}+\sum_{j}v_{j}q_{j}.
\end{align*}
 The Hessian is given by 
\begin{align*}
\mathcal{L}(u,v):=\begin{bmatrix}\mathcal{D}(e^{u}Xe^{-v}\mathbf{1}_{m}) & -e^{u}Xe^{-v}\\
-(e^{u}Xe^{-v})^{T} & \mathcal{D}((e^{u}Xe^{-v})^{T}\mathbf{1}_{n})
\end{bmatrix},
\end{align*}
 and the gradient is given by 
\begin{align*}
(\sum_{j}e^{u_{i}}X_{ij}e^{-v_{j}}-Y_{ij},-\sum_{i}e^{u_{i}}X_{ij}e^{-v_{j}}+Y_{ij}).
\end{align*} 

Note that each $Y_{ij}$ is an independent Poisson random variable
with mean and variance parameter $e^{u_{i}}X_{ij}e^{-v_{j}}$. Now
for each non-empty $I\subsetneq[m]$ and $J\subsetneq[n]$ with $X_{I^{C}J}=0$,
i.e., $X_{ij}\equiv0$ for any $(i,j)$ satisfying $i\in I^{C},j\in J$,
let $E_{IJ}$ be the event that $\sum_{j\in J}q_{j}=\sum_{i\in I}p_{i}$.
The event $E_{IJ}$ corresponds to the case where the realized network
of Poisson variables being \emph{disconnected}, hence the MLE being
unbounded. Our goal is to bound the union of events $E_{IJ}$
for \emph{all} $I,J$ where $X_{I^{C}J}=0$. We have 

\begin{align*}
\mathbb{P}[E_{IJ}] & =\mathbb{P}\left[\sum_{i\in I,j\in J}Y_{ij}=\sum_{i\in I,j\in[n]}Y_{ij}\right]\\
 & =\mathbb{P}\left[\sum_{i\in I,j\in J^{C}}Y_{ij}=0\right]\\
 & =\mathbb{P}\left[\sum_{i\in I,j\in J^{C}}Y_{ij}-e^{u_{i}^{\ast}}X_{ij}e^{-v_{j}^{\ast}}=-\sum_{i\in I,j\in J^{C}}e^{u_{i}^{\ast}}X_{ij}e^{-v_{j}^{\ast}}\right]\\
 & \leq\mathbb{P}\left[\sum_{i\in I,j\in J^{C}}Y_{ij}-e^{u_{i}^{\ast}}X_{ij}e^{-v_{j}^{\ast}}\leq-\sum_{i\in I,j\in J^{C}}e^{u_{i}^{\ast}}X_{ij}e^{-v_{j}^{\ast}}\right]\\
 & \leq\exp(-\frac{1}{2}\sum_{i\in I,j\in J^{C}}e^{u_{i}^{\ast}}X_{ij}e^{-v_{j}^{\ast}}),
\end{align*}
 where in the last step we used Bennett's inequality \citep{bennett1962probability}.\footnote{See also Terry Tao's blog post on improved Bennett's inequality for the Poisson distribution:
\url{https://terrytao.wordpress.com/2022/12/13/an-improvement-to-bennetts-inequality-for-the-poisson-distribution/}.} 
 
Now we connect the bound above to the Hessian evaluated at the true
parameters. Note that for any $i\in[m]$ and $j\in[n]$, 
\begin{align*}
e^{u_{i}^{\ast}}X_{ij}e^{-v_{j}^{\ast}} & =-\mathcal{L}_{i(j+m)}(u^{\ast},v^{\ast})=-\mathcal{L}_{(i+n)j}(u^{\ast},v^{\ast}),
\end{align*}
 and letting $e_{I}\in\mathbb{R}^{m}$ be the indicator vector of
$I$ and $e_{J}\in\mathbb{R}^{n}$ be the indicator vector of $J$,
we have 
\begin{align*}
\sum_{i\in I,j\in J^{C}}e^{u_{i}^{\ast}}X_{ij}e^{-v_{j}^{\ast}} & =-\begin{bmatrix}e_{I}\\
e_{J}
\end{bmatrix}^{T}\mathcal{L}(u^{\ast},v^{\ast})\begin{bmatrix}\mathbf{1}_{m}-e_{I}\\
\mathbf{1}_{n}-e_{J}
\end{bmatrix}
\end{align*}
where we have used the fact that $X_{ij}\equiv0$ for any $(i,j)$
satisfying $i\in I^{C},j\in J$. Now using once again that $\mathbf{1}_{m+n}$
spans the null space of $\mathcal{L}(u,v)$ for any $(u,v)$, we have 
\begin{align*}
\sum_{i\in I,j\in J^{C}}e^{u^\ast_{i}}X_{ij}e^{-v^\ast_{j}} & =\begin{bmatrix}e_{I}\\
e_{J}
\end{bmatrix}^{T}\mathcal{L}(u^{\ast},v^{\ast})\begin{bmatrix}e_{I}\\
e_{J}
\end{bmatrix}\\
 & \geq\lambda_{-2}(\mathcal{L}(u^{\ast},v^{\ast}))\frac{(|I|+|J|)(m+n-|I|-|J|)}{m+n}.
\end{align*}

As a result, we have the following bound on the probability of event
$E_{IJ}$:
\begin{align*}
\mathbb{P}[E_{IJ}] & \leq\exp(-\frac{1}{2}\begin{bmatrix}e_{I}\\
e_{J}
\end{bmatrix}^{T}\mathcal{L}(u^{\ast},v^{\ast})\begin{bmatrix}e_{I}\\
e_{J}
\end{bmatrix})\\
 & \leq\exp(-\frac{1}{2}\lambda_{-2}(\mathcal{L}(u^{\ast},v^{\ast}))\frac{(|I|+|J|)(m+n-|I|-|J|)}{m+n}).
\end{align*}
 Now applying a union bound on all $I,J$ with $X_{I^{C}J}=0$ and following the arguments from Lemma 1 in \citet{simons1999asymptotics}, we
have 
\begin{align*}
\mathbb{P}(\text{MLE does not exist}) & \leq\sum_{I\subsetneq[m],J\subsetneq[n],X_{I^{C}J}=0}\exp(-\frac{1}{2}\lambda_{-2}(\mathcal{L}(u^{\ast},v^{\ast}))\frac{(|I|+|J|)(m+n-|I|-|J|)}{m+n})\\
 & \leq2\left[\left(1+\exp(-\frac{1}{4}\lambda_{-2}(\mathcal{L}(u^{\ast},v^{\ast})))\right)^{m+n}-1\right].
\end{align*}
 Finally, plugging in $\lambda_{-2}(\mathcal{L}(u^{\ast},v^{\ast}))\geq8\log(m+n)$
gives 
\begin{align*}
\mathbb{P}(\text{MLE does not exist}) & \leq2\left[\left(1+\exp(-2\log(m+n))\right)^{m+n}-1\right]\\
 & =2\left[\left(1+(m+n)^{-2}\right)^{m+n}-1\right]\\
 & \leq\frac{2}{\sqrt{m+n}}.
\end{align*} 
\end{proof}
\textbf{Intuition on sufficient condition \eqref{eq:Fisher-bound}.} We provide some intuition why a ``large'' $\lambda_{-2}(\mathcal{L}^\ast)$ ensures finite solutions exist. Consider again scaling $\Xagg$ by a constant $c>1$, which also scales up $\lambda_{-2}(\mathcal{L}^\ast)$ and the Poisson parameters $e^{u^\ast_i} \Xagg_{ij} e^{-v^\ast_j}$ by $c$. As $c$ increases, the probability of the Poisson entries $\Xtime_{ij}$ drawing zero decreases exponentially. As a result, $\Xtime$ is much more likely to have the same zero patterns as $\Xagg$, which implies bounded MLEs \citep{qu2023sinkhorn}.
\section{Details on our convergence algorithm, \texttt{ConvIPF}} 
\label{sec:convergence-algo}
In this section, we provide details on our algorithm, \texttt{ConvIPF}, for achieving IPF convergence.
First, we review the conditions for IPF convergence and discuss two implications: (1) IPF will always converge given a true time-aggregated network (Corollary \ref{cor:timeagg-converge}), (2) IPF will always converge on data generated from our network model (Corollary \ref{cor:model-converge}).
Then, we discuss our algorithm, \texttt{ConvIPF}, which repeats three subroutines: \texttt{MAX-FLOW}, \texttt{BLOCKING-SET}, and \texttt{MODIFY-X}.
In Appendix \ref{sec:convergence-test}, we describe \texttt{MAX-FLOW}, which is also described in \citet{idel2016review}.
In Appendix \ref{sec:app-blocking-set}, we provide our algorithm for \texttt{BLOCKING-SET} and prove that it efficiently finds a blocking set of rows.
In Appendix \ref{sec:app-modify-x}, we formalize the minimization objective in \texttt{MODIFY-X} and present two principled approaches to solving it.

\paragraph{Conditions for IPF convergence.}
Three equivalent conditions \citep{pukelsheim2014} that define exactly when IPF converges, given inputs $\ipfmat$, $\ipfrow$, and $\ipfcol$, are:
\begin{enumerate}
    \item There exists a matrix $M$ with row sums $\ipfrow$ and column sums $\ipfcol$ such that $M_{ij} = 0$ wherever $X_{ij} = 0$.
    \item For all row subsets $S \subseteq [m]$, $\sum_{i \in S} \ipfrow_i \leq \sum_{j \in N_X(S)} \ipfcol_j$, where $N_\ipfmat(S)$ represents the set of columns connected to $S$ in $\ipfmat$. 
    \item There exist positive diagonal matrices $D^0$, $D^1$ such that $D^0 X D^1$ has row sums $\ipfrow$ and column sums $\ipfcol$.
\end{enumerate}
Condition (1) yields the \texttt{MAX-FLOW} algorithm for testing whether IPF will converge, which we describe in Appendix \ref{sec:convergence-test}.
Condition (2) motivates our \texttt{BLOCKING-SET} algorithm (Appendix \ref{sec:app-blocking-set}), which efficiently identifies a ``blocking set'' of rows for which the condition is violated.
Conditions (1) and (3) yield the following useful corollaries, which establish two settings where IPF is guaranteed to converge.

\begin{corollary}
\label{cor:timeagg-converge}
Let $\mathcal{T} = \{t_1, t_2, \cdots, t_T\}$ represent a set of timesteps and $\Xagg = \sum_{t' \in \mathcal{T}} X^{(t')}$ represent the time-aggregated network.
Given the time-varying network $\Xtime$, with row marginals $\ptime$ and column marginals $\qtime$, IPF will always converge on inputs $\Xagg$, $\ptime$, and $\qtime$ if $t \in \mathcal{T}$.
\end{corollary}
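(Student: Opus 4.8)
The plan is to invoke the first of the three equivalent convergence conditions stated just above (due to \citet{pukelsheim2014}), which characterizes IPF convergence on inputs $(\ipfmat,\ipfrow,\ipfcol)$ entirely in terms of the existence of a \emph{single} nonnegative matrix that has the prescribed marginals and respects the zero pattern of the initial matrix. In our instance the initial matrix is $\Xagg$ and the target marginals are $\ptime,\qtime$, so it suffices to exhibit one matrix $M$ with row sums $\ptime$, column sums $\qtime$, and $M_{ij}=0$ wherever $\Xagg_{ij}=0$. The key observation is that the time-varying network $\Xtime$ itself is such a certificate, so the argument is purely constructive and requires no analysis of the IPF iterates.

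First I would verify that $\Xtime$ carries exactly the required marginals: this is immediate from the definitions $\ptime := \Xtime\mathbf{1}_n$ and $\qtime := (\Xtime)^T\mathbf{1}_m$, so $\Xtime$ has row sums $\ptime$ and column sums $\qtime$ by construction. Next I would check the zero-pattern containment, namely that $\Xtime_{ij}=0$ whenever $\Xagg_{ij}=0$. This is the only step that uses the hypothesis $t\in\mathcal{T}$: since $\Xagg=\sum_{t'\in\mathcal{T}}X^{(t')}$ is a sum of entrywise-nonnegative matrices, each summand is dominated entrywise by the total, and because $t\in\mathcal{T}$ we have $0\le \Xtime_{ij}\le \Xagg_{ij}$ for every $(i,j)$. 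Hence $\Xagg_{ij}=0$ forces $\Xtime_{ij}=0$. Combining the two checks, $\Xtime$ is a nonnegative matrix satisfying Condition (1), so IPF converges on $(\Xagg,\ptime,\qtime)$.

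There is no substantive obstacle here; the only point deserving care is recognizing that it is precisely the \emph{nonnegativity} of the summands $X^{(t')}$ that yields the entrywise domination $\Xtime\le\Xagg$, and that this containment can fail exactly when $t\notin\mathcal{T}$ (in which case $\Xtime$ may have nonzero entries outside the support of $\Xagg$, breaking Condition (1)). This also makes transparent the ``only if'' direction emphasized in the main text: when $t\in\mathcal{T}$, non-convergence would require $\Xtime$ to violate the support containment, which can only be caused by missing entries in the aggregated inputs rather than by the time-disaggregation itself.
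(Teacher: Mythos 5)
Your proof is correct, and it rests on the same underlying certificate as the paper's: the time-varying network $\Xtime$ itself witnesses feasibility of the balancing problem on $\Xagg$ because $t\in\mathcal{T}$ and the summands are nonnegative, so $\Xtime \le \Xagg$ entrywise. The only difference is which of the equivalent convergence conditions you route through. You invoke Condition (1) directly --- exhibiting $\Xtime$ as a nonnegative matrix with marginals $\ptime,\qtime$ whose support is contained in that of $\Xagg$ --- which is the most economical path. The paper instead first observes that Condition (3) holds trivially for the inputs $(\Xtime,\ptime,\qtime)$ (take $D^0=I$, $D^1=I$), deduces that the Hall-type marginal inequality of Condition (2) holds for $\Xtime$, and then transfers it to $\Xagg$ by noting that $N_{\Xagg}(S)\supseteq N_{\Xtime}(S)$ for every row subset $S$. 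The two arguments are logically interchangeable; yours is shorter, while the paper's phrasing in terms of Condition (2) dovetails with the \texttt{BLOCKING-SET} machinery developed immediately afterward, which diagnoses non-convergence precisely through violations of that inequality. Your closing remark correctly identifies that the entrywise domination (and hence the support containment) is exactly what can fail when $t\notin\mathcal{T}$, which matches the paper's framing of non-convergence as a symptom of missing data in $\Xagg$.
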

\begin{proof}
    We know that IPF must converge on $\Xtime$, $\ptime$, and $\qtime$, since $\Xtime$ has marginals $\ptime$ and $\qtime$, so simply setting $d^0 = \mathbf{1}_m$ and $d^1 = \mathbf{1}_n$ solves the matrix balancing problem, satisfying Condition (3). 
    Then, Condition (2) must also be satisfied for $\Xtime$, $\ptime$, and $\qtime$. 
    Since $\Xagg$ sums over a set of matrices including $\Xtime$, for any row subset $S$ in $[m]$, its set of connected columns under $\Xagg$ must be a superset of its connected columns under $\Xtime$.
    Thus, if Condition (2) is satisfied for $\Xtime$, $\ptime$, and $\qtime$, then it must be satisfied for $\Xagg$, $\ptime$, and $\qtime$; therefore, IPF converges on $\Xagg$, $\ptime$, and $\qtime$.
\end{proof}

\begin{corollary}
\label{cor:model-converge}
Let $\Xtime, \ptime, \qtime$ represent data generated from our network model \eqref{eqn:model}, given any $\Xagg$ and scaling parameters $u$ and $v$.
IPF will always converge on $\Xagg$, $\ptime$, and $\qtime$.
\end{corollary}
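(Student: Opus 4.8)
The plan is to invoke the equivalent convergence conditions of \citet{pukelsheim2014} listed at the start of this section, and to exhibit an explicit witness matrix for Condition (1). Recall that Condition (1), specialized to the inputs $\ipfmat = \Xagg$, $\ipfrow = \ptime$, $\ipfcol = \qtime$, asks for the existence of some matrix $M$ with row sums $\ptime$ and column sums $\qtime$ satisfying $M_{ij} = 0$ wherever $\Xagg_{ij} = 0$. My claim is that the realized network $\Xtime$ itself serves as this witness, so that no work beyond checking the defining properties of the model is required.

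Concretely, I would verify the three requirements directly from the model \eqref{eqn:model}. First, the marginal constraints hold by definition: the model sets $\ptime_i = \sum_j \Xtime_{ij}$ and $\qtime_j = \sum_i \Xtime_{ij}$, so $\Xtime \mathbf{1}_n = \ptime$ and $(\Xtime)^T \mathbf{1}_m = \qtime$. Second, the support condition holds because the model draws $\Xtime_{ij} \sim \mathrm{Poisson}(e^{u_i}\Xagg_{ij}e^{-v_j})$ only when $\Xagg_{ij} > 0$ and deterministically sets $\Xtime_{ij} = 0$ otherwise; hence $\Xagg_{ij} = 0 \implies \Xtime_{ij} = 0$, which is exactly the zero-pattern compatibility demanded by Condition (1). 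Taking $M = \Xtime$ therefore satisfies Condition (1), and by the equivalence of the three conditions, IPF converges on $\Xagg$, $\ptime$, $\qtime$. This parallels the argument for \cref{cor:timeagg-converge}, where $\Xtime$ is likewise used to certify convergence, and matches the observation in \cref{sec:app-mle-finite} that $\Xtime$ inherits all the zeros of $\Xagg$.

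Since the witness is immediate, the only real subtlety---and the point I would emphasize rather than any computational obstacle---is to distinguish \emph{convergence} of IPF from existence of a \emph{finite} balancing solution. Condition (1) requires merely that $M$ vanish wherever $\Xagg$ vanishes, not that $M$ and $\Xagg$ share \emph{identical} support. This matters because a Poisson draw may produce $\Xtime_{ij} = 0$ on some $(i,j)$ with $\Xagg_{ij} > 0$, introducing extra zeros relative to $\Xagg$; as noted in \cref{sec:app-mle-finite}, this can break the strong existence condition and cause the scaling factors $(u,v)$ to diverge to $\pm\infty$. Crucially, though, convergence of the \emph{scaled matrix} $\ipfestmat(k)$ is governed by the weaker Condition (1), which $\Xtime$ always satisfies, so IPF still converges (in the sense that $\ipfestmat(k)$ has a limit) even when no finite MLE exists. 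The separate question of when finite MLEs exist with high probability is precisely what \cref{thm:finite-mle} addresses, so I would keep that distinction explicit to avoid conflating the two guarantees.
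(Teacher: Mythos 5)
Your proof is correct, and it reaches the conclusion by a more direct route than the paper. The paper's proof never invokes Condition (1) for $\Xagg$ directly: it first observes that $(\Xtime,\ptime,\qtime)$ satisfies Condition (3) (take $D^0,D^1$ to be identities), infers that Condition (2) therefore holds for $\Xtime$, and then transfers Condition (2) to $\Xagg$ by noting that $N_{\Xtime}(S)\subseteq N_{\Xagg}(S)$ for every row subset $S$, since $\Xtime$ inherits all zeros of $\Xagg$. You instead exhibit $\Xtime$ itself as the witness matrix for Condition (1) applied to $\Xagg$: it has the required marginals by definition of the model and vanishes wherever $\Xagg$ vanishes. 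The underlying certificate is the same object ($\Xtime$), but your route collapses the paper's three-step chain through the equivalences into a single verification, which is cleaner. Your closing remark distinguishing convergence of $\ipfestmat(k)$ from existence of a finite balancing solution is also apt and consistent with the paper's own discussion in the well-posedness section; the paper's proof of this corollary does not make that distinction explicit, so your version is, if anything, more careful on that point. The only shared (and minor) gloss in both arguments is that the Poisson draws may produce zero marginals, which the stated matrix balancing problem excludes; the paper handles this separately via its modified IPF for non-negative marginals, and neither proof revisits it here.
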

\begin{proof}
    Here, we do not require $\Xagg$ to be a sum over some set of matrices that includes $\Xtime$.
    However, a similar logic applies: first, by the same argument as above, we know that IPF must converge on $\Xtime$, $\ptime$, and $\qtime$.
    Furthermore, under our model \ref{eqn:model}, $\Xtime$ adopts all zeros in $\Xagg$ (with potential additional zeros).
    So, again, for any row subset $S$ in $[m]$, its set of connected columns under $\Xagg$ must be a superset of its connected columns under $\Xtime$.
    Thus, if Condition (2) is satisfied for $\Xtime$, $\ptime$, and $\qtime$, then it must be satisfied for $\Xagg$, $\ptime$, and $\qtime$; therefore, IPF converges on $\Xagg$, $\ptime$, and $\qtime$.
\end{proof}

These corollaries establish useful facts relating IPF, our network inference problem, and our model.
Note that for Corollary \ref{cor:timeagg-converge}, we do not require the data to be generated from our model, only that the true time-aggregated network is given.
In contrast, for Corollary \ref{cor:model-converge}, we do not require that the true time-aggregated network is given, only that the data is generated from the model.
These corollaries also serve as ``certificates'' of data quality and model correctness.
If IPF does \textit{not} converge, then the statements above cannot be true: we do not observe the true time-aggregated network $\Xagg$ (at least one that includes $t$) and the data $\ptime$ and $\qtime$ is not generated from the model given $\Xagg$.
This motivates our perspective of IPF non-convergence as an issue of missing data in $\Xagg$, and the need for additional, but minimal, new edges. 
In Appendix \ref{sec:app-safegraph}, we also discuss real-world reasons for missingness in $\Xagg$, with mobility data as an example.

Our algorithm, \texttt{ConvIPF}, achieves IPF convergence by iteratively identifying a blocking set of rows, for which the second convergence condition is violated, and unblocking those rows by minimally adding edges that connect those rows to new columns.
Since our algorithm can be applied generally to any application of IPF, not only in the network inference setting, we use generic notation of $X$, $p$, and $q$ in the following sections, to represent general IPF inputs.
Our algorithm repeats three subroutines, \texttt{MAX-FLOW}, \texttt{BLOCKING-SET}, and \texttt{MODIFY-X}, until IPF converges:
\begin{enumerate}
    \item Run \texttt{MAX-FLOW} to test for convergence. If IPF converges, then the algorithm is finished. If IPF does not converge, move on to Step 2.
    \item Since IPF does not converge, run \texttt{BLOCKING-SET} to identify a blocking set of rows, $S$.
    \item Run \texttt{MODIFY-X} to unblock $S$ by minimally adding edges to $\ipfmat$.
\end{enumerate}
Our algorithm is similar in structure to the multi-item auction procedure described by \citet{demange1986auction} for finding market-clearing prices. Their procedure iteratively (1) checks if there is a perfect matching of item-buyer based on current prices, (2) if not, finds the set of constricted buyers $S$ and their neighbors $N(S)$, (3) adjusts prices in $N(S)$ to attempt to fix the matching. These steps map directly onto our steps 1-3 and step 2 is especially related because of the connection between blocking row sets in IPF and constricted sets in bipartite matching \citep{hall1935subset}, which we discuss in Appendix \ref{sec:app-blocking-set}.
Below, we describe each of our subroutines in detail.

\subsection{\texttt{MAX-FLOW}} 
\label{sec:convergence-test}
For completeness, we describe the algorithm from \citet{idel2016review} for testing whether IPF will converge on inputs $\ipfmat$, $\ipfrow$, and $\ipfcol$.
Condition (1) described that IPF converges if and only if there exists a matrix $A$ with row sums $\ipfrow$ and column sums $\ipfcol$ such that $A_{ij} = 0$ wherever $X_{ij} = 0$, i.e., $A$ inherits the zeros of $X$.
Note that this matrix is \textit{more general} than the set of possible solutions to the matrix balancing problem, since $A$ does not have to be a biproportional scaling of $\ipfmat$.
Now, the following algorithm will check for the existence of $A$.
This algorithm is closely related to the max-flow-based algorithm from \citet{kumar2015wsdm} to test their concept of graph consistency, further establishing connections between IPF and discrete choice.

\paragraph{\texttt{MAX-FLOW} algorithm.}
Create a new directed graph $G_f$ that has a source node $s$ connected to one node $n_i$ for each row and set the capacity of the edge $s \rightarrow n_i$ to $\ipfrow_i$.
Create a sink node $t$ connected to one node $n_j$ for each column and set the capacity of the edge $n_j \rightarrow t$ to $\ipfcol_j$.
Finally, include an edge $n_i \rightarrow n_j$, with capacity $\infty$, wherever $\ipfmat_{ij} > 0$.
Compute the maximum flow on $G_f$.
If the maximum flow is equal to $\sum_i \ipfrow_i = \sum_j \ipfcol_j$, then the desired matrix $A$ exists, meaning IPF converges for $\ipfmat$, $\ipfrow$, and $\ipfcol$; otherwise, it does not converge.

\begin{proof}
The reasoning is as follows: the matrix $A$ can be constructed from the flow values, where $A_{ij}$ is set to $flow(n_i, n_j)$ (i.e., the flow along edge $n_i \rightarrow n_j$) wherever $X_{ij} > 0$ and $0$ otherwise.
This satisfies the constraint that $A$ inherits the zeros of $X$.
Since the total flow is equal to $\sum_i \ipfrow_i$ and the source node $s$ is only connected to the row nodes, with capacity $p_i$ along each edge, then each row node $n_i$ must have exactly $p_i$ flowing through it.
Due to the conservation of flow, it must be true that  $p_i = \sum_j flow(n_i, n_j)$ for all rows $i$, so the row marginals are satisfied.
A similar argument can be made for the columns: since the sink node $t$ is only connected to the column nodes, with capacity $q_j$ along each edge, then each column node $n_j$ must have exactly $q_j$ flowing through it, so $\sum_i flow(n_i, n_j) = q_j$ for all columns $j$.
Thus, $A$ is a matrix that inherits the zeros of $X$ and satisfies the row and column marginals.
\end{proof}

\subsection{\texttt{BLOCKING-SET}}
\label{sec:app-blocking-set}

Given inputs $\ipfmat$, $\ipfrow$, and $\ipfcol$, where we know IPF does not converge, this subroutine identifies a blocking set of rows $S$ for which Condition (2) is violated, i.e., $\sum_{i \in S} \ipfrow_i > \sum_{j \in N_\ipfmat(S)} \ipfcol_j$.
The naive approach to iterate through all subsets until a violation is found, but this approach is extremely inefficient, as there are $2^m$ possible subsets.
Instead, our subroutine imports ideas from matching theory and constricted sets to design a much more efficient algorithm. 

\paragraph{Connection to constricted sets.}
The idea of blocking subsets in IPF is closely related to the concept of \textit{constricted sets} in bipartite matching problems.
Given a bipartite graph $G = L \cup R$, a constricted set is a set of nodes $S \subseteq L$ such that $|N(S)| < |S|$, where $N(S)$ represents the set of neighbors in $R$ who are connected to $S$.
Then, a perfect matching on $G$ (where all nodes are matched) exists if and only if no constricted set in $G$ exists \citep{hall1935subset}.

Given a graph without a perfect matching, one can efficiently find a constricted set via an alternating breadth-first search (BFS) algorithm.
First, start with the maximal matching from the graph.
Then, run BFS from an unmatched node on the \textit{right} side.
Since the graph is bipartite, BFS will alternate between nodes on the right side and nodes on the left side.
The key is that we only keep \textit{alternating} edges during BFS: when we move from the right to the left side, we keep edges that are unused in the matching; when we move from the left to the right side, we keep edges that are used in the matching.
When alternating BFS terminates, the set of nodes visited on the \textit{right} side forms a constricted set.
This is due to the following reasons, as explained in \citet{easleykleinberg}:
\begin{enumerate}
    \item All of the nodes visited on the \textit{left} side must be matched. Otherwise, we would have an ``augmenting path'', i.e., one that starts with an unmatched node on the right and ends with an unmatched node on the left, alternating between unused and used edges. This is called an augmenting path because by flipping the edges from unused to used and used to unused, we can increase the size of the matching. We can assume that no augmenting paths appear in alternating BFS, since we already found the maximal matching; thus, all visited nodes on the left must be matched.
    \item Each odd layer (left side) has the same number of nodes as the subsequent layer (right side), since we follow used edges from left to right and each visited node on the left side is matched, so in the subsequent layer, we take each left node's partner. So, there are strictly more nodes in even (right) layers than odd (left) layers, since we start with a node on the right.
    \item Every node in an even (right) layer has all of its neighbors in the graph visited during BFS, since it was added by its match in the previous layer then we add all of its non-match neighbors in the following layer.
\end{enumerate}
Thus, the nodes visited on the right side form a constricted set, since all of their neighbors are included in the odd layers during BFS, but there are strictly more nodes in the even than odd layers.

\paragraph{\texttt{BLOCKING-SET} algorithm.}
We propose the following algorithm, inspired by the alternating BFS algorithm for constricted sets, for finding a blocking row set in IPF.
First, from running \texttt{MAX-FLOW} on the flow graph $G_f$ to test for convergence, we have flow values for each row/column.
We say that a row $i$ reached its capacity if the flow passing through it is $p_i$ and a column $j$ reached its capacity if the flow passing through it is $q_j$.
Since the maximum flow is not equal to $\sum_i p_i$, there must be at least one row $i$ that does not reach its capacity. 
Construct an undirected bipartite graph $G$ where the nodes on the left and right are the rows and columns of $X$, respectively, and they are connected wherever $X_{ij} > 0$ (this graph is similar to but distinct from the flow graph $G_f$, which is directed and only has edges in the direction of rows to columns).
Run the following variant of BFS on $G$ starting from node $n_i$.
As in the alternating BFS algorithm, each layer of BFS here will alternate between nodes on the left (rows) and nodes on the right (columns).
When progressing from a column node $n_C$ to its neighboring row node $n_R$, only include row nodes where $n_R \rightarrow n_C$ has non-zero flow in $G_f$.
When progressing from row nodes to column nodes, include all (unvisited) neighbors.
When this version of BFS terminates, the set of row nodes visited forms a blocking subset of rows.

\begin{proof}
First, we can use an argument similar to that of augmenting paths from bipartite matching.
Here, imagine that we encountered a column $j$ during BFS that has \textit{not} reached capacity.
Then, there is a path from node $n_i$ to node $n_j$ in $G$; for example, $n_i - n_{C_1} - n_{R_1} - n_{C_2} - n_{R_2} - n_j$, where $C_1$, $C_2$ and $R_1$, $R_2$ are other columns and rows, respectively.
\begin{figure}
    \centering
    \includegraphics[width=0.5\linewidth]{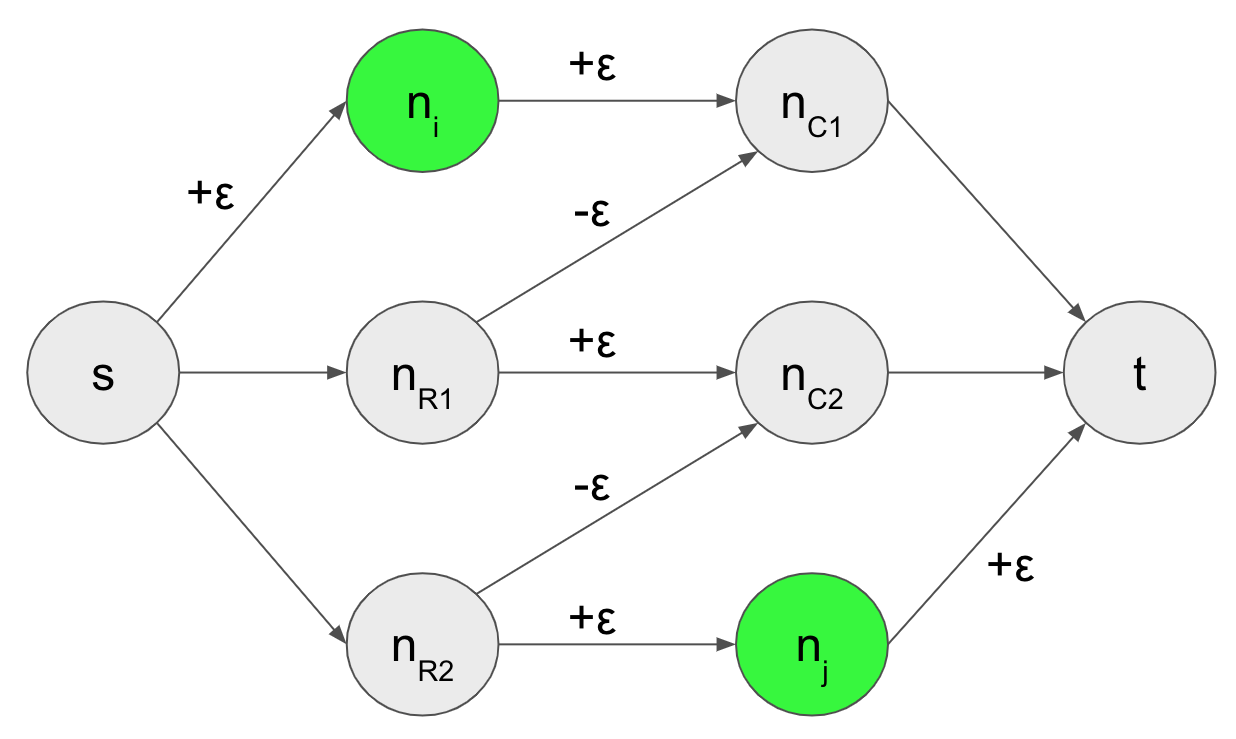}
    \caption{Example of how we would modify $G_f$ to increase the overall flow by $\epsilon$, given nodes $n_i$ and $n_j$ (in green) that have \textit{not} reached capacity.}
    \label{fig:augmenting-path}
\end{figure}
Then, we could increase the overall flow in $G_f$ by adding $\epsilon$ to $s \rightarrow n_i$ and $n_i \rightarrow n_{C_1}$, 
removing $\epsilon$ from $n_{R_1} \rightarrow n_{C_1}$ and adding $\epsilon$ to $n_{R_1} \rightarrow n_{C_2}$,
removing $\epsilon$ from $n_{R_2} \rightarrow n_{C_2}$ and adding $\epsilon$ to $n_{R_2} \rightarrow n_j$,
and adding $\epsilon$ to $n_j \rightarrow t$ (Figure \ref{fig:augmenting-path}).
In other words, the flows at $n_{R_1}$, $n_{R_2}$, $n_{C_1}$, and $n_{C_2}$ remain the same, but we increase the overall flow by rerouting additional flow through $n_i$ and $n_j$.
Note that this is only possible since $n_i$ and $n_j$ are under capacity, and we restricted our BFS to paths where, for consecutive nodes $n_C$ and $n_R$, $n_R \rightarrow n_C$ has non-zero flow in $G_f$  (so we can remove $\epsilon$ from those edges). 
For any such path, we can do this rerouting to increase the overall flow. 
However, since we have found the maximum flow, such a path cannot exist.
Therefore, all of the columns we visited during BFS must have reached capacity.

The remaining arguments are as follows:
\begin{enumerate}
    \item Due to the conservation of flow in $G_f$, the flow going through the visited rows must be equal to the flow going through the visited columns, since our BFS includes all rows that are contributing non-zero flow to these columns.
    \item The total capacity of the visited rows must be strictly greater than the total capacity of the visited columns, since we established above that all visited columns must have reached capacity, but we start with a row node that did \textit{not} reach capacity. 
    \item Furthermore, all of the visited rows' neighbors are included among the visited columns, since our BFS only restricted edges from columns to rows, but not from rows to columns.
\end{enumerate}
Since capacity is equivalent to the original IPF marginals, this means we have identified a set of rows whose total marginal is larger than the total marginal of their connected columns.
Thus, the set of rows visited during this version of BFS forms a blocking subset.    
\end{proof}

\subsection{\texttt{MODIFY-X}} 
\label{sec:app-modify-x}
Given a blocking set $S$, this subroutine minimally adds edges to $\ipfmat$ to unblock $S$. 
Let $\ipfmat^K$ represent the modified $\ipfmat$ after adding new edges $K$ and let $f(\ipfmat, \ipfmat^K)$ represent the change in $\ipfmat$ that we are trying to minimize.
Then, our goal is to find the set of edges $K^*$ that minimizes $f(\ipfmat, \ipfmat^K)$,
\begin{align}
    K^* &= \min_{K} f(X, X^K),
\end{align}
subject to unblocking $S$,
\begin{align}
    \sum_{i \in S} p_i \leq \sum_{j \in N_{\ipfmat^K}(S)} q_j. \label{eqn:unblock}
\end{align}
Given a current set of edges, each additional edge $(i, j)$ will only contribute towards satisfying \eqref{eqn:unblock} if row $i$ is in $S$ and no row in $S$ is already connected to column $j$.
Thus, it is clear that we should only consider edge sets $K$ such that each column in $K$'s edges are unique and not in the original set of $S$'s neighbors, $N_X(S)$.
Furthermore, under such $K$, the constraint \eqref{eqn:unblock} reduces to
\begin{align}
    \sum_{(i, j) \in K} q_j \geq \delta, \label{eqn:qj-remainder}
\end{align}
where $\delta := \sum_{i \in S} p_i - \sum_{j \in N_X(S)} q_j$ is the gap in marginals we are trying to make up to unblock $S$.

Below, we discuss two natural definitions of $f(\ipfmat, \ipfmat^K)$, along with appropriate algorithms for these objectives.
Once we have a set of selected edges $\hat{K}$, we add them to $X$ with a small uniform weight.
Since we are trying to minimize $f(X, X^K)$, which may be affected by the choice of edge weight, while we are trying to increase $\sum_{j \in N_{\ipfmat^K}(S)} q_j$, which is not affected by edge weight (only depends on the structure of the graph), we should keep the weight as small as possible.

\subsubsection{Minimizing number of edges added}
One simple objective would be to minimize the number of new edges added, so $f(\ipfmat, \ipfmat^K) = |K|$.
The solution in this case is straightforward. 
Let $\bar{N}_\ipfmat(S)$ represent the set of columns \textit{not} connected to $S$ in $\ipfmat$.
Take the top-$k$ columns in $\bar{N}_\ipfmat(S)$, ordered by $\ipfcol_j$ in descending order, that satisfy $\sum_{j=1}^k \ipfcol_j \geq \delta$.
Then, any set of edges between a row in $S$ and these $k$ columns will unblock $S$, while minimizing the number of new edges added.
For example, we could arbitrarily choose a row $i^* \in S$ as the row with the largest $p_i$, then set $K^* = \{(i^*, j)\}_{j=1}^k$.

\subsubsection{Minimizing change in $\lambda_1$}
Minimizing the number of edges in $K$ may feel insufficient, since the objective is coarse and leaves many degrees of freedom, such as the choice of row in $S$.
A more nuanced objective minimizes the \textit{spectral change} in $X$.
Motivated by the application of epidemic spread, recall that the epidemic threshold of a network is closely related to the largest eigenvalue $\lambda_1$ of its adjacency matrix \citep{wang2003eigenvalue}.
In fact, attempts to reduce spreading on networks often aim to minimize $\lambda_1$ through budgeted edge removals \citep{tong2012gel,saha2015radius,li2023sdm}.
So, our goal is to modify $X$ to unblock $S$, but otherwise change $\lambda_1$ \textit{as little as possible}.
This goal is related to spectral sparsification, which aims to sparsify a graph (i.e., greatly reduce its edges) while approximately preserving the spectrum of the graph \citep{spielman2010spectral}.
In contrast, we aim to add edges to $X$ while minimizing change in its largest eigenvalue.
So, we have
\begin{align}
    f(X, X^K) = |\lambda_1(X) - \lambda_1(X^K)| = \lambda_1(X^K) - \lambda_1(X).
\end{align}
We are able to make this simplification since, by the Perron-Frobenius Theorem, a connected graph's largest eigenvalue strictly increases when an edge is added.

To solve this problem, we use a common approximation:
\begin{align}
    \lambda_1(X^K) - \lambda_1(X) \approx \sum_{(i,j) \in K} \vec{u}_1(i) \vec{v}_1(j), \label{eqn:eigen-approx}
\end{align}
where $\vec{u}_1$ and $\vec{v}_1$ are the left and right eigenvectors of $\lambda_1(X)$, respectively.
In the closely related problem of removing $k$ edges from a graph $G$ to minimize $\lambda_1(G)$, \citet{tong2012gel} prove that there is only an $O(k)$ gap between approximate impact and actual impact on $\lambda_1$.
Let $\lambda_1(G)$ represent the original largest eigenvalue and $\lambda_1(G-R)$ represent the eigenvalue after removing the edges in $R$.
Then, \citet{tong2012gel} show that
\begin{align}
    \lambda_1(G) - \lambda_1(G-R) = c \sum_{(i,j) \in R} \vec{u}_1(i) \vec{v}_1(j) + O(k).
\end{align}
Since they approximate each edge's impact as \textit{independent}, then their solution simply removes the top-$k$ edges with the largest $\vec{u}_1(i) \vec{v}_1(j)$.
In our case, we want to find the opposite, i.e., the  edges that minimize $\vec{u}_1(i) \vec{v}_1(j)$, while unblocking $S$.
Our algorithm first finds the row $i^*$ in $S$ with the smallest $\vec{u}_i(i)$, which we should use for all edges in $\hat{K}$ since unblocking $S$ is agnostic to which row in $S$ is chosen but $\lambda_1$ approximately increases with larger $\vec{u}_i(i)$.
Now, based on our prior analysis, we want to find a set of columns $J \subseteq \bar{N}_X(S)$, where $\sum_{j \in J} q_j \geq \delta$ \eqref{eqn:qj-remainder}, while minimizing $\sum_{j \in J} \vec{v}_1(j)$.

We can formulate this objective as an integer linear program (ILP), by using $x \in \mathbb{R}^{|\bar{N}_X(S)|}$ as an indicator representing which columns in $\bar{N}_X(S)$ should be included:
\begin{align}
    x^* = &\min_{x} \sum_{j \in \bar{N}_X(S)} \vec{v}_1(j) \cdot x_j, \label{eqn:ilp} \\
    &\textrm{such that } \sum_{j \in \bar{N}_X(S)} q_j \cdot x_j \geq \delta \textrm{ and } x_j \in \{0, 1\}.\nonumber
\end{align}
Then, we keep the set of columns $J$ where $x^*_j = 1$ and our algorithm returns $\hat{K} = \{(i^*, j) | j \in J\}$, which is guaranteed to unblock $S$ while approximately minimizing the change in the largest eigenvalue of $X$.

Our ILP in fact reduces to the classic Knapsack Problem, where, given a set of $\eta$ items, each with a weight $w_i$ and value $v_i$, the goal is to maximize the total value of the items chosen subject to the total weight being under a given capacity $W$.
By flipping the signs of $v_1(j)$ and $q_j$ in \eqref{eqn:ilp}, our ILP becomes identical to the Knapsack Problem.
The Knapsack Problem, like ILPs in general, is NP-hard, but it can be solved in pseudo-polynomial time using dynamic programming, which yields a time complexity of $O(\eta W)$ (pseudo because $W$ is not the size of input).
In our case, $\eta$ corresponds to $|\bar{N}_X(S)|$, which is worst-case $n$ if $S$ is not connected to any columns, and $W$ corresponds to $\delta$, which is worst-case $\sum_i p_i$ if $r_p(S) = \sum_i p_i$ and $c_{X,q}(S) = 0$.
So, we can solve our problem in pseudo-polynomial time too, or, if strictly polynomial time is desired, we can use the Fully Polynomial-Time Approximation Scheme (FPTAS), which finds a solution in $O(n^3/\epsilon)$ time that is at least $(1 - \epsilon)$ times the optimal value.

\paragraph{Further justification of eigenscore approximation.}
Here, we further justify the eigenscore approximation in \eqref{eqn:eigen-approx} with a derivative-based approach. 
We consider minimizing the increase in $\lambda_{1}$ in the limit when the added edge weight $\epsilon\rightarrow0$.
In other words, we consider minimizing the infinitesimal change in
the largest eigenvalue when adding edges. 
The first observation is that for each candidate column $j\in\bar{N}_{X}(S)$
that we want to add an edge, it is optimal to select only one row $i \in S$
to add the edge. To see this, consider the derivative 
\begin{align*}
\frac{\partial\lambda_{1}}{\partial X{}_{ij}} & =\lim_{\epsilon\rightarrow0}\frac{\lambda_{1}(X+\epsilon e_{i}e_{j}^{T})-\lambda_{1}(X)}{\epsilon}
\end{align*}
 and for any $j\in\bar{N}_{X}(S)$ define 
\begin{align*}
\theta_{j} & :=\min_{i\in S}\frac{\partial\lambda_{1}}{\partial X{}_{ij}}.
\end{align*}
In other words, $\theta_{j}$ is the \emph{minimal} infinitesimal
increase (cost) in $\lambda_{1}$ when adding an edge from $S$ to
$j$, and we only need to consider the row $i\in S$ that achieves $\theta_{j}$. 
We can therefore consider the following problem: 
\begin{align*}
&\min_{x}\sum_{j\in\bar{N}_{X}(S)}\theta_{j}\cdot x_{j}\\
&\textrm{such that }\sum_{j\in\bar{N}_{X}(S)}q_{j}\cdot x_{j}\geq\delta \textrm{ and }x_{j}\in\{0,1\}.
\end{align*}

This is again a canonical knapsack problem. Moreover, Corollary 2.4
of \citet{stewart1990matrix} gives 
\begin{align*}
\theta_{j} & =\min_{i\in S}\frac{\vec{u}_{1}^{T}e_{i}e_{j}^{T}\vec{v}_{1}}{\vec{u}_{1}^{T}\vec{v}_{1}}
\end{align*}
 where $\vec{u}_{1},\vec{v}_{1}$ are the left and right eigenvectors of $\lambda_{1}$, respectively.
The problem can be simplified because we just choose $i$ with minimal
$\vec{u}_{1}^{T}e_{i}$ for any $j\in\bar{N}_{X}(S)$, and the problem reduces
to the same knapsack problem as above \eqref{eqn:ilp}:
\begin{align*}
&\min_{x}\sum_{j\in\bar{N}_{X}(S)}\vec{v}_{1}(j)\cdot x_{j}\\
&\textrm{such that }\sum_{j\in\bar{N}_{X}(S)}q_{j}\cdot x_{j}\geq\delta \textrm{ and }x_{j}\in\{0,1\}.
\end{align*}
We therefore have a rigorous justification of the above ILP problem
in terms of minimizing the instantaneous increase in largest eigenvalue
while unblocking the subset $S$.

\subsection{Toy example}
To build intuition, here we provide a toy example of data where IPF does not converge:
\begin{align}
    X &= \begin{bmatrix}
    1 & 0 & 0\\
    1 & 1 & 0\\
    0 & 1 & 0\\
    0 & 1 & 1
    \end{bmatrix} \\
    p &= [1, 1, 1, 1], q = [1, 1, 2]. \nonumber
\end{align}
IPF will not be able to converge on these inputs, since rows 0, 1, and 2 form a blocking subset: their total row marginal is 3 but their total column marginal (for columns 0 and 1) is 2.
Intuitively, the problem is that row 0 is only connected to column 0, so all of its marginal must be used on column 0, leaving no marginal left in column 0. So, even though row 1 is also connected to column 0, it must use all of its marginal on its remaining connection, column 1, which leaves no room for row 2's marginal, where row 2 is only connected to column 1.

First, we verify that \texttt{BLOCKING-SET} returns $S = \{0, 1, 2\}$.
The marginal gap in this case is $\delta = 1$.
For $\texttt{MODIFY-X}$, there is only one column that $S$ is not connected to, which is column 2. 
If we use the objective of minimizing number of new edges, we would add one edge based any row in $S$ and column 2, which unblocks $S$ and allows IPF to converge.
In comparison, if we added $\epsilon$ to all zeros, as prior works have done to enable IPF to converge, we would add five edges.
Alternatively, if we use the objective of minimizing change in $\lambda_1$, our algorithm identifies row 0 as the row in $S$ with the smallest $\vec{u}_1(i)$.\footnote{Following \citet{tong2012gel}, we use $-\vec{u}_1(i)$ when $\min_i \vec{u}_1(i)$ is negative, and similarly use $-\vec{v}_1(j)$ when $\min_j \vec{v}_1(j)$ is negative, to ensure that all eigenscores are non-negative. According to the Perron-Frobenius theorem, such eigenvectors always exist.}
In this case, there is only one column, column 2, that is not connected to $S$, so the algorithm adds an edge between row 0 and column 2 with weight $\epsilon = 0.01$.
The change in $\lambda_1$ from this modification is 0.0007. 
In comparison, if we connected row 1 to column 2 instead, the change is $\lambda_1$ is 0.0019 ($2.8\times$), and if we connected row 2 to column 2, the change is $\lambda_1$ is 0.0013 ($1.8\times$).
If we added $\epsilon$ to all zero entries, the change in $\lambda_1$ is 0.010 ($14.3\times$), which is much larger.
So, this toy example demonstrates how, even on a tiny matrix $X$ with relatively few zero values, the difference between our algorithm and the naïve solution is sizable, and our approximation of change in $\lambda_1$ still effectively minimizes that change in practice.
On real-world data with much larger $X$, the differences are far more dramatic: in Section \ref{sec:mobility-ipf}, we show on real-world mobility data that our algorithm reduces change in $\lambda_1$ by orders of billions.

\section{Experiments with data} 
\label{sec:app-empirics}
In this section, we provide details on our experiments with data and describe additional experimental results.
In Appendix \ref{sec:synthetic}, we describe experiments with synthetic data, which enables us to test IPF's ability to recover \textit{true} network parameters $u$ and $v$.
We test IPF under correct model specification and model misspecification, trying alternate models with non-Poisson distributions or interaction terms.
In Appendix \ref{sec:app-safegraph}, we describe experiments with mobility data from SafeGraph, which provide a realistic example of our network inference setting, where only the hourly marginals and time-aggregated network are provided, with missingness in the time-aggregated network.
We show that IPF fails to converge several times on this data, demonstrating the importance of principled solutions for non-convergence, and evaluate our new convergence algorithm \texttt{ConvIPF} on this data, revealing dramatic improvements over prior solutions.
In Appendix \ref{sec:app-bikeshare}, we describe experiments with bikeshare data from New York City's Citibike system, where we are able to construct ground-truth hourly networks.
These ground-truth networks allow us to evaluate IPF ability to estimate hourly networks in our network inference setting (with hourly marginals and time-aggregated network) and compare it to baseline methods.
We also use these networks as an opportunity to test our model assumptions on real-world data.

\subsection{Synthetic data}
\label{sec:synthetic}
\paragraph{Generating synthetic data.}
We generate synthetic data based on our biproportional Poisson model \eqref{eqn:model}.
In these experiments, we set $m = n = 100$, and generate data in the following order:
\begin{enumerate}
    \item We sample the row scaling factors $e^{u} \in \mathbb{R}^m$ and column scaling factors $e^{-v} \in \mathbb{R}^n$ from Uniform(0, 4).
    \item We sample $\Xagg \in \mathbb{R}^{m \times n}$ from Uniform(0, 1).
    \item For a given sparsity level $r \in [0, 1)$, we randomly select $r \cdot mn$ entries from $\Xagg$ (without replacement) and set them to 0.
    \item We sample each $\Xtime_{ij}$ from Poisson($e^{u_i} \Xagg_{ij} e^{-v_j})$.
    \item We set $\ptime$ and $\qtime$ to the row sums and column sums of $\Xtime$, respectively.
\end{enumerate}

\subsubsection{Comparing IPF and Poisson regression}
First, we run IPF on $\Xagg$, $\ptime$ and $\qtime$, producing parameters $d^0 \in \mathbb{R}^m$ and $d^1 \in \mathbb{R}^n$.
Then, we fit a Poisson regression model on all observations where $\Xagg_{ij} > 0$, following the construction in Section \ref{thm:Poisson-mle}. 
Poisson regression returns parameters $\{\theta_i\}_{i=1}^m$ corresponding to rows and parameters $\{\theta_j\}_{j=1}^n$ corresponding to columns.
Based on Theorem \ref{thm:model}, we expect $d^0_i = \exp(\theta_i)$, for all $i \in [m]$, and $d^1_j = \exp(\theta_j)$, for all $j \in [n]$, subject to arbitrary scaling between rows and columns (i.e., scaling row factors by $k$ and scaling column factors by $1/k$).
To control for such scaling, we normalize both sets of parameters by dividing by their means.
In Figures \ref{fig:synthetic-ipf-poisson}, we plot the normalized IPF parameters versus the Poisson regression parameters.
We find that they lie perfectly on the $y=x$ line, validating Theorem \ref{thm:model}.
We also plot the 95\% confidence intervals, as provided by Python's statsmodels package for fitting generalized linear models.\footnote{\url{https://www.statsmodels.org/stable/glm.html}.}
Finally, we also plot the true parameter values $e^u$ and $e^{-v}$ in Figure \ref{fig:synthetic-ipf-poisson}, which we can only include in our synthetic setting since we know the true network model's parameters.

\subsubsection{Evaluating IPF over varying sparsity in $\Xagg$}
\label{sec:synthetic-sparsity}
We also evaluate IPF over varying levels of sparsity in the time-aggregated matrix $\Xagg$, since sparsity is common in real-world data, as we find in our later experiments on real-world mobility data (Section \ref{sec:app-safegraph}) and bikeshare data (Section \ref{sec:app-bikeshare}).
For each sparsity rate in $r \in \{0, 0.05, \cdots, 0.9\}$, we run 1000 random trials, where in each trial, we repeat steps 2-5 of our generative process with that sparsity rate in $\Xagg$ (i.e., we fix $e^u$ and $e^{-v}$ but resample all other variables) and run IPF on the newly generated $\Xagg$, $p$, and $q$.
For each trial, we evaluate three metrics:
\begin{itemize}
    \item The number of iterations that IPF takes to converge, following our implementation in Algorithm \ref{alg:ipf} (recall that under the model, IPF will always converge; see Corollary \ref{cor:model-converge}).
    \item The bound on the MLE's expected estimation error, $\mathbb{E}[||(\hat{u}-u, \hat{v}-v)||^2_2]$, from Theorem \ref{thm:mse}, without constants:
    \begin{align*}
        \frac{\sum_{ij} e^{u_i} \Xagg_{ij} e^{-v_j}}{\lambda_{-2}^2(\mathcal{L})}.
    \end{align*}
    \item The observed $\ell_2$ errors between the IPF estimates and the true model parameters:
    \begin{align}
        ||(d^0 - e^u, d^1 - e^{-v}||_2 = \sqrt{\sum_i (d^0_i - e^{u_i})^2 + \sum_j (d^1_j - e^{-v_j})^2}, \label{eqn:l2-def}
    \end{align}
    where we use the mean-normalized version of all parameters.
\end{itemize}

In Figure \ref{fig:ipf-sparsity}, we visualize our results, showing the mean and 95\% CIs (from 2.5th to 97.5th percentiles over random trials) per metric over sparsity rates.
We find that IPF is negatively affected by sparsity in multiple ways: the number of iterations until convergence increases, and both the expected bound and observed estimation error on the network parameters worsens.
However, the rate at which these metrics change differ, with the bound growing most quickly at high levels of sparsity. 
To better understand why, below we analyze the connection between the bound on the MLE's expected error, which we derived in Theorem~\ref{thm:mse}, and the observed $\ell_2$ error that we evaluate in these experiments.

Recall from Theorem~\ref{thm:mse} that in this example, we have an in-expectation bound on the error of the MLE of the form 
\[ \mathbb E \|(\hat u - u, \hat v - v)\|_2^2 = O\left(\frac{\sum_{ij} X_{ij}}{\lambda_{-2}^2(\mathcal L)}\right) \]
where $\hat u = \log d^0$ and $\hat v = - \log d^1$. By the mean-value theorem and the fact that the exponential function is Lipschitz on the interval $[-4,4]$, this implies an analogous in-expectation bound on the exponentiated version which corresponds to the plots:
\[ \mathbb E[\|(d^0 - e^u, d^1 - e^{-v})\|_2^2 \mid X] = O\left(\frac{\sum_{ij} X_{ij}}{\lambda_{-2}^2(\mathcal L)}\right). \]
Since
\[ \mathbb E X_{ij} = 2(1 - r) \]
we have that in expectation (over the randomness of $X$)
\[ \mathbb E \mathcal L = 2(1 - r)\begin{bmatrix} n I & -11^T \\ -11^T & n I \end{bmatrix} \]
and so $\lambda_{-2}(\mathbb E \mathcal L) = 2(1 - r)n$.
Using Weyl's inequality and standard tools from random matrix theory (sketched below, see e.g. \cite{gross2010note} or \cite{anderson2010introduction}) we know that $\lambda_{-2}(\mathcal L) = \lambda_{-2}(\mathbb E \mathcal L) + o(n)$ with high probability for any fixed sparsity rate $r \in [0,1)$. So with high probability over the randomness of $X$,
\begin{equation}  \mathbb E[\|(d^0 - e^u, d^1 - e^{-v})\|_2^2 \mid X] = O\left(\frac{(1 -r)n^2}{(1 - r)^2 n^2}\right) = O\left(\frac{1}{1 - r}\right). 
\end{equation}
By Markov's inequality, this shows that with 99\% probability overall,
\[ \|(d^0 - e^u, d^1 - e^{-v})\|_2 = O\left(\frac{1}{\sqrt{1 - r}}\right) \]
and the right hand side qualitatively matches the shape observed in Figure~\ref{fig:ipf-sparsity} (right).

\emph{Sketch of concentration argument.} We only include a sketch of this argument since it is standard in the random graph/matrix literature and not part of any of the main results of this work.  
By standard concentration estimates (i.e. Bernstein's inequality), the number of entries of $X$ that are deleted from each row will be $rn \pm O(\sqrt{rn \log(n)})$. Equivalently, the number of entries that remain in each row will be $(1 - r)n \pm O(\sqrt{rn \log(n)})$. Next we can compute that
\[ \text{Var} X_{ij} = (1-r)\frac{16}{3} + 4r(1 - r). \]
Hence
\[ \begin{bmatrix} 0 & X \\
X^T & 0 \end{bmatrix} = (1-r) 11^T + W \]
where $W$ is a symmetric mean-zero matrix, where the variance of each coordinate above the diagonal is $O(1 - r)$. So by matrix concentration (see e.g. \cite{gross2010note})
we have that $\|W\|_{OP} = O(\sqrt{1 - r}\sqrt{n \log(n)})$ with high probability. Since all of the error terms in this analysis were $O(\sqrt{n} \log(n))$, they are certainly $o(n)$ as claimed before.

\subsubsection{Testing IPF under model misspecification}
\label{sec:synthetic-misspecification}
One advantage of defining an explicit model (the biproportional Poisson) that IPF corresponds to is that we can now explore related, modified models.
First, we show that our model can be written as a multinomial model, as it is well-known that the distribution of a set of independent Poisson variables can be decomposed as a multinomial distribution over these variables, conditioned on their sum, multiplied by the distribution of their sum (which is a Poisson variable).
Let $\lambda := \sum_{ij; \Xagg_{ij} > 0} e^{u_i} \Xagg_{ij} e^{-v_j}$.
Then, our biproportional Poisson model \eqref{eqn:model} is equivalent to the following model:
\begin{align*}
    N^{(t)} &\sim \mathrm{Poisson}(\lambda)\\
    \{\Xtime_{ij}\}_{\Xagg_{ij} > 0} &\sim \mathrm{Mult}(N^{(t)}, \mathbf{\pi}) \\
    \pi_{ij} &= \frac{e^{u_i} \Xagg_{ij} e^{-v_j}}{\sum_{i'j'; \Xagg_{i'j'}>0} e^{u_{i'}} \Xagg_{i'j'} e^{-v_{j'}}}.
\end{align*}

Other models are not equivalent to the Poisson, but defining them allows us to test IPF's performance under model misspecification.
Below, we define three new models, discuss their MLEs, and conduct experiments where we run IPF on data generated from these models.
In all of these models, we redefine how the time-varying network $\Xtime_{ij}$ is generated, but $\ptime$ and $\qtime$ remain the row and column sums of $\Xtime$, respectively, and $\Xtime_{ij} = 0$ wherever $\Xagg_{ij} = 0$.

\paragraph{Non-Poisson distributions.}
In our biproportional Poisson model \eqref{eqn:model}, we assume each $\Xtime_{ij}$ is drawn from Poisson($\lambda_{ij}$), where $\lambda_{ij} = e^u \Xagg_{ij} e^{-v}$.
Here, we explore other models that keep the same expected value $\lambda_{ij}$, but replace the Poisson distribution with other distributions for count data.
These experiments build on what we proved in Theorem~\ref{thm:uniqueness}: within a family of generalized linear models, IPF uniquely recovers the MLEs of the Poisson model.
Now we test empirically how IPF performs under model misspecification, by trying non-Poisson distributions.

\emph{Exponential.}
First, we try an exponential distribution, since \citet{holfold1980rates} showed that maximum likelihood estimators are equivalent for Poisson and exponential models when the underlying \textit{rate} at which events occur has a log-linear relationship with covariates, and the exponential model is defined by the rate while the Poisson model by the rate multiplied by the population size.
In contrast, we show that replacing the Poisson with an exponential distribution in our model does not yield the same MLE, since the log-linear relationship is with the Poisson parameter instead of a rate.
Since we want the expected value of $\Xtime_{ij}$ to be $e^{\paramrow_i} \Xagg_{ij} e^{-\paramcol_j}$, then the exponential rate must be $\frac{1}{e^{\paramrow_i} \Xagg_{ij} e^{-\paramcol_j}}$.
So, our model is
\begin{align}
    \Xtime_{ij} \sim \mathrm{Exp}(\frac{1}{e^{\paramrow_i} \Xagg_{ij} e^{-\paramcol_j}})\textrm{, for }\Xagg_{ij} > 0. \label{eqn:exp-model}
\end{align}
The likelihood of this model is
\begin{align*}
    \mathcal{L}(\Xtime | \Xagg, u, v) = \prod_{i,j,\Xagg_{ij} > 0}  \frac{1}{e^{\paramrow_i} \Xagg_{ij} e^{-\paramcol_j}} \exp(-\frac{\Xtime_{ij}}{e^{\paramrow_i} \Xagg_{ij} e^{-\paramcol_j}}).
\end{align*}
Maximizing the log-likelihood yields
\begin{align}
    \max_{u,v} \sum_{i,j,\Xagg_{ij} > 0} -\paramrow_i - \log \Xagg_{ij} + \paramcol_j - \frac{\Xtime_{ij}}{e^{\paramrow_i} \Xagg_{ij} e^{-\paramcol_j}}. \label{eqn:exp-ll}
\end{align}
We can see that this log-likelihood \eqref{eqn:exp-ll} clearly does not match our Poisson model's: the signs on $\paramrow_i$ and $\paramcol_j$ are flipped and, unlike in our derivation of the Poisson model's log-likelihood \eqref{eqn:poisson-marginals},  we cannot marginalize out $\Xtime_{ij}$, meaning that the marginals $\ptime$ and $\qtime$ do not suffice as sufficient statistics.
Thus, it is not possible for IPF to derive the maximum likelihood estimates of $u,v$ when the distribution is exponential, instead of Poisson.

\emph{Negative binomial.}
We also try a negative binomial distribution, which is a common alternative to the Poisson model for count data \citep{gardner1995poisson}, since it allows for overdispersed data while the Poisson assumes equidispersion (equal mean and variance).
Our random variable here is $\Xtime_{ij}$, and the model is defined in terms of parameters $s$ (the number of successes) and $\gamma$ (the success probability).
To maintain an expected value of $e^{\paramrow_i} \Xagg_{ij} e^{-\paramcol_j}$, we have
\begin{align*}
    s = \frac{\gamma \cdot e^{\paramrow_i} \Xagg_{ij} e^{-\paramcol_j}}{1-\gamma},
\end{align*}
which yields the following model:
\begin{align}
    \Xtime_{ij} \sim \mathrm{NB}(\frac{\gamma \cdot e^{\paramrow_i} \Xagg_{ij} e^{-\paramcol_j}}{1-\gamma}, \gamma)\textrm{, for }\Xagg_{ij} > 0. \label{eqn:nb-model}
\end{align}
Each random variable has variance
\begin{align*}
    \frac{s(1-\gamma)}{\gamma^2} = \frac{e^{\paramrow_i} \Xagg_{ij} e^{-\paramcol_j}}{\gamma}.
\end{align*}
The negative binomial model becomes identical to our Poisson model in the limit $\gamma \rightarrow 1$, but otherwise clearly will not result in equivalent MLEs due to the change in variance and additional parameter $\gamma$.

Although our statistical theory is developed for MLE under correct specification, we remark that it applies more generally to non-Poisson distributions, as long as the conditional mean of $X_{ij}$ is given by $e^{\paramrow_i} \Xagg_{ij} e^{-\paramcol_j}$. This is because pseudo-Poisson maximum likelihood is known to yield a consistent estimator as long as the logarithm of the conditional mean of the outcome is linear in the covariates. In our setting, this requirement is satisfied as long as the mean of $X_{ij}$ is of the exponential form specified in \eqref{eqn:model}, without requiring the distribution to be Poisson. Since we show in \cref{thm:model} that the problem is equivalent to a Poisson regression, we can conclude that IPF yields a consistent estimate for any model with the specified log-linear means. 

\emph{Experiments.}
We compare IPF's performance on data generated from our biproportional Poisson model \eqref{eqn:model} versus data generated from an exponential distribution \eqref{eqn:exp-model} and from a negative binomial distribution \eqref{eqn:nb-model}.
For the negative binomial, we try $\gamma \in \{0.2, 0.5, 0.8\}$; since the negative binomial is equivalent to the Poisson model when $\gamma \rightarrow 1$, smaller $\gamma$ corresponds to greater deviance from our original model.
To test IPF on data generated from these models, we run 1000 trials, where in each trial, we sample $\Xagg$ from Uniform($0, 1$) (following Step 2 above); sample $\Xtime$ based on the model we are using; set $\ptime$ and $\qtime$ to the row sums and column sums of $\Xtime$, respectively; then run IPF on $\Xagg$, $\ptime$, and $\qtime$.
As in our previous experiment, we evaluate IPF using the $\ell_2$ distance between the true model parameters and IPF estimates \eqref{eqn:l2-def}.
As an additional metric, we also evaluate the cosine similarity between the true network $\Xtime$ and IPF-estimated network $\estXtime = D^0 \Xagg D^1$, where we use cosine similarity as a scale-invariant measure of the similarity:\footnote{We left out this metric when we were evaluating sparsity, since it could be arbitrarily improved with sparsity due to known 0s.}
\begin{align}
    \textrm{sim}(\Xtime, \estXtime) &= \frac{\sum_{ij} \Xtime_{ij} \estXtime_{ij}}{||\Xtime||_2 \cdot ||\estXtime||_2}.\label{eqn:cosine-sim}
\end{align}
As shown in Table \ref{tab:ipf-non-poisson}, we find that IPF performs best when the model is correctly specified (i.e., the data is generated from our biproportional Poisson model), compared to data generated from exponential or negative binomial models.
As expected, IPF's performance on data from the negative binomial model also worsens as $\gamma$ becomes smaller.
However, IPF is still reasonably effective under model misspecification: the cosine similarities between $\Xtime$ and $\estXtime$ remain above $0.8$ for the exponential model and negative binomial models, when $\gamma \geq 0.5$.
\begin{table}[]
    \centering
    \begin{tabular}{c|c|c}
        Model & $\ell_2$ error on $u$ and $v$ & Cosine sim. between $\Xtime$ and $\estXtime$\\
        \hline
        Biproportional Poisson \eqref{eqn:model} & 0.995 (0.885-1.114) & 0.911 (0.907-0.914)\\
        Negative binomial \eqref{eqn:nb-model}, $\gamma = 0.8$  & 1.117 (0.999-1.260) & 0.892 (0.887-0.897) \\
        Negative binomial \eqref{eqn:nb-model}, $\gamma = 0.5$ & 1.410 (1.266-1.572) & 0.843 (0.836-0.850) \\
        Exponential \eqref{eqn:exp-model} & 1.734 (1.519-1.976) & 0.855 (0.843-0.866) \\
        Negative binomial \eqref{eqn:nb-model}, $\gamma = 0.2$ & 2.226 (1.957-2.479) & 0.707 (0.695-0.720) \\
        \hline
    \end{tabular}
    \caption{Evaluating IPF's performance on synthetic data generated from our biproportional Poisson model vs. models with non-Poisson distributions. The models are approximately ordered in terms of best-to-worst performance.}
    \label{tab:ipf-non-poisson}
\end{table}

As we did in Section \ref{sec:synthetic-sparsity}, we also test IPF over varying levels of sparsity $r$ in $\Xagg$, but this time with $\Xtime$ generated from exponential distribution \eqref{eqn:exp-model}.
This experiment allows us to compare the shape of $\ell_2$ error over varying levels of sparsity, when the data is correctly specified (generated from Poisson) vs. incorrectly specified (generated from exponential).\footnote{We are not able to test the negative binomial model with sparsity greater than 0, since the number of successes $s$ must be strictly positive, so the model \eqref{eqn:nb-model} ill-defined for $\Xagg_{ij} = 0$.}
We visualize the results in Figure \ref{fig:sparsity-misspec}: we find that the shape of the $\ell_2$ curves are highly similar when the data is drawn from either distribution, and matches the expected shape of $1 / \sqrt{1-r}$ (see derivation in Section \ref{sec:synthetic-sparsity}), showing the usefulness of our derived error bounds even under model misspecification.

\begin{figure}
    \centering
    \includegraphics[width=0.6\linewidth]{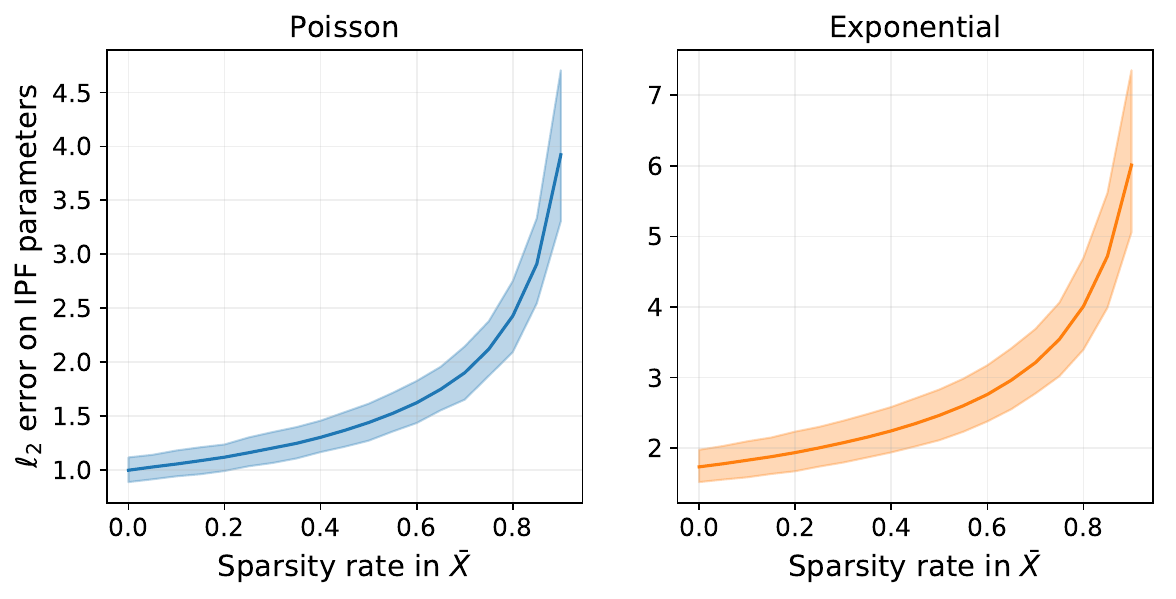}
    \caption{Comparing sparsity rate in $\Xagg$ to observed $\ell_2$ error of IPF estimates, for data drawn from Poisson (left) vs. exponential (right) distributions. The subfigure on the left, where the model is correctly specified, exactly matches Figure \ref{fig:ipf-sparsity}, right. Lines represent mean and shaded region represents 95\% CIs over 1000 trials.}
    \label{fig:sparsity-misspec}
\end{figure}

\paragraph{Models with interaction terms.}
Our original model assumes that the expected hourly values for $\Xtime_{ij}$ are a scaling of $\Xagg_{ij}$ by a row factor $e^{u_i}$ and a column factor $e^{-v_j}$.
However, there could also be interactions between rows and columns that influence the expected hourly values.
To account for interaction, we extend our model as 
\begin{align}
    \Xtime_{ij} \sim \mathrm{Poisson}(e^{u_i} \Xagg_{ij} e^{-v_j} \cdot d_{ij}^{\alpha} e^{-d_{ij} \beta})\textrm{, for }\Xagg_{ij} > 0, \label{eqn:interaction}
\end{align}
where $\alpha$ and $\beta$ are a new model parameters and $d_{ij} > 0$ represents an observed interaction feature, such as the geographical distance if the rows and columns correspond to physical places.
This new model is inspired by the doubly constrained gravity model from \citet{navick1994distance}, who estimate the number of trips between bus stops $i$ and $j$ as $d_{ij}^{\alpha} e^{-d_{ij} \beta} A_i B_j$, with learned row and column factors $A_i$ and $B_j$ which correspond to $e^{u_i}$ and $e^{-v_j}$.
We discuss their gravity model in detail in Section \ref{sec:eval-ground-truth}, where we use it as a baseline against which to compare IPF.
We can view this interaction model \eqref{eqn:interaction} as interpolating between using the time-aggregated network as the initial matrix, as a data-driven perspective, and using a gravity model to form the initial matrix, as a model-based perspective; meanwhile, continuing to learn row- and column-specific factors to match the observed time-varying marginals.
Maximizing the log-likelihood of this model yields
\begin{align*}
    \max_{u,v,\alpha,\beta} &\sum_{i,j; \Xagg_{ij}>0} \Xtime_{ij} (u_i + \log \Xagg_{ij} - v_j + \alpha \log d_{ij} - d_{ij} \beta) - e^{u_i} \Xagg_{ij} e^{-v_j} \cdot d_{ij}^{\alpha} e^{-d_{ij} \beta} \\
    &\propto \max_{u,v,\alpha,\beta} \sum_i u_i \ptime_i - \sum_j \vtime_j q_j + \alpha \sum_{ij} \Xtime_{ij} \log d_{ij} - \beta \sum_{ij} \Xtime_{ij} d_{ij} - \sum_{ij} e^{u_i} \Xagg_{ij} e^{-v_j} \cdot d_{ij}^{\alpha} e^{-d_{ij} \beta}.
\end{align*}
This new log-likelihood cannot be simplified to the original log-likelihood \eqref{eqn:poisson-marginals} unless $\alpha = \beta = 0$.
Furthermore, unlike in the biproportional Poisson model, the marginals of $\Xtime$ ($\ptime$ and $\qtime$) do not form sufficient statistics here, meaning that it cannot be used in our dynamic network inference setting where only $\ptime$, $\qtime$, and $\Xagg$ are observed (but not $\Xtime$).
However, it is still instructive to define this model, so that we can test IPF on data generated from this model.

\emph{Experiments.}
To generate data from the interaction model \eqref{eqn:interaction}, first we sample 2-dimensional positions per row $i \in [m]$ and column $j \in [n]$ from Uniform($0, 1$), then we set the distance $d_{ij}$ between row $i$ and column $j$ to be the Euclidean distance between their positions.
Then, for each value of $\alpha$ and $\beta$, we run 1000 trials, where (as before) in each trial we sample $\Xagg$ from Uniform($0, 1$); sample $\Xtime$ from the interaction model with the current values of $\alpha$, $\beta$, and $d_{ij}$; set $\ptime$ and $\qtime$ to the row sums and column sums of $\Xtime$, respectively; then run IPF on $\Xagg$, $\ptime$, and $\qtime$.
As before, we evaluate the $\ell_2$ distance between the true model parameters and IPF estimates \eqref{eqn:l2-def} as well as the cosine similarity between the true network $\Xtime$ and IPF-estimated network $\estXtime$ \eqref{eqn:cosine-sim}.
\begin{figure}[t]
     \centering
     \begin{subfigure}
         \centering
         \includegraphics[width=0.6\textwidth]{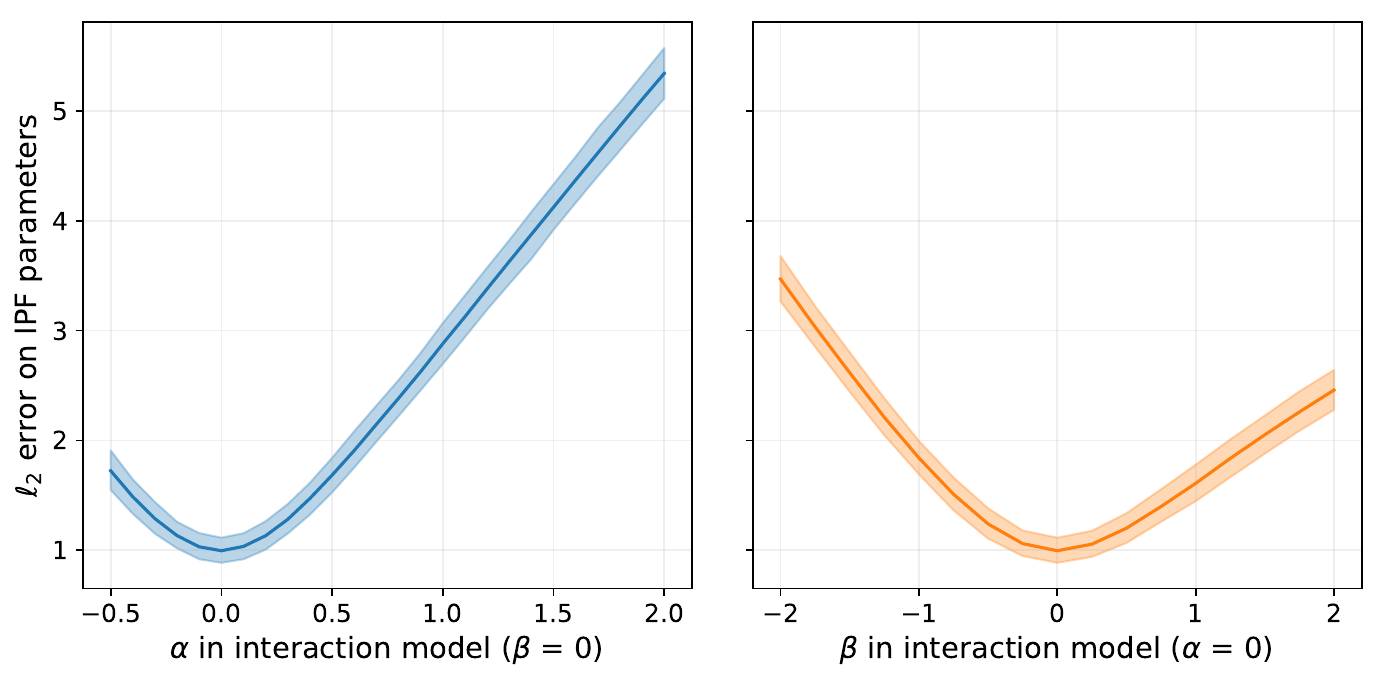}
         \caption{IPF's $\ell_2$ error on network parameters $u$ and $v$ \eqref{eqn:l2-def}.}
         \label{fig:interaction-model-l2}
     \end{subfigure}
     \begin{subfigure}
         \centering
         \includegraphics[width=0.6\textwidth]{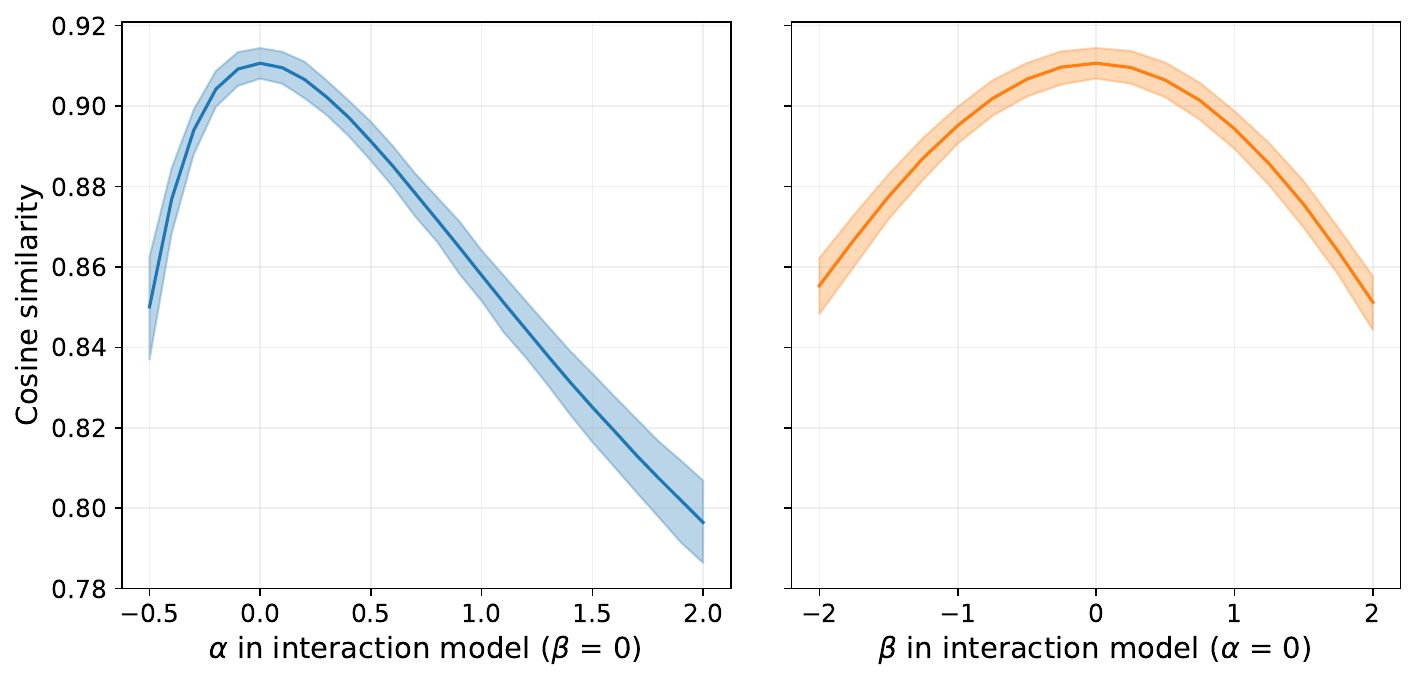}
         \caption{Cosine similarity between true network $\Xtime$ and IPF-estimated network $\estXtime$ \eqref{eqn:cosine-sim}.}
         \label{fig:interaction-model-cosine}
     \end{subfigure}
        \caption{Evaluating IPF performance on data generated from interaction model \eqref{eqn:interaction}, over different values of $\alpha$ and $\beta$.}
        \label{fig:interaction-model}
\end{figure}
To test the impact of $\alpha$ alone, we fix $\beta$ to 0 and try $\alpha$ from $-0.5$ to $2$, which is symmetric in its impact on $d_{ij}$ since we have $d_{ij}^\alpha$ in our model.
As expected, we find that performance worsens as $\alpha$ deviates further from $0$, but we find that over this range of $\alpha$ that cosine similarity between the true and estimated network remains high, at $0.8$ or higher, while the $\ell_2$ distance on parameters $u$ and $v$ is more dramatically worsened (Figure \ref{fig:interaction-model}, left).
To test the impact of $\beta$ alone, we fix $\alpha$ to 0 and try $\beta$ from $-2$ to $2$, which is also symmetric in its impact on $d_{ij}$ since we have $e^{-d_{ij} \beta}$ in our model.
We also find that performance worsens as $\beta$ deviates from $0$, but the cosine similarity remains high (above $0.85$) and the $\ell_2$ is less affected by $\beta$ than $\alpha$ (Figure \ref{fig:interaction-model}, right).
So, at least under these values of $\alpha$, $\beta$, and $d_{ij}$, we find that IPF is relatively robust to model modifications and can still effectively infer the true network. 

\subsection{Experiments with real-world mobility data} 
\label{sec:app-safegraph}
We also conduct experiments with real-world mobility data, using the same aggregated location data from SafeGraph as \citet{chang2021nature,chang2021kdd}.
Like \citet{chang2021nature,chang2021kdd}, our goal is to use the data from SafeGraph to infer hourly mobility networks that encode visits from home census block groups (CBGs), which are neighborhoods of around 600-3000 people, to individual points-of-interest (POIs), which are public locations such as restaurants, grocery stores, or gas stations.
SafeGraph data provides a realistic example of where IPF would be necessary to infer a dynamic network from its 3D marginals, since SafeGraph does not provide the hourly mobility network; instead, they provide hourly total visits to POIs and from CBGs as well as an estimated, time-aggregated mobility network.
Furthermore, SafeGraph's mobility data is noisy for several reasons, which creates the possibility of incorrect zeros in the time-aggregated matrix, and thus a need for our convergence algorithm in \cref{sec:convergence}.
We discuss these reasons below.

\paragraph{Noise and missingness in SafeGraph data.}
First, unlike in the case of bikeshare data (Section \ref{sec:app-bikeshare}), where bikes are perfectly observed checking into and out of stations, SafeGraph needs to \textit{infer} assignments to rows (i.e., POIs) and columns (i.e., CBGs). 
That is, SafeGraph's raw data takes the form of individual devices' GPS signals, and SafeGraph needs to perform both \textit{visit attribution} to determine that a device is visiting a particular POI \citep{safegraph-visits} and \textit{home attribution} to determine that a device belongs to a person who lives in a given CBG \citep{safegraph-social-distancing,huang2021athome}.
Mistakes can be made at different stages of this process, such as incorrect visit attribution due to GPS drift or incorrect home attribution due to nighttime activity in a CBG that is not the person's home (e.g., if they have night shifts for work).
Second, since SafeGraph relies on data from smartphones, they are missing individuals who either do not carry cell phones or opt out of data collection \citep{safegraph-bias}.
For example, \citet{coston2021facct} showed that SafeGraph disproportionately misses older and non-white populations in their data. 
Third, to preserve privacy, SafeGraph documents that they apply ``differential privacy techniques'' to their reported time-aggregated networks \citep{safegraph-patterns}.
Specifically, for a given POI, when reporting time-aggregated (weekly or monthly) counts of visits from a home CBG, they add Laplacian noise to the count, drop CBGs with fewer than two devices visiting that POI, and report all visit counts from 2-4 as 4.
All of these reasons lead to imperfectly observed networks and marginals, motivating the need for our convergence algorithm.

\subsubsection{Constructing IPF inputs from mobility data} \label{sec:mobility-construct}
Using this data, our goal is to infer the mobility network at hour $t$ from $n$ CBGs to $m$ POIs.
We study the Richmond, Virginia metropolitan statistical area (MSA), as one of the regions studied by \citet{chang2021kdd}.
Using their inclusion criteria, we include 9917 POIs in the MSA and 1098 CBGs that frequently visit these POIs.
To capture temporal variation, we compare two days---March 2 and April 6, 2020, two Mondays just before and after the onset of the COVID-19 pandemic in the US---and infer hourly networks over their 48 hours.
To infer these networks using IPF, we construct a time-aggregated network, $\Xagg$; hourly total visitors to POIs, $\ptime$; and hourly total visitors from CBGs, $\qtime$.
We construct these quantities from SafeGraph data in the same way as the authors did in \citet{chang2021nature,chang2021kdd}.
We summarize this procedure below, highlighting a few important facts, and refer the reader to the original text for details. 

\paragraph{Constructing time-aggregated network $\Xagg$.}
SafeGraph provides summaries of the home CBGs of each POI's visitors, per month (before March 2020) or week (after March 2020).
To account for non-uniform sampling from different CBGs, we weight the number of SafeGraph visitors from each CBG by the ratio of the CBG population (from US Census) and the number of SafeGraph devices with homes in that CBG.
Following the original text, let $\hat{W}(r)$ represent the reweighted matrix for period $r$ (we use $r$ instead of $t$ to denote time periods longer than an hour).
Since these visit matrices are sparse, we aggregate over $R$ time periods, from January to October 2020:
\begin{align}
    \bar{W} = \frac{1}{R}\sum_{r=1}^R \hat{W}(r), \quad
    X_{ij} = \frac{\bar{W}_{ij}}{\sum_k W_{kj}}.
\end{align}
So, $X_{ij}$ represents the time-aggregated \textit{proportion} of visits to POI $i$ that come from CBG $j$.
Note that SafeGraph's visit matrices include all possible home CBGs, but when we construct $X$, we only include the $n$ CBGs for the metropolitan statistical area.
So, the rows of $X$ typically do not sum to 1 and are usually around 0.9-0.97.

\paragraph{Constructing visitors to POIs, $\ptime$.}
SafeGraph provides the hourly number of \textit{visits}, not visitors, so first we apply corrections to the SafeGraph counts based on the POI's median dwell time to estimate the hourly number of visitors (see Supplementary Information from \citet{chang2021nature}).
To account for SafeGraph undersampling, we also multiply each POI's visit count by a uniform correction factor which is the ratio of the US population to the total number of SafeGraph devices; this factor is around 7.
Finally, since not all of the POI's visits are captured by the $n$ CBGs in $X$, we multiply the POI's visits by its row sum in $X$, i.e., its total proportion of visits kept. 
In Figure \ref{fig:richmond} (left), we visualize the proportion of POI marginals that are nonzero, over 24 hours in the day on March 2, 2020 and April 6, 2020.
We see that only a small proportion of POIs have nonzero marginals at nighttime, e.g., less than 10\% from 12-5am.
For both days, the proportion peaks from around 6-10pm, likely when people are visiting POIs after work.
We also see considerably more sparsity in POI marginals on April 6, compared to March 2, which reflects the onset of the COVID-19 pandemic in the US.

\begin{figure}[t]
     \centering
     \begin{subfigure}
         \centering
         \includegraphics[width=0.7\textwidth]{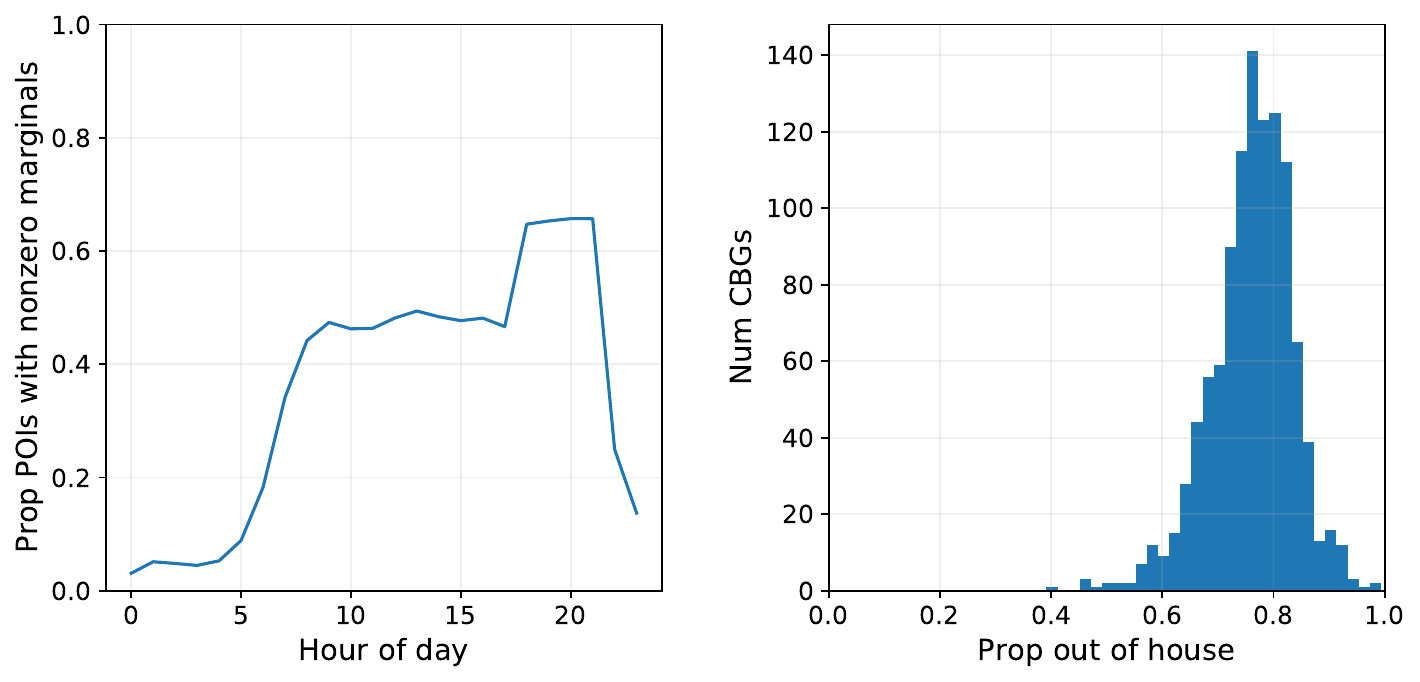}
         \caption{Marginals on Monday, March 2, 2020.}
         \label{fig:richmond-march2}
     \end{subfigure}
     \begin{subfigure}
         \centering
         \includegraphics[width=0.7\textwidth]{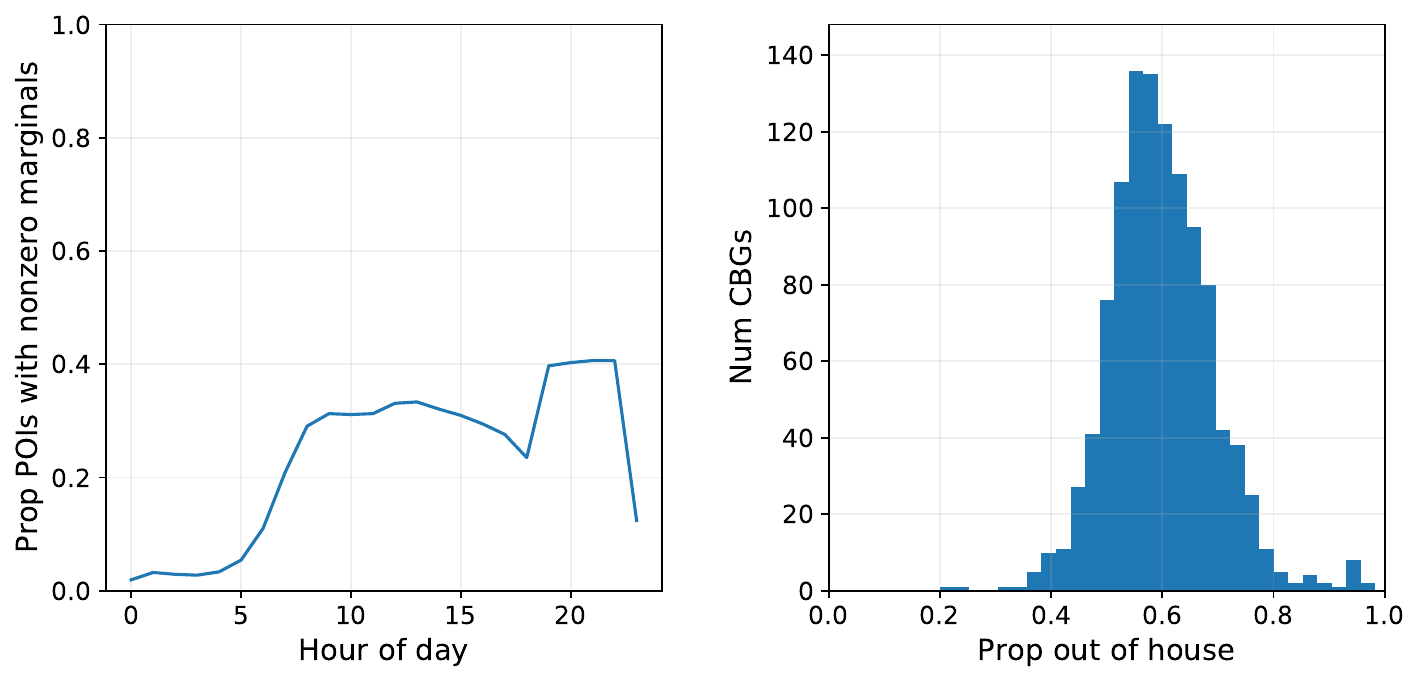}
         \caption{Marginals on Monday, April 6, 2020.}
         \label{fig:richmond-apr6}
     \end{subfigure}
        \caption{POI and CBG marginals from mobility data for Richmond, Virginia MSA.}
        \label{fig:richmond}
\end{figure}

\paragraph{Constructing visitors from CBGs, $\qtime$.}
Using SafeGraph Social Distancing Metrics \citep{safegraph-social-distancing}, we estimate $\hat{h}_j^{(t)}$, the fraction of each CBG that has left their home.
Then, we estimate the number of people who left their home by multiplying these fractions by the CBG population $N_j$ (from US Census), then we scale these estimates so that $\qtime$ sums to $\sum_i \ptime_i$.
We do this since IPF requires the sums of the row and column marginals to match and because the number of people who are not at home may not exactly match the number of people who are visiting POIs.
So, we have
\begin{align}
    \qtime_j = \hat{h}_j^{(t)} N_j \cdot \frac{\sum_i \ptime_i}{\sum_k \hat{h}_k^{(t)} N_k}.
\end{align}
In Figure \ref{fig:richmond} (right), we also visualize the proportions out of the house per CBG.
We only have these quantities at a daily granularity from SafeGraph, so we plot a histogram over CBGs instead of an hourly measure.
We can see that, in this setting, CBG marginals are always positive, even after the pandemic onset.

\subsubsection{Convergence experiments on mobility data} 
\label{sec:mobility-ipf}
\begin{figure}[t]
    \centering    
    \includegraphics[width=0.7\textwidth]{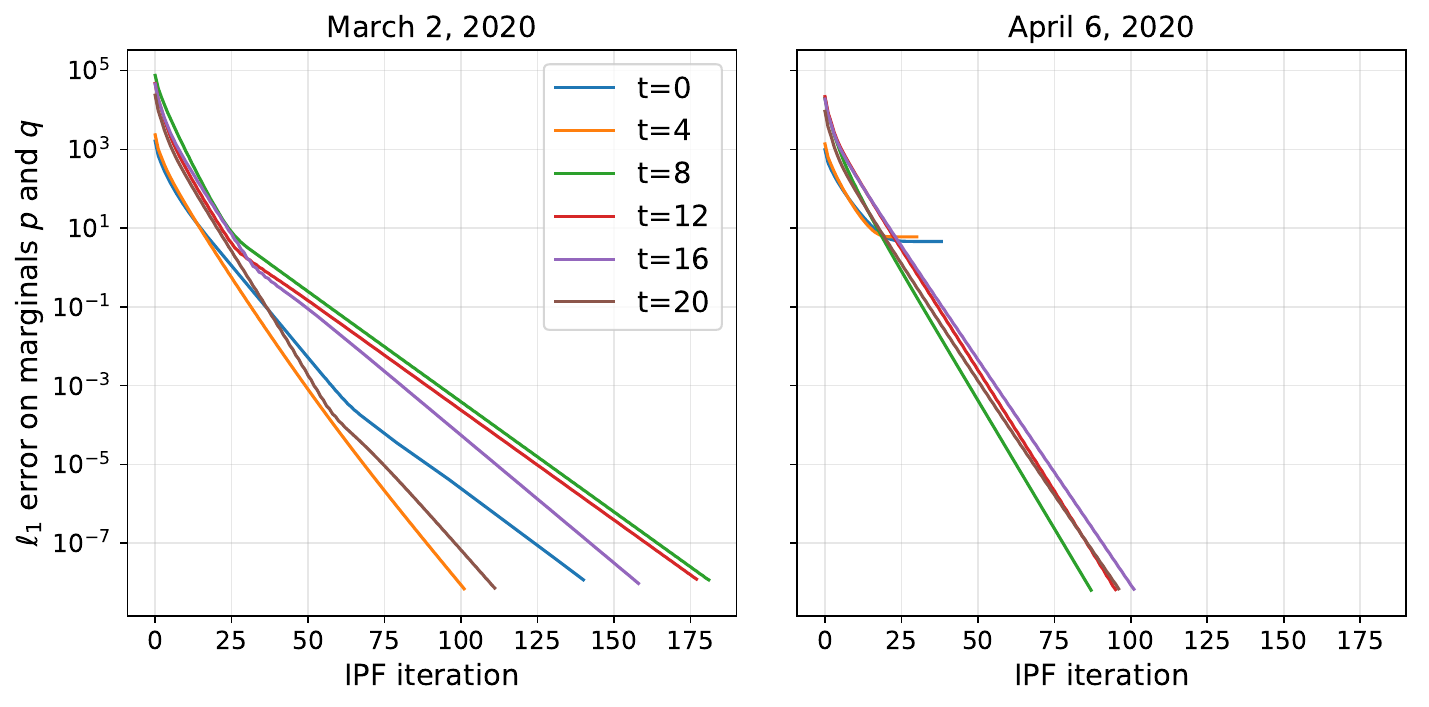}
    \caption{$\ell_1$ error on marginals $p$ and $q$ over IPF iterations on mobility data. We show convergence results from hours $t \in \{0, 4, 8, 12, 16, 20\}$ on March 2, 2020 (left) and April 6, 2020 (right).}
    \label{fig:ipf-l1-marginals}
\end{figure}

\paragraph{Non-convergence.}
We run (modified) IPF (Algorithm \ref{alg:ipf}) for all hours on March 2 and April 6, 2020, and we find that IPF converges for 45 out of the 48 hours. 
However, it gets stuck in oscillation for 3 hours during nighttime, when POI marginals are particularly sparse: 2AM on March 2 and 12AM and 4AM on April 6.
In Figure \ref{fig:ipf-l1-marginals}, we plot the $\ell_1$ error between the marginals of $\ipfestmat$ and the target marginals, which is known to decrease monotonically with each iteration \citep{pukelsheim2014}.
This $\ell_1$ error is computed as:
\begin{align}
    \ell_1(k) &= |\ipfestmat(k) \cdot \mathbf{1} - \ipfrow| + |\ipfestmat(k)^T \cdot \mathbf{1} - \ipfcol|.
\end{align}
When IPF converges, we find that the error decreases exponentially, although the convergence sometimes demonstrates a one-time ``bend'' where the exponential rate changes (e.g., for $t=8$ on March 2).
This intriguing observation is worthy of further study given that IPF only converges exponentially in certain settings \citep{pukelsheim2009ipf}.
When IPF does not converge, as in the case of $t=0$ and $t=4$ on April 6, its $\ell_1$ error gets stuck  at a fixed value, since that fixed value gets passed back and forth from error on the row marginals to error on the column marginals.

\paragraph{Evaluating our convergence algorithm.}
Now, we test our convergence algorithm, \texttt{ConvIPF}, on these three hours where IPF did not converge.
Recall that our convergence algorithm enables IPF to converge by making minimal modifications (edge additions) to the time-aggregated network $\Xagg$.
We test our two definitions of $\texttt{MODIFY-X}$ (Section \ref{sec:app-modify-x}), where we aim to either minimize the number of new edges added or minimize the change in the largest eigenvalue $\lambda_1$.
For the first objective, we showed that any row in the blocking set $S$ can be used, so, as two tie-breaking rules, we experiment with taking the row with the largest $p_i$ and the smallest $p_i$.
We compare our solutions to the typical solution for IPF non-convergence, which is to replace all zeros with a very small value $\epsilon$ \citep{lomax2015geographer,lovelace2015microsim}.

\begin{table}[]
    \centering
    \begin{tabular}{p{3cm}|p{2.5cm}|p{3cm}|p{3cm}|p{3cm}}
          & Replace all zeros & \texttt{ConvIPF}, min \# edges, largest $p_i$ & \texttt{ConvIPF}, min \# edges, smallest $p_i$ & \texttt{ConvIPF}, min change in $\lambda_1$  \\
         \hline
         \multicolumn{5}{c}{March 2, 2023, 2AM} \\
         \hline
         \# new edges & $10012193$ & $\mathbf{2}$ & $16$ & $5$ \\
         Change in $\lambda_1$ & $30.04$ & $1.85 \cdot 10^{-8}$ & $2.54 \cdot 10^{-7}$ & $\mathbf{5.60 \cdot 10^{-9}}$ \\
         \# \texttt{ConvIPF} iters. & -- & $\mathbf{2}$ & $16$ & $5$ \\
         \hline
         \multicolumn{5}{c}{April 6, 2023, 12AM} \\
         \hline
         \# new edges & $10012193$ & $\mathbf{2}$ & $3$ & $3$ \\
         Change in $\lambda_1$ & $30.04$ & $2.14 \cdot 10^{-7}$ & $1.02 \cdot 10^{-8}$ & $\mathbf{9.98 \cdot 10^{-9}}$ \\
         \# \texttt{ConvIPF} iters. & -- & $\mathbf{1}$ & $2$ & $2$ \\
         \hline
         \multicolumn{5}{c}{April 6, 2023, 4AM} \\
         \hline
         \# new edges & $10012193$ & $\mathbf{1}$ & $9$ & $5$ \\
         Change in $\lambda_1$ & $30.04$ & $1.63 \cdot 10^{-8}$ & $8.52 \cdot 10^{-8}$ & $\mathbf{5.72 \cdot 10^{-9}}$ \\
         \# \texttt{ConvIPF} iters. & -- & $\mathbf{1}$ & $9$ & $5$ \\
    \end{tabular}
    \caption{Comparing different methods for achieving IPF convergence over three hours where the mobility data did not converge. The bolded entry in each row shows the optimal (in this case, lowest) number.}
    \label{tab:convergence-results}
\end{table}
We present results in Table \ref{tab:convergence-results}.
First, we find that all three versions of \texttt{ConvIPF} massively outperform the typical solution, for both metrics of minimizing the number of new edges and minimizing change in $\lambda_1$.
Second, we find that in practice, \texttt{ConvIPF} only runs for a handful of iterations (typically, 5 or below, and at most, 16), so it terminates quickly and does not need to unblock that many blocking sets.
However, each blocking set can be complex: \texttt{BLOCKING-SET} often finds blocking sets containing hundreds of rows, demonstrating the need for our efficient algorithm to find a blocking set, compared to the simple solution of iterating through row subsets.
Third, we find that the choice of objective in \texttt{ConvIPF} makes a difference:  choosing to minimize the number of edges (with largest $p_i$) always results in the fewest number of edges added and choosing to minimize the change in $\lambda_1$ always results in the smallest change in $\lambda_1$, by an order of magnitude compared to the other \texttt{ConvIPF} variants.
This latter result also shows that, even with our approximation of $\lambda_1$ (Appendix \ref{sec:app-modify-x}), the algorithm is still effective.
Finally, we find that within the objective of minimizing number of edges added, taking the row with the largest $p_i$ is consistently better than smallest $p_i$ for minimizing both the overall number of edges added and the number of \texttt{ConvIPF} iterations.
We hypothesize that this is because, within a blocking row set $S$, the row with the largest $p_i$ is likely contributing more to the condition violation, $\sum_{i \in S} p_i > \sum_{j \in N_X(S)} q_j$, so modifying its connections improves the likelihood that there will not be a subset of $S$ that still needs to be unblocked.
This theory reveals an open question for future work, which is analyzing the optimal order of unblocking row sets in order to minimize the number of \texttt{ConvIPF} iterations.


\subsection{Experiments with bikeshare data}
\label{sec:app-bikeshare}
In this section, we consider the case where we have \textit{ground-truth} hourly networks.
We use data from New York City's CitiBike system\footnote{We downloaded \texttt{202309-citibike-tripdata.csv} from \url{https://citibikenyc.com/system-data}.}, which contains individual trips, including each trip's start time, end time, start station, and end station.
For these experiments, we use trips from September 1 to September 30, 2023, which contains around 3.5 million trips.
If we take the union of all start stations and end stations as the set of stations, there are $m = n = 2036$ stations in total.

\begin{figure}[t]
     \centering
     \begin{subfigure}
         \centering
        \includegraphics[width=0.7\linewidth]{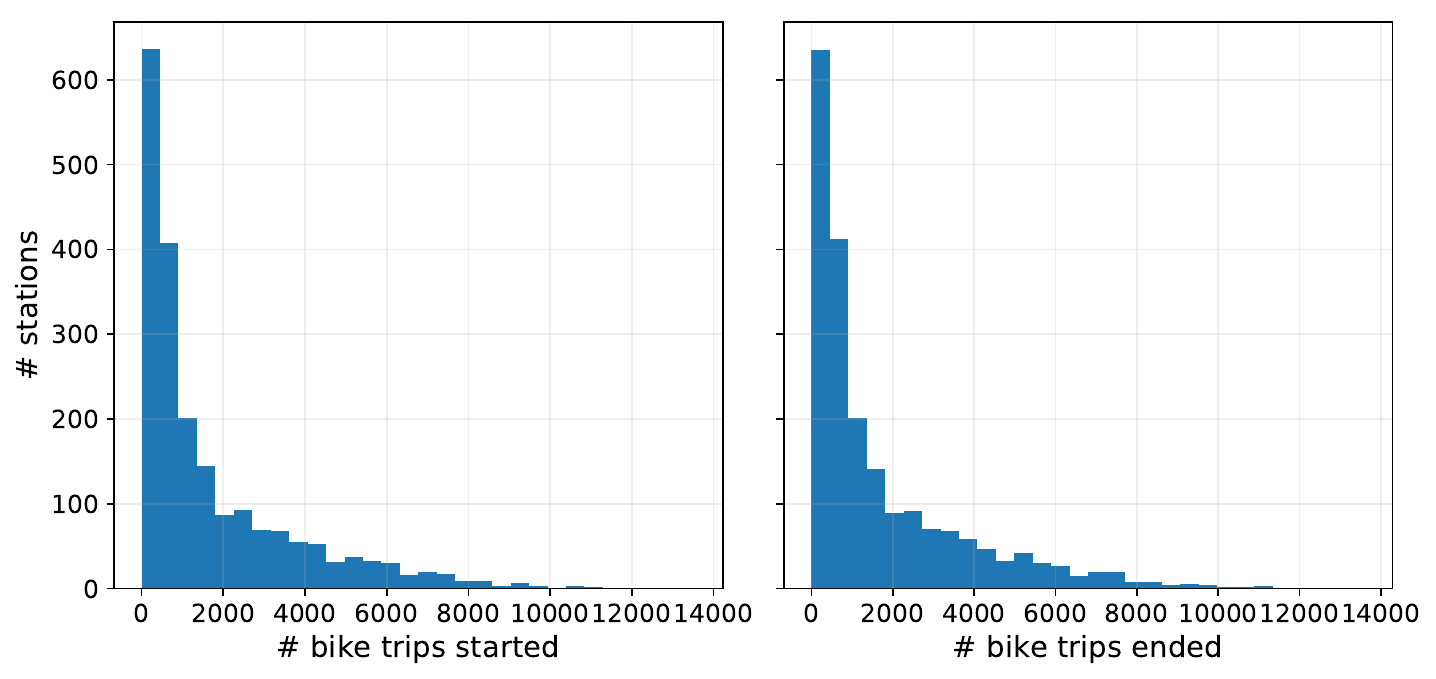}
        \caption{Distribution of bike trips over start stations (left) and end stations (right).}
        \label{fig:station-dist}
     \end{subfigure}
     \begin{subfigure}
         \centering
    \includegraphics[width=0.7\linewidth]{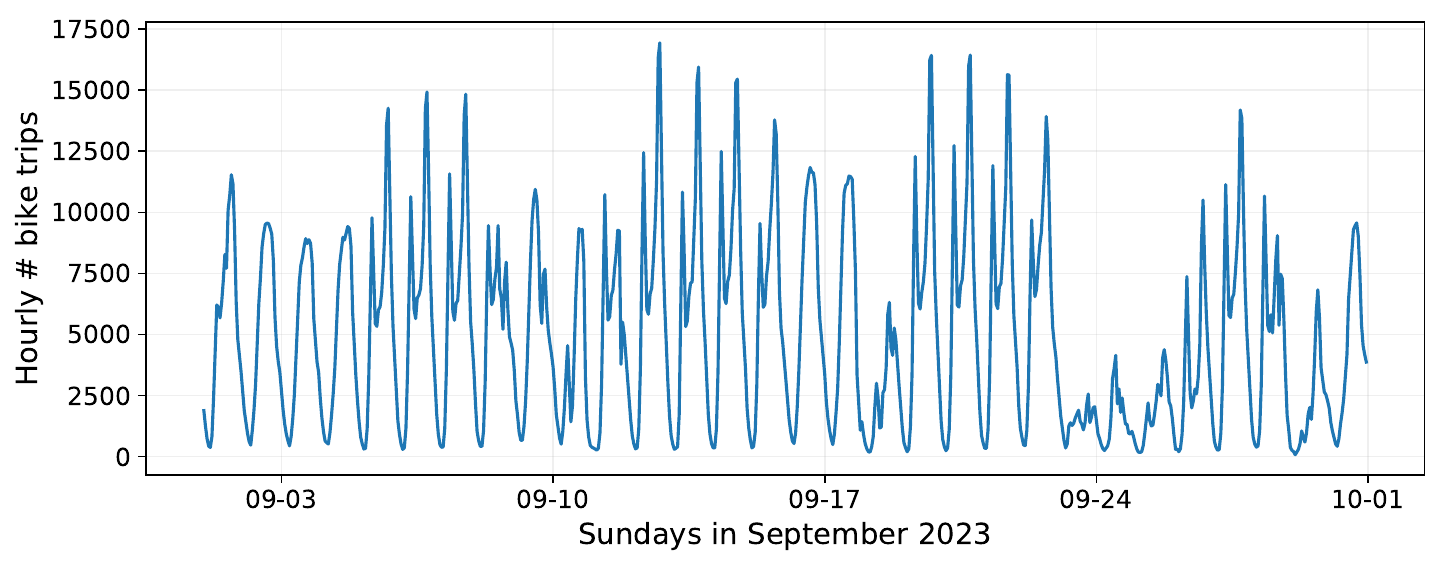}
    \caption{Distribution of bike trips over hours.}
    \label{fig:trips-over-hours}
     \end{subfigure}
        \caption{Bike trips from NYC Citibike data, from September 1 12AM to September 30 11PM, 2023.}
\end{figure}
The distribution of trips over stations is highly skewed (Figure \ref{fig:station-dist}).
To aggregate the individual trips into hourly network, we need to assign each trip to an hour.
In 79\% of trips, the trip starts and ends in the same hour, so the assignment is unambiguous.
For the remaining 21\% of trips, we assign the trip to the hour of its midway point.
So, in our hourly network $\Xtime$, each $\Xtime_{ij}$ represents the total number of trips from station $i$ to station $j$ with its midpoint falling in hour $t$.
In Figure \ref{fig:trips-over-hours}, we plot the distribution of trips over hours.
We can see clear temporal patterns, e.g., fewer trips at night, two peaks on weekdays for getting to and getting from work.
As usual, $\ptime$ and $\qtime$ are the row sums and column sums, respectively, of $\Xtime$.
For the time-aggregated network $\Xagg$, we explore different temporal aggregations (month, week, and day), so that we can experiment with IPF performance as the input network becomes increasingly time-aggregated.

\subsubsection{Evaluating IPF performance on ground-truth networks}
\label{sec:eval-ground-truth}
In these experiments, we evaluate IPF's performance at estimating the ground-truth hourly bikeshare networks, over 24 hours on September 1, 2023.
We try three versions of IPF and seven baseline methods for estimating the hourly network:
\begin{enumerate}
    \item IPF on the month-aggregated network (September 1 to September 30, 2023) and hourly marginals;
    \item IPF on the week-aggregated network (September 1 to September 7, 2023) and hourly marginals;
    \item IPF on the day-aggregated network (September 1, 2023) and hourly marginals;
    \item IPF on the distance-based matrix and hourly marginals, equivalent to the gravity model (see details below);
    \item An ablation that removes $\Xagg$ and distributes $\sum_{ij} \Xtime$ proportional to $\ptime_i \qtime_j$;
    \item An ablation that removes $\qtime$ and distributes $\ptime_i$ within each row proportional to $\Xagg_{ij} / \sum_j \Xagg_{ij}$;
    \item An ablation that removes $\ptime$ and distributes $\qtime_j$ within each column proportional to $\Xagg_{ij} / \sum_i \Xagg_{ij}$.
    \item A baseline where we use the month-aggregated network, scaled to match $\sum_{ij} \Xtime$.
    \item A baseline where we use the week-aggregated network, scaled to match $\sum_{ij} \Xtime$.
    \item A baseline where we use the day-aggregated network, scaled to match $\sum_{ij} \Xtime$.
\end{enumerate}
Each method produces an estimate, $\estXtime$, of the true hourly network, $\Xtime$
We use cosine similarity as a scale-invariant measure of the similarity between the two networks \eqref{eqn:cosine-sim}.

\paragraph{Details on gravity model.}
The gravity model \citep{zipf1946intercity} is a classic model of human mobility, inspired by Newton's law of gravity, that assumes that the amount of travel between two regions is proportional to their population sizes divided by their geographical distance.
\citet{navick1994distance} show that a ``doubly constrained'' gravity model of origin-destination matrices---where the total number of departures and arrivals are observed and must be matched---is equivalent to running IPF, where the initial matrix is replaced with a distance matrix with $D_{ij} = f(d_{ij})$, where $f(\cdot)$ is a function representing how trip propensities vary with distance.
This convenient connection between IPF and the gravity model enables us to use the latter as a baseline, by running IPF on $D_{ij}$ and hourly marginals $\ptime$ and $\qtime$.

To compute distances between bike stations, we first estimate the latitude/longitude of each station.
CitiBike reports the start latitude/longitude and end latitude/longitude of each trip, so for each station, we take the mean of the start latitude/longitude where the station is start station and end latitude/longitude where the station is the end station.
Then we compute distance between all pairs of stations using Euclidean distance, which is a reasonable proxy for short distances (all trips are within New York City).
To prevent $d_{ij} = 0$ for trips from and to the same station (which would force $f(d_{ij})$ to be 0), we replace those distances with $\epsilon = \min(\{d_{ij} | d_{ij} > 0\}) / 2$; we consider these trips still meaningful, since a rider may want to pick up and drop off their bike in the same place (e.g., if they live near that station).

Then, following \citet{navick1994distance}, we define $f(\cdot)$ as
\begin{align}
    f(d_{ij}) = d_{ij}^\alpha \exp(-d_{ij}\beta), \label{eqn:distance-func}
\end{align}
where $\alpha$ and $\beta$ are learnable parameters.
First, we estimate the empirical distribution by dividing station pairs into distance intervals (each one of size 0.001) and calculating the mean number of bike trips in the month-aggregated network $\Xagg$ for that interval.
Then, we fit \eqref{eqn:distance-func} to this empirical distribution, obtaining $\hat{\alpha} = -0.5863$ and $\hat{\beta} = 38.52$. The much larger magnitude of $\hat \beta$ is interesting, suggesting the distribution is much closer to an exponential distribution than a power distribution.
\begin{figure}
    \centering
    \includegraphics[width=0.7\linewidth]{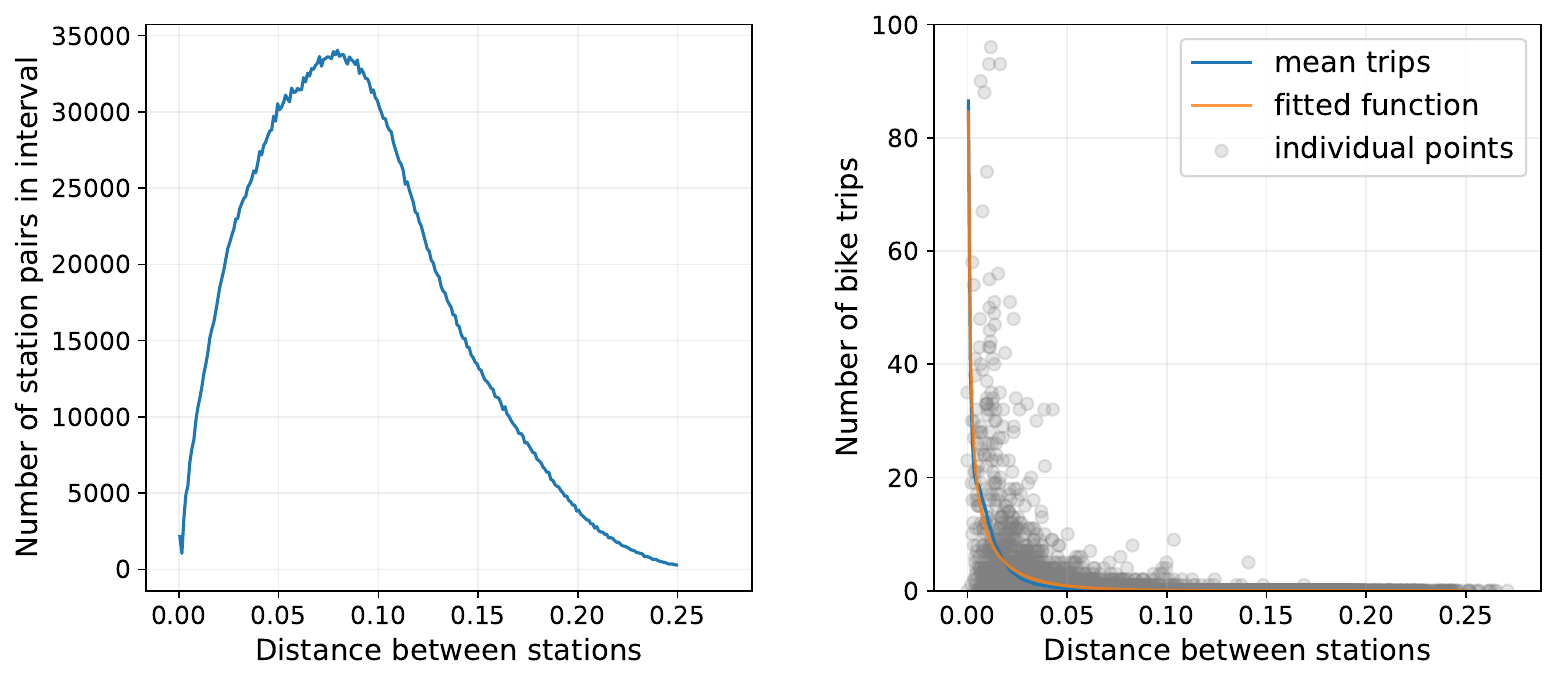}
    \caption{Number of station pairs (left) and number of bike trips (right) over }
    \label{fig:dist-vs-trips}
\end{figure}
In Figure \ref{fig:dist-vs-trips} (right), we plot the empirical distribution and our fitted function, along with a sample of individual data points (i.e., individual station pairs).
Our fitted function matches the empirical distribution well and, as expected, the number of bike trips decreases with distance.
However, from the individual data points, we can see that there is also substantial variance in the number of bike trips over station pairs of the same distance, revealing limitations of the gravity model.

\paragraph{Results (Table \ref{tab:ipf-bikeshare}).}
\begin{table}[]
    \centering
    \begin{tabular}{c|c|c|c|c}
        IPF (month) & IPF (week) & IPF (day) & Gravity & No $\Xagg$ \\
        \hline
        0.363 & 0.389 & 0.496 & 0.204 & 0.116 \\
        \hline
        \hline
        No $\ptime$ & No $\qtime$ & Scale month & Scale week & Scale day \\
        \hline
        0.278 & 0.275 & 0.201 & 0.215 & 0.278
    \end{tabular}
    \caption{Mean cosine similarity between $\Xtime$ and $\estXtime$, over 24 hours on September 1, 2024, for ten different methods.}
    \label{tab:ipf-bikeshare}
\end{table}
In our main text (Figure \ref{fig:ipf-bikeshare}), we visualized results from the first seven methods listed above.
First, we find that IPF strongly outperforms the gravity model, with a $78\%$ improvement in cosine similarity even when using the month-aggregated network, demonstrating the value of using a time-aggregated network over a distance-based model.
Second, IPF also outperforms the ablation baselines: when $\Xagg$ is the month-aggregated network, IPF outperforms the ablation without $\Xagg$ by $214\%$ and the ablations without $\ptime$ or $\qtime$ by around $31\%$, showing the value of using all three pieces of information.
Notably, removing $\Xagg$ is much more detrimental to performance than removing $\ptime$ or $\qtime$, revealing which piece of information is more informative.
Third, finer temporal granularity in the time-aggregated network improves IPF's performance, but the relative improvement is much larger from week to day ($+27\%$) than from month to week ($+7\%$).
This suggests that bike trips vary more at a daily scale within the week (e.g., weekday vs. weekend) than a weekly scale within the month.
However, the improvement in IPF performance at the daily level comes with a cost in convergence time, which we discuss more in Section \ref{sec:bikeshare-convergence}.
\begin{figure}[t]
    \centering
    \includegraphics[width=0.5\linewidth]{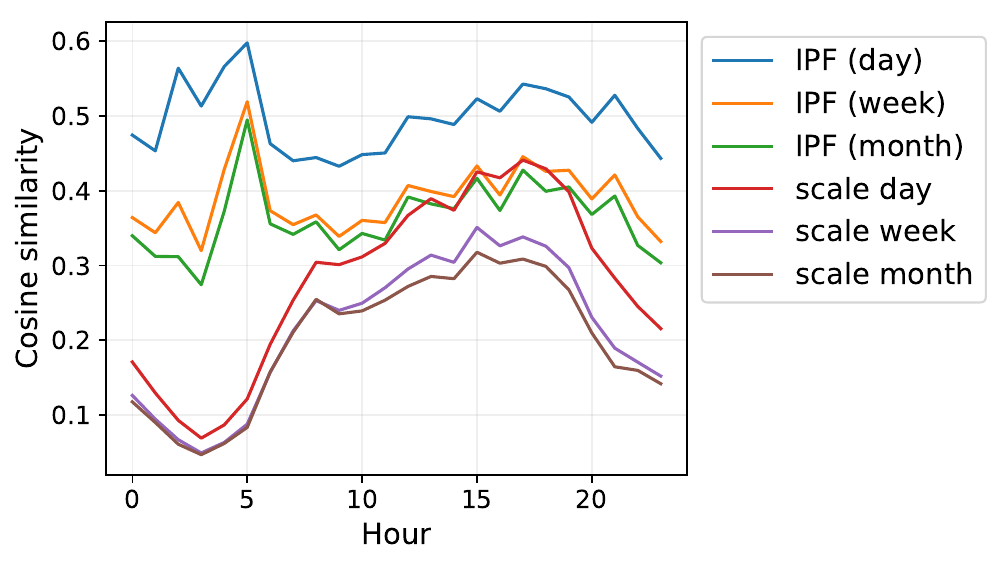}
    \caption{Cosine similarity between ground-truth hourly networks $\Xtime$ from bikeshare data and inferred networks $\estXtime$ from IPF and scaling baselines.}
    \label{fig:bikeshare-timeagg}
\end{figure}
In Figure \ref{fig:bikeshare-timeagg}, we provide extended results on the scaling baselines.
We find that, at all levels of time aggregation, running IPF on the time-aggregated network and hourly marginals greatly outperforms scaling the time-aggregated network to the hourly total.
We also find that the day-aggregated network is much more similar to the hourly network than the week-aggregated or month-aggregated networks, further supporting the theory above that bike trips vary more at a daily than a weekly scale. 
Interestingly, IPF still outperforms the day-aggregated network, even when it only has access to the \textit{month-aggregated network}, with a $31\%$ improvement in cosine similarity.
This finding reveals that, while the day-aggregated network is closer to the hourly network than the month-aggregated network, the relative information gain from the hourly marginals is still higher.
This finding may have interesting implications for data providers, when deciding how to aggregate their data.

\subsubsection{Convergence behavior}
\label{sec:bikeshare-convergence}
We observe a tradeoff between accuracy and convergence time.
Using the day-aggregated network for IPF consistently outperforms the week-aggregated or month-aggregated network at recovering the true hourly network (Table \ref{tab:ipf-bikeshare}), but running IPF on the day-aggregated network takes substantially more iterations than using the week-aggregated network or month-aggregated network.
Over the 24 hours we tested, the mean and median number of iterations when using the day-aggregated network were 7643.29 and 10000, respectively, while they were 920.79 and 419 for the week-aggregated network and 537.83 and 372 for the month-aggregated network.\footnote{These are actually underestimates, since we used a maximum of 10,000 iterations. Using the month-aggregated network never reached this maximum, week-aggregated reached it in one hour out of 24, and day-aggregated reached it in most hours.}
This is not surprising, since the day-aggregated network is far sparser (2.04\% nonzero) than the week-aggregated network (7.05\% nonzero) or month-aggregated network (13.72\% nonzero), and we know that IPF iterations increase with sparsity, which we also saw with synthetic data (Figure \ref{fig:ipf-sparsity}, left).

We can also evaluate IPF's performance over iterations, both in terms of its error on the target marginals as well as the cosine similarity between its estimated hourly network and the true hourly network.
We test IPF on the month-aggregated network and hourly marginals, for four hours on September 1, 2023 (12 AM, 6 AM, 12 PM, and 6 PM).
\begin{figure}[t]
    \centering
    \includegraphics[width=0.9\linewidth]{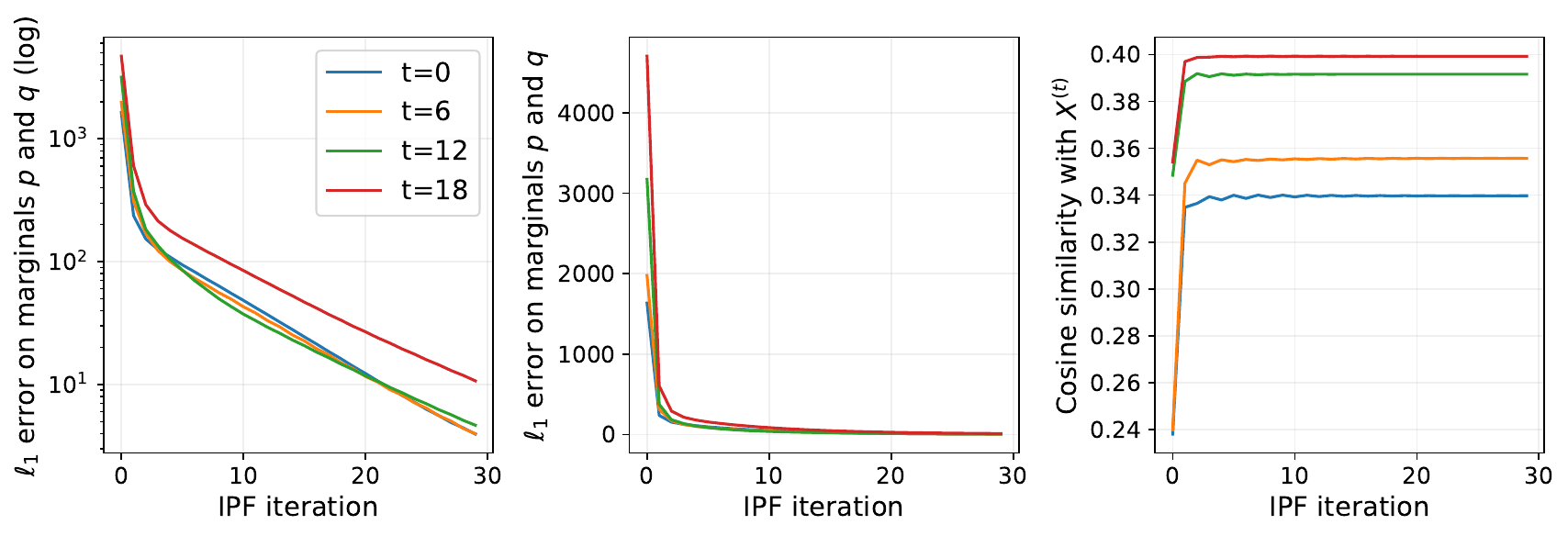}
    \caption{IPF performance over iterations: $\ell_1$ error on marginals $\ptime$ and $\qtime$ (with and without log scaling, left and middle); cosine similarity between $\Xtime$ and IPF's estimate $\estXtime$ (right).}
    \label{fig:bikeshare-over-iters}
\end{figure}
First, we observe that the $\ell_1$ error on the marginals drops exponentially, but exhibits two different rates, one larger for the first few iterations then a slower rate for the remaining iterations (Figure \ref{fig:bikeshare-over-iters}, left and middle).
This is similar to what we found on the SafeGraph mobility data, where the convergence rate exhibited a one-time bend (Figure \ref{fig:ipf-l1-marginals}).
We show the $\ell_1$ error with and without log scaling since log scaling helps to see the change in convergence rate but without log scaling emphasizes that most of the $\ell_1$ error is reduced in the first few iterations.
Second, when we consider the cosine similarity between the true network and IPF estimated network, we find that \textit{all} of the improvement occurs in the first few iterations, and after that, the similarity does not improve (Figure \ref{fig:bikeshare-over-iters}, right). 
Both of these analyses show that the vast majority of IPF's estimation capability is reached in the first few iterations, and running the next hundreds (or even thousands) of iterations will help IPF converge, but will not effectively improve the estimated network.
This is a useful finding if efficiency is desired in network estimation; for example, if IPF is used in real-time to estimate time-varying networks, e.g., for transportation planning.

\subsubsection{Examining model assumptions on bikeshare data}
\label{sec:test-assumptions}
In this final set of experiments, we leverage our opportunity with ground-truth hourly networks to test how reasonable the assumptions of the biproportional Poisson model \eqref{eqn:model} are.

\paragraph{Stationarity assumption.}
In Appendix \ref{sec:joint}, we discussed how our model, which solves the network inference problem in a decoupled fashion (separately estimating parameters per $t$), leaves out a potential piece of information, which is that $\Xagg = \sum_{t=1}^T \Xtime$, for some large $T$.
We showed that the joint problem, which incorporates the constraint that the inferred $\Xtime$'s sum to $\Xagg$, reduces the decoupled problem without the constraint when the following stationarity assumption is true:
\begin{align}
    \sum_{t=1}^T e^{u_{it}-v_{jt}} \approx c,
\end{align}
for some constant $c$, for all $i,j$ where $\Xagg > 0$.
We are not able to perfectly verify this on our data, since we do not know the true parameters $u$ or $v$ (or whether the data comes from this model at all), but we can check the estimates for $e^{u_i}$  and $e^{-v_j}$, which correspond to $d^0_i$ and $d^1_j$ in IPF, respectively.
So, we want to see whether
\begin{align}
    \sum_{t=1}^{T} d^0_{i}(t) d^1_{j}(t) \approx c.
\end{align}
To check this, we fit IPF on the month-aggregated bikeshare network for all 720 hours in the month of September 2023.
Then, for all $i,j$ where $\bar{X}_{ij} > 0$, we compute $d^0_i(t) d^1_j(t)$ over all hours in the month.
It turns out that the sums are quite close to each other, and close to 1, where the 5th to 95th percentile ranges from 0.88 to 1.19.
These results motivate our stationarity assumption, which allows us to decouple the problem.

\paragraph{Overdispersion.}
A key assumption of our model is that the values in the hourly network follow Poisson distributions.
A common issue with Poisson distributions is overdispersion, since we often find that real-world data has greater variance than the Poisson (which assumes variance is equal to the mean).
To test for overdispersion, we investigate the Pearson residuals of our fitted model, which are $(y_i - \exp(\mathbf{x}_i \beta))/ \sqrt{\exp(\mathbf{x}_i \beta)}$ for a generic Poisson regression model, which is equivalent to 
\begin{align}
    r_{ij} = \frac{\Xtime_{ij} - d^0_i \Xagg_{ij} d^1_j}{\sqrt{d^0_i \Xagg_{ij} d^1_j}} \label{eqn:pearson-residual}
\end{align}
in our IPF setting.
Recall that our set of Poisson observations $\mathcal{D}$ consist of all $(i,j)$ where $\Xagg_{ij} > 0$, $\ptime_i > 0$, and $\qtime_j > 0$.
Then, we can estimate the dispersion parameter $\hat{\phi}$, which is the sum of the squared Pearson residuals divided by the degrees of freedom (number of observations minus number of model parameters):
\begin{align}
    \hat{\phi} = \frac{\sum_{(i,j) \in \mathcal{D}} r_{ij}^2}{|\mathcal{D}|-m-n}.\label{eqn:dispersion-parameter}
\end{align}

In the absence of overdispersion, the dispersion parameter should be close to 1; otherwise, it will be greater than 1.
\begin{table}[]
    \centering
    \begin{tabular}{c|c|c|c}
        Time-aggregation & Mean $\hat{\phi}$ & 25th percentile $\hat{\phi}$ & 75th percentile $\hat{\phi}$ \\
        \hline
        Month & 1.093 & 1.027 & 1.190 \\
        Week & 1.086 & 1.046 & 1.159 \\
        Day & 1.066 & 1.032 & 1.099 \\
    \end{tabular}
    \caption{Distribution of dispersion parameter $\hat{\phi}$ when running IPF over 24 hours on September 1, 2023, and different time-aggregated networks.}
    \label{tab:dispersion-param}
\end{table}
When we test the IPF estimates on the bikeshare data (24 hours on September 1, 2023), we find that the dispersion parameter is actually quite close to 1 for all three levels of time aggregations (Table \ref{tab:dispersion-param}).
Day-aggregated is the closest to 1, with a mean of 1.066, but week-aggregated and month-aggregated are still close, with means of 1.086 and 1.093, respectively.
So, the data appears to be very slightly overdispersed, but not far from the Poisson assumptions.

\paragraph{Independence assumptions and model fit.}
Another assumption of our model is that the values in the hourly network are independently sampled from their respective Poisson distributions.
It is difficult for us to test all possible violations of this assumption, but we can test one natural dimension of correlation, which is spatial. 
For spatial relationships, our model should already capture spatial correlations between similar start stations (e.g., more traffic in certain neighborhoods at different times of day) and spatial correlations between similar end stations, since it learns a parameter for each row (i.e., start station) and column (i.e., end station).
The question is whether there are additional interaction effects between start and end stations.
Since we are interested in the interaction, we use the \textit{distance} between the start and end stations as a dimension of interest.
We described in Section \ref{sec:eval-ground-truth} how we compute distances between bike stations, in order to fit the gravity model, and visualized the relationship between distance and bike trips (Figure \ref{fig:dist-vs-trips}).

We compare the Pearson residual $r_{ij}$ to the distance between stations $i$ and $j$ for all station pairs $i,j$ in $\mathcal{D}$.
Over these pairs, we find that there \textit{is} a relationship between distance and residuals.
\begin{figure}[t]
    \centering
    \includegraphics[width=0.7\linewidth]{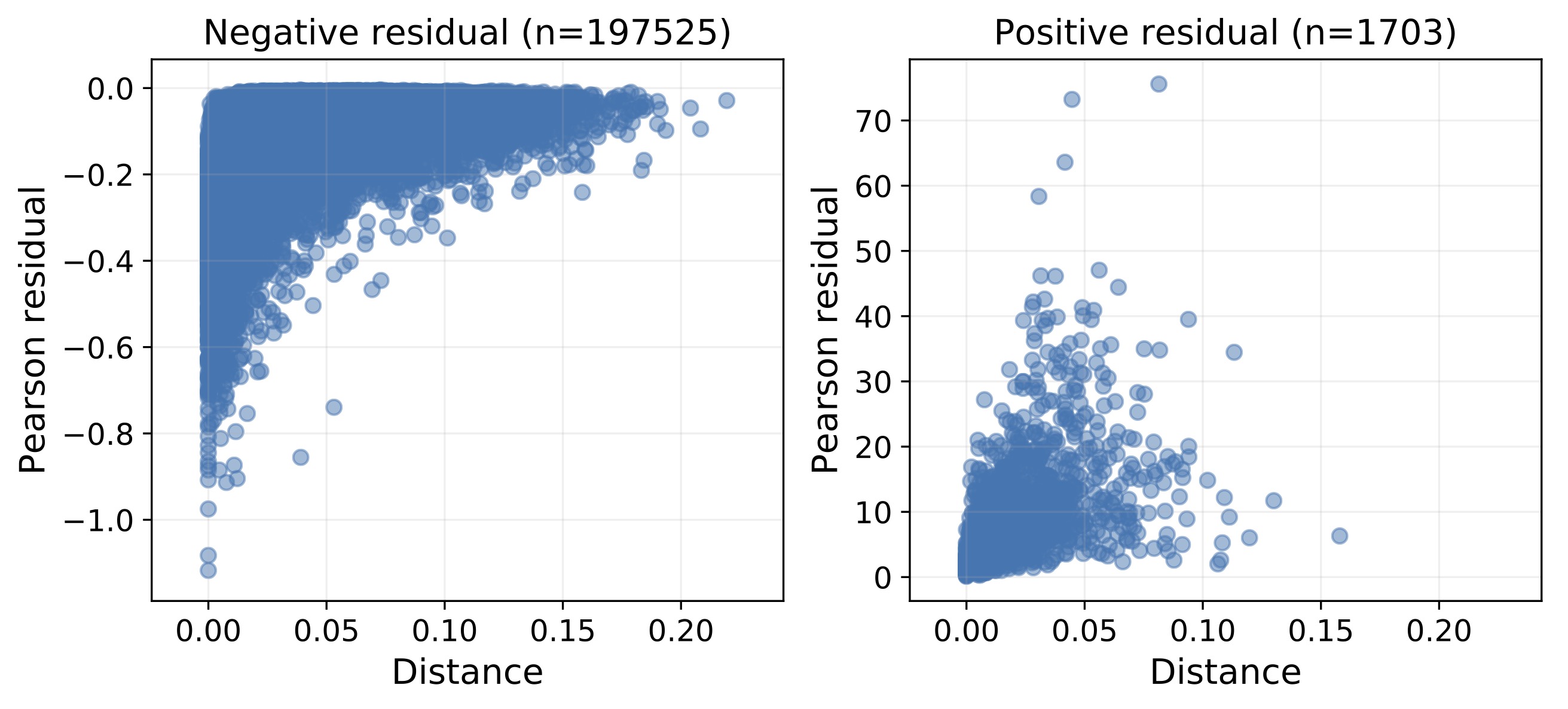}
    \caption{Distance between stations vs. Pearson residual over station pairs $i,j$. Results are from IPF estimates on month-aggregated network and hourly marginals for 12 AM on September 1, 2023.}
    \label{fig:distance-vs-residual}
\end{figure}
In Figure \ref{fig:distance-vs-residual}, we show a representative example from running IPF on the month-aggregated network and hourly marginals for 12 AM on September 1, 2023.
To make the visualization clearer, we split the data points into two groups: negative residuals and positive residuals.
We can see that the vast majority (over 99\%) of data points have negative residuals, which is expected since the true values, $\Xtime_{ij}$, are equal to zero most of the time, so any positive expected value will result in a negative residual.
Within both the negative and positive groups, we observe a positive relationship between distances and residuals.
This positive relationship is consistent over hours and time aggregations, although the correlation is weaker for smaller time aggregations.

The Pearson residuals \eqref{eqn:pearson-residual} also allow us to test the Poisson model's goodness-of-fit.
The numerator in \eqref{eqn:dispersion-parameter} is known as the Pearson statistic, which follows a chi-square distribution with $n-k-1$ degrees of freedom (where $n$ is the number of observations and $k$ is the number of model parameters).
We find that the goodness-of-fit test is \textit{rejected} for our model, meaning there is a statistically significant lack of fit.
This is not entirely surprising, since our model is simple due to the very limited information we have about the network, but it is worth keeping in mind as a caveat when using IPF to infer networks.
The lack of fit, along with the violation of independence assumptions, motivate the development of future methods that incorporate additional available information, e.g., interaction terms between features such as distance, while still only relying on the time-varying marginals and time-aggregated network to estimate the time-varying network. 

\end{document}